\newtheorem{theorem}{Theorem}
\newtheorem{lemma}{Lemma}
\newtheorem{corollary}{Corollary}
\newtheorem{definition}{Definition}
\newtheorem{assumption}{Assumption}
\newtheorem{remark}{Remark}
\DeclareMathOperator*{\E}{\mathbb{E}}
\begin{document}

\title{
Learning Causal States Under Partial Observability and Perturbation
}

\author{
        Na Li,~\IEEEmembership{Graduate~Student~Member,~IEEE,}
        Hangguan~Shan,~\IEEEmembership{Senior~Member,~IEEE,}
		Wei~Ni,~\IEEEmembership{Fellow,~IEEE,}
        Wenjie~Zhang,~\IEEEmembership{Senior~Member,~IEEE,}
        Xinyu~Li,~\IEEEmembership{Member,~IEEE,}
		and Yamin~Wang
        
        
        
        
        
        }

\markboth{Journal of \LaTeX\ Class Files,~Vol.~14, No.~8, August~2021}%
{Shell \MakeLowercase{\textit{et al.}}: A Sample Article Using IEEEtran.cls for IEEE Journals}


\maketitle

\begin{abstract}
    A critical challenge for reinforcement learning (RL) is making decisions based on incomplete and noisy observations, especially in perturbed and partially observable Markov decision processes (P$^2$OMDPs). Existing methods fail to mitigate perturbations while addressing partial observability. We propose \textit{Causal State Representation under Asynchronous Diffusion Model (CaDiff)}, a framework that enhances any RL algorithm by uncovering the underlying causal structure of P$^2$OMDPs. This is achieved by incorporating a novel asynchronous diffusion model (ADM) and a new bisimulation metric. ADM enables forward and reverse processes with different numbers of steps, thus interpreting the perturbation of P$^2$OMDP as part of the noise suppressed through diffusion. The bisimulation metric quantifies the similarity between partially observable environments and their causal counterparts. Moreover, we establish the theoretical guarantee of CaDiff by deriving an upper bound for the value function approximation errors between perturbed observations and denoised causal states, reflecting a principled trade-off between approximation errors of reward and transition-model. Experiments on Roboschool tasks show that CaDiff enhances returns by at least 14.18\% compared to baselines. CaDiff is the first framework that approximates causal states using diffusion models with both theoretical rigor and practicality.
\end{abstract}

\begin{IEEEkeywords}
Reinforcement learning, perturbed partially observable Markov decision process (P$^2$OMDP), causal inference, state representation, diffusion model.
\end{IEEEkeywords}

\section{Introduction}
\IEEEPARstart{P}{artially} observable Markov decision processes (POMDPs) are relevant to a wide range of real-world applications, such as robotics, autonomous driving, and finance~\cite{RL_2}. 
Perturbed POMDPs (P$^2$OMDPs) have been generally challenging for deep reinforcement learning (DRL), where deep RL (DRL) agents must make optimal decisions under both incomplete state observations and additional perturbations caused by sensor noise, environmental disruptions, or discrepancies between training datasets and actual conditions.
This noise complicates state inference by masking essential features or introducing false signals.


Causal state extraction has offered an effective means to capture the essential relationships between past observations and future returns, filter out irrelevant data, and contribute to improved efficiency.
Existing studies have been primarily based on Markov decision processes (MDPs)~\cite{estruch22a}, with little effort on POMDPs. Let alone P$^2$OMDPs, where noise can comprise causal state extraction from the partially observable states. 
While it is possible to denoise the perturbed partially observed states, balancing between denoising and preserving the original causal structure is vital.


\textit{Diffusion models (DMs)}, as the state-of-the-art generative models, have offered effective denoising capability.
DMs \cite{DM_2} are adept at removing noise while preserving essential data features, through an iterative process that transforms noisy samples into high-quality, real samples. 
In contrast, traditional generative model-based approaches, which are integrated into POMDPs through algorithms, including deep variational RL~\cite{DVRL} and structured sequential variational auto-encoders~\cite{ASR_base}, often generate samples by learning latent data representations, as opposed to tackling noise.
DMs have gained significant attention in decision-making tasks, utilized as trajectory generators or tools for state representation~\cite{DM_offline_2, DM_offline_1, DM_offline_3},
but have not accounted for causality.

\textit{Bisimulation} offers a rigorous mathematical framework for evaluating state equivalence based on outcomes.  
Research on causal state representation (CSR) has been developed to extract abstract features from perturbed observations. Leveraging these abstract representations (instead of raw data) has enhanced decision-making efficiency in both MDPs~\cite{Repre} and POMDPs~\cite{CSR_base}.  
{\color{black}Related approaches} include bisimulation-based methods \cite{bisimulation}, Kalman filters \cite{Kalman}, ordinary differential equation-based recurrent models \cite{ODE_GRU}, world models \cite{world}, and studies establishing a connection between predictive state representations and bisimulation via causal states \cite{CSR_base}. 
However, no existing methods have captured the impact of perturbations.

\subsection{Contribution}
This paper enhances the decision-making of DRL for \text{P}$^2$\text{OMDPs} by extracting and denoising causal states. A novel approach, \textit{Causal State Representation under Asynchronous Diffusion Model (CaDiff)}, is proposed to mitigate perturbations and uncover causality within denoised and partially observable states of P$^2$OMDPs. CaDiff offers a generic framework for any RL algorithm, 
with the following contributions:

\textbf{Algorithm Design}: 
Tailored for P$^2$OMDPs, CaDiff enhances any DRL operating with noisy and incomplete observations and noisy rewards, by designing \textit{a novel asynchronous diffusion model (ADM)} and \textit{a new bisimulation metric}. 
\begin{itemize}
    \item 
\underline{\textit{ADM}} enables forward and reverse processes with different numbers of steps in an asynchronous setting.
Less noise is injected via fewer steps in the forward process than the noise suppressed in the reverse process, suppressing the noisy perturbation of P$^2$OMDPs.
The difference in the number of steps is regulated by noise intensity, effectively denoising each dimension of the observation features and preserving inherent causality.
    \item 
\underline{\textit{The new bisimulation metric}} quantifies the similarity between a partially observable environment and its causal state. This metric, grounded in the consistency of input states and rewards, measures the alignment between underlying distributions of transition dynamics and rewards.    
By incorporating causal consistency constraints, the denoised causal state captures the underlying causal structure for better decision-making.
\end{itemize}
CaDiff combines causality assessment with denoising. Guided by causality measured by the new bisimulation metric, ADM effectively denoises the perturbed partial observations of POMDPs. 
ADM is adept at handling disturbances across various scales and serves as a standalone denoising module.


\textbf{Theoretical Guarantee}:
We establish a statistical theoretical guarantee for CaDiff with the upper bound of the value function approximation (VFA) error derived between perturbed observations and corresponding denoised causal states in P$^2$OMDPs. 
Based on the analysis incorporating the new bisimulation metric under arbitrary distribution approximation errors, the distribution estimation error of ADM is quantified using the Wasserstein distance and analyzed through initialization, score estimation, and discretization errors.
%
We also establish a sample complexity bound of CaDiff w.r.t. the smoothness of the data distribution and the dimensions of the observation and action spaces. 
Compared to standard RL algorithms, the additional computational overhead by CaDiff is marginal and polynomial in the logarithm of the observation and action space dimensions.


\textbf{Extensive Simulation}: CaDiff is tested in six P$^2$OMDP environments with the soft actor-critic (SAC) RL algorithm. 
CaDiff outperforms all baselines by at least 14.18\% in return. Despite more parameters, CaDiff delivers superior early-stage training performance in five of the six environments, while ultimately achieving the highest final return across all six.
The ablation studies show that state denoising is much more effective than reward denoising due to typically higher observation dimensions.
CaDiff incurs a mild cost of less than 7 GFLOPs.


\subsection{Related Work}
\subsubsection{Causal State Representation}
Several studies have focused on deriving CSR for decision-making generalization under POMDPs. For instance, the authors of \cite{CSR_base} approximated causal states in POMDPs by invariant prediction. {\color{black}This concept has also been explored in \cite{Intro_other1, Intro_other2}. }
Some studies~\cite{karimi2024diffusion, komanduri2024causal} leveraged DMs for causal representation by counterfactual inference or invention.
Utilizing domain-invariant causal features, the authors of \cite{CSR_16} proposed invariant causal imitation learning to address distribution shifts. Some other works, e.g., \cite{CSR_15}, proposed ensemble representations that leverage multi-modal sensor inputs to boost generalizability for self-driving agents under uncertainty quantification. The PlanT framework \cite{CSR_6} serves as a learnable planner module grounded in object-centric representations. Moreover, the realm of RL has witnessed advancements in state representation through self-supervised learning, including hierarchical skill decomposition \cite{CSR_17}, time-contrastive learning \cite{CSR_32}, variational auto-encoder \cite{han2024dynamical}, confounding and causal inference \cite{cannizzaro2023car}, and deep bisimulation metric learning \cite{bisimulation}.
However, there is a lack of consideration of perturbation-based CSR.

\subsubsection{RL with DM}
DM is a generative model, initially for image generation \cite{DM_2}. It has been adopted in decision-making for state-based tasks, especially for perturbed states. In RL, DMs can be utilized, not only for direct decision-making~\cite{DM_offline_1, DM_offline_2, DM_offline_3}, but also for effective denoising and distribution estimation.
DMBP \cite{DM_offline_3} utilizes a DM as a denoiser rather than a generator. 
DIPO \cite{DM_online_1} utilizes a DM to address the denoising problem in model-free RL. 
The authors of \cite{DM_offline_9} presented a sharp statistical theory of distribution estimation for the conditional DM. 
However, these studies do not differentiate the noise 
used for training, hence limiting the effectiveness of denoising.


To the best of our knowledge, no existing frameworks have extracted causal states of P$^2$OMDPs, which is nontrivial and necessitates the integration of CSR with effective denoising.


\section{Preliminaries and Problem Formulation}
\subsection{Wasserstein Distance and H\"older Ball}\label{appendix:Wasserstein}
The distance between two distributions can be measured using Wasserstein distance
to measure convergence of CaDiff.

\begin{definition}[Wasserstein metric \cite{villani2008optimal}]
\label{def:wass-primal}
Let $d: X \times X \rightarrow [0, \infty)$ denote a distance function, and $\Omega$ denote the collection of all joint distributions on $X \times X$ whose marginals are $\mu$ and $\lambda$.
The Wasserstein metric is defined as
\begin{align}
\label{eq:wasserstein}
    W_p(d)(\mu, \lambda) = \left(\inf\nolimits_{\omega \in \Omega}\mathbb{E}_{(x_1, x_2) \sim \omega}[d(x_1, x_2)^p]  \right)^{1/p}.
\end{align}
\end{definition}

\smallskip

\begin{definition}[Dual formulation of Wasserstein metric \cite{villani2008optimal}]
A dual formulation of the Wasserstein metric is given by
\begin{align*}
    W_p(d)(\mu, \!\lambda) \!=\! \left(\sup\nolimits_{\zeta \oplus \psi \leq d^p}\mathbb{E}_{x_1 \sim \mu}[\zeta(x_1)] \!+\! \mathbb{E}_{x_2 \sim \lambda}[\psi(x_2)] \right)^{\!1/p}\!,
\end{align*}
where $\zeta \oplus \psi \leq d^p$ is equivalent to $\zeta(x) + \psi(y) \leq d{(x, y)}^p, ~\forall(x, y) \in X \times X$.

In the case of $p = 1$, the dual formulation simplifies to
\begin{align}
\label{eq:wass-dual}
    W_1(d)(\mu, \lambda) \!=\!\! \sup_{f \in \mathrm{Lip}_{1,d}(X)}\mathbb{E}_{x_1 \sim \mu}[f(x_1)] \!-\! \mathbb{E}_{x_2 \sim \lambda}[f(x_2)],
\end{align}
where $\mathrm{Lip}_{1,d}(X)$ denotes the set of all 1-Lipschitz functions $f: X \rightarrow \mathbb{R}$ satisfying $|f(x_1) - f(x_2)| \le d(x_1, x_2)$.
\end{definition}

\smallskip
\smallskip

There is a closed-form expression of Gaussian measures for the 2-Wasserstein metric $W_2(|\cdot|_2)$ (abbreviated as $W_2$)~\cite{olkin1982distance}:
\begin{align}
\label{eq: 2-Wasserstein}
    \!\!\!\!W_2(\mathcal{N}(\mu_i,\! \Sigma_i), \mathcal{N}(\mu_j,\! \Sigma_j)) \!\!=\!\! (\left\|{\mu_i \!-\! \mu_j}\right\|_2^2 \!+\! \left\|{\Sigma_i \!-\! \Sigma_j}\right\|_{\mathcal{F}}^2)^{1/2}\!\!.
\end{align}
Here, $\|\cdot\|_{\mathcal{F}}$ represents the Frobenius norm. If $\Sigma_i, \Sigma_j \to 0$ (i.e., both distributions collapse to point masses), the 2-Wasserstein metric recedes to the Euclidean distance between the points.
\begin{lemma}[$p$-Wasserstein Inequality \cite{villani2008optimal}]
\label{lemma:wass-lemma}
For any distributions $\mu$ and $\lambda$, $W_q(\mu, \lambda) \geq  W_p(\mu, \lambda), ~\forall q \geq p$.
\end{lemma}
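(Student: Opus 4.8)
The plan is to reduce the claim to the classical nesting of $L^p$ norms on a probability space, exploiting that every coupling $\omega \in \Omega$ is itself a probability measure on $X \times X$. First I would fix an arbitrary coupling $\omega \in \Omega$ and treat $d(x_1, x_2)$ as a nonnegative random variable under $\omega$. Since $q \geq p$ implies $q/p \geq 1$, the map $t \mapsto t^{q/p}$ is convex on $[0, \infty)$, so Jensen's inequality applied to the random variable $d(x_1,x_2)^p$ yields $\left(\mathbb{E}_\omega[d^p]\right)^{q/p} \leq \mathbb{E}_\omega[d^q]$. Raising both sides to the power $1/q$ gives the pointwise-in-$\omega$ bound $\left(\mathbb{E}_\omega[d^p]\right)^{1/p} \leq \left(\mathbb{E}_\omega[d^q]\right)^{1/q}$. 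Equivalently, one may invoke H\"older's inequality with conjugate exponents $q/p$ and $q/(q-p)$ against the constant function $1$, which is legitimate precisely because $\omega$ has unit mass.

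The second step is to pass from this per-coupling inequality to the infima defining $W_p$ and $W_q$, which requires care about the direction of the infimum. I would argue via near-optimal couplings to sidestep any question of attainment: for each $\epsilon > 0$, choose $\omega_\epsilon \in \Omega$ with $\left(\mathbb{E}_{\omega_\epsilon}[d^q]\right)^{1/q} \leq W_q(\mu, \lambda) + \epsilon$. Then, because $W_p$ is an infimum over all couplings, evaluating at $\omega_\epsilon$ and chaining with the step-one bound gives $W_p(\mu, \lambda) \leq \left(\mathbb{E}_{\omega_\epsilon}[d^p]\right)^{1/p} \leq \left(\mathbb{E}_{\omega_\epsilon}[d^q]\right)^{1/q} \leq W_q(\mu, \lambda) + \epsilon$. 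Letting $\epsilon \to 0$ yields $W_p(\mu, \lambda) \leq W_q(\mu, \lambda)$, which is exactly the claim.

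The only genuinely delicate point is this second step: a naive attempt to take the infimum on both sides of $\left(\mathbb{E}_\omega[d^p]\right)^{1/p} \leq \left(\mathbb{E}_\omega[d^q]\right)^{1/q}$ can fail, since the two infima may a priori be approached along different couplings, and the infimum does not respect the inequality in the desired direction without further argument. The near-optimal-coupling device resolves this cleanly; and if one prefers, the existence of an optimal coupling for $W_q$ (guaranteed on a Polish space $X$ with lower semicontinuous $d$ by weak compactness of $\Omega$ together with lower semicontinuity of the cost) lets the $\epsilon$ be dropped entirely. Everything else is an elementary application of convexity, so I expect no further obstacles.
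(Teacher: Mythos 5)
Your proof is correct. The paper itself offers no proof of this lemma: it is quoted directly from Villani's optimal transport text \cite{villani2008optimal} as a standing fact, so there is no in-paper argument to compare against. Your two-step argument --- the per-coupling bound $\left(\mathbb{E}_\omega[d^p]\right)^{1/p} \leq \left(\mathbb{E}_\omega[d^q]\right)^{1/q}$ via Jensen (or H\"older against the constant $1$, valid because $\omega$ has unit mass), followed by passage to the infima --- is the standard textbook proof, and it is complete; the only degenerate case, $W_q(\mu,\lambda)=\infty$, is trivial.

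One small correction to your closing caveat: taking infima on both sides of the per-coupling inequality does not, in fact, fail. The infimum is monotone under pointwise inequality: if $f(\omega) \leq g(\omega)$ for every $\omega \in \Omega$, then $\inf_{\omega'} f(\omega') \leq f(\omega) \leq g(\omega)$ for every $\omega$, and taking the infimum over $\omega$ on the right yields $\inf f \leq \inf g$ directly. Your $\epsilon$-near-optimal-coupling device is exactly this monotonicity argument written out, so nothing in your proof is affected --- the step is simply less delicate than you suggest. (What genuinely would fail is the reverse direction, e.g.\ trying to conclude anything about $W_q$ from an upper bound on a $W_p$-optimal coupling; your instinct to anchor the argument at a near-minimizer of the \emph{larger} exponent is the right one, it just also follows from plain monotonicity of $\inf$.)
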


\begin{lemma}[Bounds on Wasserstein distance \cite{santambrogio2015optimal}]
\label{lemma:wass-diam}
For any distributions $\mu$ and $\lambda$ on $X$,
\begin{align}
\!\!\!W_1(\mu, \lambda) \!\leq\! W_p(\mu, \lambda) \!\leq\! \mathrm{diam}(X; d)^{\frac{p-1}{p}}W_1(\mu, \lambda)^{\frac{1}{p}}, \forall p \!\geq\! 1,\!\!
\end{align}
where $\mathrm{diam}(X; d)$ is the diameter of the metric space $(X; d)$.
\end{lemma}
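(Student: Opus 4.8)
The plan is to treat the two inequalities separately. The left inequality $W_1(\mu,\lambda) \le W_p(\mu,\lambda)$ is nothing more than Lemma~\ref{lemma:wass-lemma} (the $p$-Wasserstein inequality) applied with $q=p$ and the smaller exponent equal to $1$; since $p \ge 1$, monotonicity of the Wasserstein metrics in the order index gives the claim with no further work. So the substance of the argument lies entirely in the right inequality.

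For the right inequality I would start from the primal definition~\eqref{eq:wasserstein}, namely $W_p(\mu,\lambda)^p = \inf_{\omega \in \Omega} \mathbb{E}_{(x_1,x_2)\sim\omega}[d(x_1,x_2)^p]$. The key elementary observation is the pointwise bound $d(x_1,x_2)^p \le \mathrm{diam}(X;d)^{p-1}\, d(x_1,x_2)$, valid for every $(x_1,x_2) \in X\times X$: one factors $d^p = d^{p-1}\cdot d$ and bounds the first factor using $d(x_1,x_2) \le \mathrm{diam}(X;d)$ together with $p-1 \ge 0$. Integrating this inequality against any fixed coupling $\omega$ then yields $\mathbb{E}_\omega[d^p] \le \mathrm{diam}(X;d)^{p-1}\, \mathbb{E}_\omega[d]$.

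To finish, I would evaluate the right-hand side at the coupling that is (near-)optimal for $W_1$, so that $\mathbb{E}_\omega[d]$ equals (or approaches) $W_1(\mu,\lambda)$. Because $W_p^p$ is an infimum over all couplings, it is bounded above by $\mathbb{E}_\omega[d^p]$ for this particular choice, giving $W_p(\mu,\lambda)^p \le \mathrm{diam}(X;d)^{p-1}\, W_1(\mu,\lambda)$. Taking the $1/p$-th root and distributing the exponent produces exactly $W_p \le \mathrm{diam}(X;d)^{(p-1)/p}\, W_1^{1/p}$.

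The only delicate point — and the one I would flag as the main (mild) obstacle — is this coupling interchange: one must estimate the $W_p$ infimum at the coupling optimal for $W_1$, not at the one optimal for $W_p$. If an optimal $W_1$ coupling exists (as it does on Polish spaces, by the standard compactness of the set of couplings with prescribed marginals), the step is immediate; otherwise one takes a sequence $\omega_n$ with $\mathbb{E}_{\omega_n}[d] \to W_1(\mu,\lambda)$ and passes to the limit, using that $\mathrm{diam}(X;d)^{p-1}$ is a fixed constant. Everything else is routine.
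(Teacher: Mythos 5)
Your proof is correct. The paper does not actually prove Lemma~\ref{lemma:wass-diam} --- it is imported without proof from \cite{santambrogio2015optimal} --- and your argument (Lemma~\ref{lemma:wass-lemma} for the left inequality; the pointwise bound $d(x_1,x_2)^p \le \mathrm{diam}(X;d)^{p-1}\, d(x_1,x_2)$ integrated against a near-optimal $W_1$ coupling, then taking $p$-th roots, for the right) is precisely the standard argument in that reference, with the coupling-existence/minimizing-sequence issue handled correctly.
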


To estimate the distribution of dynamic transitions {\color{black}for P$^2$OMDPs}, we define H\"older norm and H\"older ball, as follows:
\begin{definition}[H\"older norm and H\"older ball]\label{appendix:Holder_norm}
Let $b = m+\gamma > 0$ represent the smoothness degree, with $m = \lfloor b \rfloor$ an integer and $\gamma \in [0,1)$.
For function $f:\mathbb{R}^{w_x} \to \mathbb{R}$ and multi-index $\mathbf{s}$, we define the H\"older norm as
\begin{align*}
   \left|f\right|_{\mathcal{H}^b(\mathbb{R}^{w_x})} \!\coloneqq\!\!\!\!\max_{\mathbf{s}: \!\left\|\mathbf{s}\right\|\!_1 < m} \sup_{\mathbf{x}}\!|\partial^{\mathbf{s}}\! f(\mathbf{x})|  \!\!+\!\!\!  \max_{\mathbf{s}:\! \left\|\mathbf{s}\right\|\!_1 = s} \!\sup_{\mathbf{x}\neq \mathbf{z}} \frac{ \left|\partial^{\mathbf{s}} \!f(\mathbf{x}) \!\!-\!\! \partial^{\mathbf{s}} \!f(\mathbf{z})\right| }{ \left\|\mathbf{x} \!-\! \mathbf{z}\right\|_{\infty}^{ \gamma }}\!.
\end{align*}

A function $f$ is said to be $b$-H\"older if and only if $|f|_{\mathcal{H}^{b}(\mathbb{R}^{w_x})} < \infty$.
A H\"{o}lder ball of radius $B > 0$ is defined as
\begin{align*}
    \mathcal{H}^{b}(\mathbb{R}^{w_x}, B) = \{f: \mathbb{R}^{w_x} \rightarrow \mathbb{R} \mid \left\|f\right\|_{\mathcal{H}^b(\mathbb{R}^{w_x})} < B\}.
\end{align*}
\end{definition}


\begin{figure}[t]
    \centering
    \includegraphics[width=0.9\linewidth]{./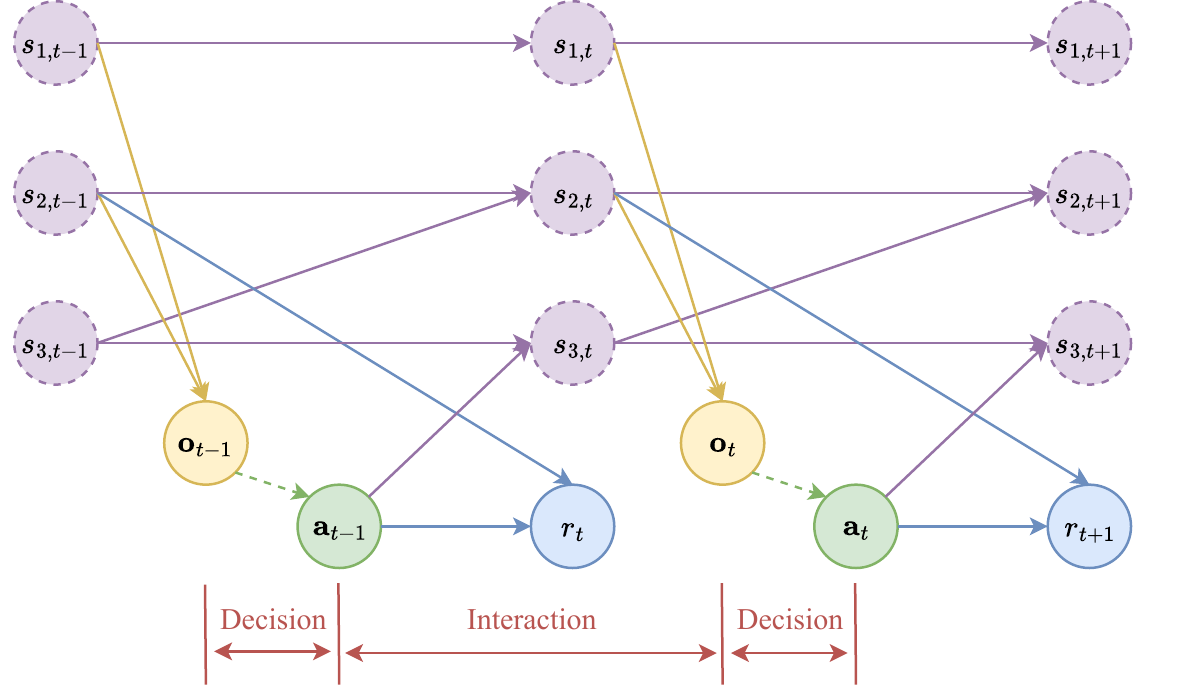}
    \vspace{-0.3cm}
    \caption{System model: Solid-line and dashed-line circles denote observed and unobserved variables, respectively; solid and dashed lines represent causality and decision relationships, respectively.}
    \vspace{-0.4cm}
    \label{fig:system}
\end{figure}
\subsection{RL for Perturbed POMDP}
Environments are modeled as $\mathcal{M}=\left(\mathcal{S}, \mathcal{A}, \mathcal{O}, \gamma, F, G, H\right)$ in POMDPs, where $\mathcal{S}$, $\mathcal{A}$, and $\mathcal{O}$ are the state, action, and observation spaces, respectively; $\gamma$ is the discount factor; $F$, $G$, and $H$ are the observation, reward, and transition functions, respectively.
Consider a sequence of samples $\left\{\langle \mathbf{o}_t, \mathbf{a}_t, r_t \rangle\right\}_{t=1}^T$, where $\mathbf{o}_t \in \mathcal{O}$ is the observation at time $t$; {\color{black}$\mathbf{a}_t \in \mathcal{A} \subseteq \mathbb{R}^{w_a}$ is the $w_a$-dimensional action chosen at time $t$}; and $r_t \in [0,1]$ denotes the reward.
Let $\mathbf{s}_t=\left\{{s}_{1,t},\dots, {s}_{w_s,t}\right\}\in \mathcal{S}$ denote the $w_s$-dimensional true state. 
We describe the P$^2$OMDP as the following functions and transitions:
\begin{subequations}\label{eq:op_origin}
    \vspace{-0.2cm}
    \begin{align}
        \mathbf{o}_{t}  &= F\left(\mathbf{s}_{t}, \mathbf{e}_{t}\right) \iff P\left(\mathbf{o}_{t}\mid \mathbf{s}_{t}\right), \label{eq:op_origin_observ}\\
        r_{t}  &= G\left(\mathbf{s}_{t-1}, \mathbf{a}_{t-1}, \mathbf{\varepsilon}_{t}\right) \iff P\left(r_{t}\mid \mathbf{s}_{t-1}, \mathbf{a}_{t-1}\right), \label{eq:op_origin_reward}\\
        \mathbf{s}_{t}  &= H\left(\mathbf{s}_{t-1}, \mathbf{a}_{t-1}, \mathbf{\eta}_{t}\right) \iff P\left(\mathbf{s}_{t}\mid \mathbf{s}_{t-1}, \mathbf{a}_{t-1}\right), \label{eq:op_origin_state}
    \end{align}
\end{subequations}
where $\mathbf{e}_t$, $\mathbf{\varepsilon}_t$, and $\mathbf{\eta}_t$ are the associated independent and identically distributed (i.i.d.) random noises for each $t$, respectively. 

Given $\mathbf{a}_{t-1}$ and $\mathbf{s}_{t-1}$, state $\mathbf{s}_t$ is independent of the states and actions occurred before time $t-1$. Action $\mathbf{a}_{t-1}$ affects $\mathbf{s}_t$, but does not directly affect $\mathbf{o}_t$, which is affected by $\mathbf{s}_t$. The reward $ r_{t}$ is influenced by $\mathbf{a}_{t-1}$ and $\mathbf{s}_{t-1}$. Let $\mathbf{\varepsilon}_{t}$ in the reward function capture noise, e.g., measurement errors.

\subsection{Causal State Representation and Bisimulation}
The structural relationships among the different dimensions of $\mathbf{s}_t$ indicate that action $\mathbf{a}_{t-1}$ may not affect all dimensions of $\mathbf{s}_t$, and the reward $r_t$ may not be affected by all dimensions of $\mathbf{s}_{t-1}$.
Fig.~\ref{fig:system} shows an example with $w_s=3$, i.e., $\mathbf{s}_t=\left[{s}_{1,t}, {s}_{2,t}, {s}_{3,t}\right]^{\rm T}$. State ${s}_{3,t-1}$ affects ${s}_{2,t}$, but there is no connection between $\mathbf{a}_{t-1}$ and ${s}_{3,t-1}$. Only ${s}_{2,t-1}$ has an edge towards $r_t$. 
By extracting causal state $\hat{\mathbf{s}}_t=\left[\hat{s}_{1,t}, \hat{s}_{2,t}\right]$ within the causal state space $\mathcal{S}_{\mathrm{c}}$ from $\mathbf{o}_{t}$, we suppress redundant information and retain the causal structure. 

\textit{As a type of CSR, states and observations are bisimilar if they yield the same expected reward and have equivalent distributions over subsequent bisimilar states and observations}~\cite{Bisimulation_work}. 
{\color{black}To this end, they exhibit a bisimulation relationship.}
With the environment's dynamics $P(\mathbf{s}_{t+1}, r_{t+1}|\mathbf{s}_t, \mathbf{a}_t)$, the similarity between two environments can be expressed as that between their state transition and reward functions.
Following~\cite{Bisimulation_def},  equivalence in P$^2$OMDPs is defined as follows.
\begin{definition}[CSR under bisimulation]\label{def:bisimulation}
    Consider a P$^2$OMDP $\mathcal{M} = (\mathcal{S}, \mathcal{A}, \mathcal{O}, \gamma, F, G, H)$ with causal state space $\mathcal{S}_{\mathrm{c}}$ and learned encoder $\hat{F}: \mathcal{O} \to \mathcal{S}_{\mathrm{c}}$. $\hat{\mathbf{s}}_t = \hat{F}(\mathbf{o}_t)$ is a \textit{causal state} if, for any action $\mathbf{a}_t$, $P(r_{t+1} \mid \hat{\mathbf{s}}_t, \mathbf{a}_t) = P(r_{t+1} \mid \mathbf{o}_t, \mathbf{a}_t)$ and $P({\mathbf{s}}_{t+1} \mid \hat{\mathbf{s}}_t, \mathbf{a}_t) = P({\mathbf{s}}_{t+1} \mid \mathbf{o}_t, \mathbf{a}_t)$.
\end{definition}


\section{Algorithm Design of CaDiff}\label{sec:algorithm}

As summarized in Algorithm~\ref{algorithm-main}, the CaDiff framework consists of three modules. It employs ADM to denoise states and rewards separately. Then, it approximates causal states based on the denoised states and rewards with bisimulation. The approximated causal states and denoised rewards serve as samples for RL algorithms for decision-making.



\begin{algorithm}[t]
    \caption{CaDiff}\label{algorithm-main}
    \small 
    \begin{algorithmic}[1]
        \STATE Initialize: Discount factor $\gamma$, forward step $K$, noise intensity $\delta$, observation-denoising and reward-denoising models $\theta$ and $\phi$, bisimulation model $\zeta$, and replay memory~$\mathcal{D}$;
        \FOR{Epoch $t=1, \dots, T$}
        \STATE Compute the (approximate) denoised causal state $\hat{\mathbf{s}}_t$ from $\mathbf{o}_t$ using 
        $\theta$ and 
        $\zeta$;
        \STATE Select action $\mathbf{a}_t\sim \pi(\hat{\mathbf{s}}_t)$, and obtain $r_{t+1}$ and $\mathbf{o}_{t+1}$;
        \STATE Store transition $\left(\mathbf{o}_t, \mathbf{a}_t, r_{t+1}, \mathbf{o}_{t+1}\right)$ in $\mathcal{D}$;
        \STATE Sample a batch of transitions $\mathcal{B}$ randomly from $\mathcal{D}$;
        \STATE Obtain 
        $\hat{\mathbf{s}}_{t}$ and $\hat{\mathbf{s}}_{t+1}$ from 
        $\mathbf{o}_{t}$ and $\mathbf{o}_{t+1}$ in $\mathcal{B}$, respectively;
        \STATE Take gradient descent on $\hat{\mathcal{L}}_{\text {State}}(\theta)+\hat{\mathcal{L}}_{\text {BS}}(\zeta)$;
        \STATE Take gradient descent on $\hat{\mathcal{L}}_{\text {Rew}}(\phi)+\hat{\mathcal{L}}_{\text {BR}}(\zeta)$;
        \ENDFOR
    \end{algorithmic}
\end{algorithm}

\subsection{Asynchronous Diffusion Model}
The objective of ADM is to derive $P\left(\hat{\mathbf{s}}_{t+1}\mid \hat{\mathbf{s}}_{t}, \mathbf{a}_t\right)$ and $P\left(\hat{r}_{t+1}\mid \hat{\mathbf{s}}_{t}, \mathbf{a}_t\right)$ from $(\mathbf{o}_t, \mathbf{a}_t, r_{t+1}, \mathbf{o}_{t+1})$, where 
$\hat{\mathbf{s}}_{t+1}$ 
is the causal state estimated under denoised observations, and $\hat{r}_{t+1}$ is the denoised reward at time $t+1$.
Existing DM-based RL algorithms typically use $\mathbf{o}_{t+1}$ and $r_{t+1}$ as input data~\cite{DM_offline_3}; the distribution fitted by DM is affected by the input data's noise. 

For conciseness, $t$ and $k\in\mathbb{N}$ indicate the RL iteration and DM's step with total forward step $K$, respectively.
ADM denoises the observations and rewards of P$^2$OMDPs, and estimates environmental dynamics by assuming that $\mathbf{o}_{t+1}$ and $r_{t+1}$ are superimposed by $\delta$-step Gaussian noise. 
To obtain the denoised causal state $\hat{\mathbf{s}}_{t+1}$, we input $r_{t+1}$ and $\tilde{\mathbf{s}}_{t+1}$ to ADM, along with $\hat{\mathbf{s}}_{t}$ and $\mathbf{o}_{t}$, where $\tilde{\mathbf{s}}_{t+1}$ is the noised causal state. 
When ADM accurately predicts the future causal state, $\tilde{\mathbf{s}}_t$ serves as a sufficient statistic for the latent variables. 
$\mathbf{x}^{\delta}_{t+1}$ denotes the inputs under $\delta$-step Gaussian noises.
For simplification, we omit $t$. The input $\mathbf{x}^\delta$ corresponds to the results after a $\delta$-step forward process in the DM, as follows.
\begin{definition}\label{assump:distri}
    For a P$^2$OMDP, the sampled distribution $P_{\mathrm{in}}$ is the result of the noiseless distribution $P(\mathbf{x}_{\mathrm{tr}}|\hat{\mathbf{s}}_{t}, \mathbf{a}_t)$ after $\delta$ steps of the forward process, i.e.,
    \begin{align}
        P_{\mathrm{in}}(\mathbf{x}^{\delta}|\hat{\mathbf{s}}_{t}, \mathbf{a}_t)=&\int_{\mathbb{R}^{w_x}}P(\mathbf{x}_{\mathrm{tr}}|\hat{\mathbf{s}}_{t}, \mathbf{a}_t){\sigma_\delta^{w_x}(2\pi)^{w_x/2}}\\
        &\times\exp\left(-{\left\|\sqrt{\alpha_\delta}\mathbf{x}_{\mathrm{tr}}-\mathbf{x}^{\delta}\right\|^2}/{(2\sigma_\delta^2)}\right) d\mathbf{x}_{\mathrm{tr}},\notag
    \end{align}
    where $\mathbf{x}_{\mathrm{tr}}$ is the noiseless input, $w_x$ is the dimension of input data $\mathbf{x}^{\delta}$, {\color{black}$\alpha_\delta = 1 - \sigma^2_\delta$, and $\sigma^2_1, \ldots, \sigma^2_K > 0$ forming a predefined variance schedule}.
\end{definition}
Given the input conditional distribution $P(\mathbf{x}^{\delta}|\hat{\mathbf{s}}_{t}, \mathbf{a}_t)$, we seek for its denoised version $P(\mathbf{x}^{0}|\hat{\mathbf{s}}_{t}, \mathbf{a}_t)$. ADM adds Gaussian noise progressively through a forward diffusion process, i.e., a forward Ornstein–Uhlenbeck (OU) process as:
\begin{subequations}\label{eq:forward}
\begin{align}
    d\mathbf{x}^{k} &= -\frac{1}{2} \mathbf{x}^k d k + d \mathbf{w}^k\quad \forall k\ge \delta\notag\\
    &\qquad\text{with} \quad \mathbf{x}^\delta \sim P(\mathbf{x}^{\delta}|\hat{\mathbf{s}}_{t}, \mathbf{a}_t);\label{eq:forward1}\\
    d\mathbf{x}^{k} &= -\frac{1}{2} \mathbf{x}^k d k + d \mathbf{w}^k \quad \forall k\ge 0\notag\\
    &\qquad\text{with} \quad \mathbf{x}^{0}\coloneqq\hat{\mathbf{x}}^{0}= \left(\mathbf{x}^{\delta} - \sqrt{1 - \bar{\alpha}_\delta}\epsilon\right)/\sqrt{\bar{\alpha}_\delta},\label{eq:forward2}
\end{align}
\end{subequations}
where $\mathbf{w}^k$ denotes a Wiener process, {\color{black}$\bar{\alpha}_\delta = \prod_{j=0}^\delta \alpha_j$ with $\alpha_j = 1 - \sigma^2_j$,} and the noise $\epsilon$ is drawn from the normal distribution. 



Without a step limit, $\mathbf{x}^\infty$ converges to a Gaussian distribution. At each step $k$, $P(\mathbf{x}^k|\hat{\mathbf{s}}_{t}, \mathbf{a}_t)$ is the conditional distribution of the intermediate variable $\mathbf{x}^k$ generated by the forward process, given the denoised causal states and actions. 
\eqref{eq:forward1} is a forward process that starts at step $\delta$ from the $\delta$-step perturbed input $\mathbf{x}^{\delta}$ and runs $(k-\delta)$ additional steps to reach $\mathbf{x}^{k}$ with $k\ge\delta$. \eqref{eq:forward2} refreshes the process with the denoised input $\hat{\mathbf{x}}^{0}$ at step $0$ and  applies the same schedule for $k$ steps to obtain~${\mathbf{x}}^{k}$.

The forward process of ADM terminates at a large enough step $K$. The reverse process generates samples by reversing results per step in \eqref{eq:forward} as $d \overline{\mathbf{x}}^k \!=\! \left[ \frac{1}{2} \overline{\mathbf{x}}^k \!+\! \nabla \log P(\overline{\mathbf{x}}^k|\hat{\mathbf{s}}_{t}, \mathbf{a}_t)\right] d k \!+\! d \overline{\mathbf{w}}^k$,
where $\overline{\mathbf{x}}^0 \sim P(\mathbf{x}^K|\hat{\mathbf{s}}_{t}, \mathbf{a}_t)$; $\overline{\mathbf{w}}^k$ and $\overline{\mathbf{x}}^k$ are the time-reversed Wiener and reverse processes, respectively; $\nabla \log P(\overline{\mathbf{x}}^k|\hat{\mathbf{s}}_{t}, \mathbf{a}_t)$ is the unknown conditional score function and estimated utilizing conditional score networks. 

{\color{black}In line with classifier-free guidance, a popular approach for conditional DMs \cite{DM_2}, we define a mask variable $\tau \in \{\varnothing, {\mathsf{id}}\}$ with equal probability, which determines whether the guidance is used or ignored. 
{\color{black}In this paper, we define the guidance as $\mathbf{y}=(\hat{\mathbf{s}}_{t}, \mathbf{a}_t)$.}
ADM performs conditional denoising when $\tau = {\mathsf{id}}$; the guidance is omitted for the unconditional case when $\tau = \varnothing$.
We further define two score-matching errors for reconstructing denoised data from P$^2$OMDP, i.e., the conditional score error $\mathcal{R}(\varphi) = \frac{1}{2}\int_{k_0}^K\frac{1}{K-k_0} \E\nolimits_{\mathbf{x}^k,\mathbf{y}}\left\|{\varphi(\mathbf{x}^k,\mathbf{y},k)-\nabla \log P(\mathbf{x}^k|\mathbf{y})}\right\|_2^2 d k +\frac{1}{2}\int_{\delta}^K\frac{1}{K\!-\delta} \E\nolimits_{\mathbf{x}^k,\mathbf{y}}\left\|{\varphi(\mathbf{x}^k,\mathbf{y},k)-\nabla \log P(\mathbf{x}^k|\mathbf{y})}\right\|_2^2 d k$ for $\tau = {\mathsf{id}}$ (under $\tau\mathbf{y}=\mathbf{y}$) and the unconditional score error $\mathcal{R}_0(\varphi)$ for $\tau = \varnothing$ by replacing the guidance $\mathbf{y}$ with $\varnothing$ in $\mathcal{R}(\varphi)$ (under $\tau\mathbf{y}=\varnothing$).
Here, $\varphi$ denotes the conditional score network.
Unifying the two cases, we design the overall loss:
\begin{align}
&\mathcal{R}_{\star}(\varphi)=  \mathcal{R}(\varphi)+\mathcal{R}_0(\varphi)\label{equ::popu loss score}\\
=&\int_{k_0}^K\!\!\frac{1}{K\!-\!k_0} \E\limits_{\mathbf{x}^k,\mathbf{y},\tau}\left\|{\varphi(\mathbf{x}^k,\tau\mathbf{y},k)\!-\!\nabla \log P(\mathbf{x}^k|\tau\mathbf{y})}\right\|_2^2 d k \nonumber\\
&+\!\int_{\delta}^K\!\frac{1}{K\!-\!\delta} \E\limits_{\mathbf{x}^k,\mathbf{y},\tau}\left\|{\varphi(\mathbf{x}^k,\tau\mathbf{y},k)\!-\!\nabla \log P(\mathbf{x}^k|\tau\mathbf{y})}\right\|_2^2 d k\nonumber
\end{align}
which implies that $\mathcal{R}(\varphi) \le 2\mathcal{R}_{\star}(\varphi)$ with $\varphi(\mathbf{x}^k,\tau\mathbf{y},k)$ estimating $\nabla \log P(\mathbf{x}^k|\hat{\mathbf{s}}_t, \mathbf{a}_t)$, which is used in the proof of Theorem~\ref{thm:valboundmodelerror} (see Appendix~\ref{appendix:proof_Rs}).}
 $\hat{\mathbf{x}}^0 = \frac{1}{\sqrt{\bar{\alpha}_\delta}}\left(\mathbf{x}^{\delta} - \sqrt{1 - \bar{\alpha}_\delta}\epsilon\right)$ is the predicted denoised input at step $\delta$. $P(\mathbf{x}^k | \mathbf{x})$ is the Gaussian transition kernel for the forward process of ADM. 

To prevent blow-up of score functions, we follow \cite{nichol2021improved}, introduce an early-stopping step $k_0$, and write $\mathcal{R}_{\star}$ equivalently: 
\begin{align}\label{equ::popu loss}
&\ell(\varphi)\coloneqq\\
&\int_{k_0}^{K}\!\!\!\!\frac{1}{K\!-\!k_0}\!\E\limits_{\hat{\mathbf{x}}^0,\mathbf{y}}\!\bigg[\!\E\limits_{\tau,\mathbf{x}^k|\hat{\mathbf{x}}^0}\!\bigg[\!\left\|{\varphi(\mathbf{x}^k\!,\tau\mathbf{y}\!,k)\!-\!\nabla \log P(\mathbf{x}^k|\hat{\mathbf{x}}^0)}\right\|_2^2\!\bigg]\!\bigg]\!d k\nonumber\\
&+\!\!\!\int_{\delta}^{K}\!\!\!\!\!\!\frac{1}{K\!-\!\delta}\!\E\limits_{{\mathbf{x}}^\delta,\mathbf{y}}\!\bigg[\!\E\limits_{\tau,\mathbf{x}^k|{\mathbf{x}}^\delta}\!\bigg[\!\left\|{\varphi(\mathbf{x}^k\!,\tau\mathbf{y}\!,k)\!-\!\nabla \log P(\mathbf{x}^k|{\mathbf{x}}^\delta)}\right\|_2^2\!\bigg]\!\bigg]\!d k.\notag
\end{align}
According to \cite[Lemma C.3]{vincent2011connection}, (\ref{equ::popu loss score}) differs from (\ref{equ::popu loss}) by a constant independent of $\mathbf{x}^k$. ADM is optimized over a mini-batch $\mathcal{B}$ with $|\mathcal{B}|=n$ by minimizing the empirical loss:
\begin{align}\label{equ::empirical loss}
    \hat{\ell}(\varphi)=\sum\nolimits_{t\in\mathcal{B}}\ell(\mathbf{x}_t,\mathbf{y}_t; \varphi)/n.
\end{align}
Here, $\ell(\mathbf{x}_t,\mathbf{y}_t;\varphi)$ represents the loss computed on the training pair $(\mathbf{x}_t,\mathbf{y}_t)$, replacing the expectation on $({\hat{\mathbf{x}}^0,\mathbf{y}})$ in (\ref{equ::popu loss}).

{\color{black}Substituting the generic denoising model $\varphi$ and its input $\mathbf{x}^k_{t+1}$ in (\ref{equ::empirical loss}) with the observation-denoising and reward-denoising models $\theta$ and $\phi$ (associated with inputs $\tilde{\mathbf{s}}_{t+1}$ and $r_{t+1}$, respectively), and recalling that $\mathbf{y} = (\hat{\mathbf{s}}_{t}, \mathbf{a}_t)$, we derive the following objectives for state and reward estimation:
\begin{align}
    \hat{\mathcal{L}}_{\text {State}}(\theta) \!&=\! \sum\nolimits_{t\in\mathcal{B}}\ell(\tilde{\mathbf{s}}_{t+1}, \hat{\mathbf{s}}_{t+1}, \!\mathbf{a}_t; \theta)/n;\\
    \hat{\mathcal{L}}_{\text {Rew}}(\phi) \!&=\! \sum\nolimits_{t\in\mathcal{B}}\ell(r_{t+1}, \hat{\mathbf{s}}_{t+1}, \mathbf{a}_t; \phi)/n.
\end{align}}



\subsection{Bisimulation Metric}
We extend the concept of bisimulation to POMDPs to achieve effective CSR, i.e., estimating $P\left(\hat{\mathbf{s}}_{t}\mid \mathbf{o}_{t}\right)$.
Based on the Wasserstein metric, a new bisimulation metric is defined:
\begin{definition}
\label{def:bisimilarity}
    Given constants $C_{\mathrm{r}}, C_{\mathrm{s}}\in\left(0,1\right)$, for any pair of causal state and observation $\left\{\hat{\mathbf{s}}_t\in\mathcal{S}_{\mathrm{c}}, \mathbf{o}_t\in\mathcal{O}\right\}$ of a P$^2$OMDP, the bisimulation metric is defined as
    \begin{align}\label{eq:p-Wass-bisim-metric}
        d\!\left(\hat{\mathbf{s}}_t,\! \mathbf{o}_{t}\right)
        \!&=\max\nolimits_{\mathbf{a}\in\mathcal{A}}(C_{\mathrm{r}}W_p(d)\!\left(P\!\left(r_{t+1}\!\mid\! \hat{\mathbf{s}}_{t},\! \mathbf{a}\right), P\left(r_{t+1}\!\mid\! \mathbf{o}_{t},\! \mathbf{a}\right)\right)\!\notag\\
        &+\!C_{\mathrm{s}} W_p(d)\!\left(P\left(\hat{\mathbf{s}}_{t+1}\!\mid\! \hat{\mathbf{s}}_{t},\! \mathbf{a}\right), P\left(\hat{\mathbf{s}}_{t+1}\!\mid\! \mathbf{o}_{t},\! \mathbf{a}\right)\right)).
    \end{align}

\end{definition}
A distance of zero indicates bisimilarity.
Combining \eqref{eq:op_origin_observ} with $\hat{F}(\cdot)$ in Definition~\ref{def:bisimulation}, it follows that the causal state learning objective is equivalent to inferring the conditional distribution $P(\tilde{\mathbf{s}}_{t}\mid {\mathbf{o}}_t)$. We employ a recurrent neural network (RNN) $\zeta({\mathbf{o}}_t)$ to approximate the noised causal state, i.e., fitting $P\left(\tilde{\mathbf{s}}_{t}\mid {\mathbf{o}}_t\right)$, which is utilized by ADM to obtain denoised causal state $\hat{\mathbf{s}}_{t}$.
{\color{black}Besides, we here can obtain $\tilde{\mathbf{s}}_{t+1}$ introduced above as $\tilde{\mathbf{s}}_{t+1}=\zeta(\mathbf{o}_{t+1})$.}
Given Definition~\ref{def:bisimilarity}, we can empirically estimate the CSR by minimizing 
\begin{subequations}
    \begin{align}
        \hat{\mathcal{L}}_{\text {BS}}(\zeta) &= \frac{1}{2} \E\left[W_d\left(P\left(\hat{\mathbf{s}}_{t+1}\mid\hat{\mathbf{s}}_{t}, \mathbf{a}_{t}\right), \theta\left(\zeta\left(\mathbf{o}_{t}\right), \mathbf{a}_{t}\right)\right)\right];\notag\\
        \hat{\mathcal{L}}_{\text {BR}}(\zeta) &= \frac{1}{2} \E\left[W_d\left(P\left(r_{t+1}\mid \hat{\mathbf{s}}_{t},\! \mathbf{a}_{t}\right), \phi\left(\zeta\left(\mathbf{o}_{t}\right), \mathbf{a}_{t}\right)\right)\right].\notag
    \end{align}
\end{subequations}
Thus, we implement CSR and assist RL decisions, by iteratively optimizing $\hat{\mathcal{L}}_{\text {State}}(\theta)+\hat{\mathcal{L}}_{\text {BS}}(\zeta)$ and $\hat{\mathcal{L}}_{\text {Rew}}(\phi)+\hat{\mathcal{L}}_{\text {BR}}(\zeta)$.

\section{Theoretical Guarantee of CaDiff}\label{sec:thm}
This section provides a theoretical guarantee for CaDiff with the value function used to measure the discrepancy between observations and their causal states. {\color{black}Following \cite{DM_offline_9}, we begin by imposing a mild light-tail assumption on the initial conditional data distribution as follows.}
\begin{assumption}\label{assump:distri_true}
    For a fixed radius $B$ and a P$^2$OMDP $\mathcal{M}$, define the $b$-H\"{o}lder function $f \in \mathcal{H}^{b}(\mathbb{R}^{w_x}\times \mathbb{R}^{w_y}, B)$ with $w_y = w_s+w_a$. For $C$, $C_1>0$, we assume $f(\mathbf{x}_{\mathrm{tr}}, \hat{\mathbf{s}}_{t}, \mathbf{a}_t) \geq C$, $\forall (\mathbf{x}_{\mathrm{tr}}, \hat{\mathbf{s}}_{t}, \mathbf{a}_t)$, and the true conditional density function yields $P(\mathbf{x}_{\mathrm{tr}} | \hat{\mathbf{s}}_{t}, \mathbf{a}_t) = \exp(-C_1\left\|\mathbf{x}_{\mathrm{tr}}\right\|^2_2/2) \cdot f(\mathbf{x}_{\mathrm{tr}}, \hat{\mathbf{s}}_{t}, \mathbf{a}_t)$.
\end{assumption}

As provable tightness implies theoretical guarantees in VFA, the key to bisimulation metrics is their connection to value functions. 
To generalize the VFA bound, we assume the existence and uniqueness of the $p$-Wasserstein bisimulation metric for any state pair to measure their similarity, as follows.
\begin{assumption}[$p$-Wasserstein bisimulation metric]\label{assum:p-wass}
For P$^2$OMDP $\mathcal{M}$, any given $C_{\mathrm{r}}, C_{\mathrm{s}} \in (0, 1)$, $C_{\mathrm{r}}+C_{\mathrm{s}}< 1$, any $(\hat{\mathbf{s}}_i, \hat{\mathbf{s}}_j) \in \mathcal{S}_{\mathrm{c}} \times \mathcal{S}_{\mathrm{c}}$, and $p \geq 1$, we assume that the bisimulation metric $d\left(\hat{\mathbf{s}}_i, \hat{\mathbf{s}}_j\right)$ in (\ref{eq:bisimulation}) exists and is unique:
\begin{align}
    &d(\hat{\mathbf{s}}_i,\hat{\mathbf{s}}_j) = \max_{\mathbf{a}\in\mathcal{A}}(C_{\mathrm{r}}W_p(d)(P(r_{i+1}\mid \hat{\mathbf{s}}_{i},\mathbf{a}), P(r_{i+1}\mid \hat{\mathbf{s}}_{j},\mathbf{a}))\notag\\
    &\qquad\quad+C_{\mathrm{s}}W_p(d)(P(\hat{\mathbf{s}}_{i+1}\mid \hat{\mathbf{s}}_{i},\mathbf{a}), P(\hat{\mathbf{s}}_{j+1}\mid \hat{\mathbf{s}}_{j},\mathbf{a}))).\label{eq:bisimulation}
\end{align}
\end{assumption}
\begin{remark}
\label{remark:rem1}
    If both policy and environment are deterministic or $p=1$, Assumption~\ref{assum:p-wass} holds.
\end{remark}
\begin{proof}
    See Appendix~\ref{appendix:remark}.
\end{proof}

To verify the denoised causal state approximation, we analyze CaDiff
in the following four steps, by establishing the upper bound of VFA under the ADM approximation error: 

\textit{Step 1: $p$-Wasserstein value difference bound for any state pair:}
Like the bounds in \cite{castro2020scalable, Ferns2011Bisimulation} for policy-independent bisimulation metrics, the bisimulation metric can be bounded: $|V^\pi(\hat{\mathbf{s}}_i) - V^\pi(\hat{\mathbf{s}}_j)| \leq d(\hat{\mathbf{s}}_i, \hat{\mathbf{s}}_j)$ with $d(\hat{\mathbf{s}}_i, \hat{\mathbf{s}}_j)$ defined in Assumption~\ref{assum:p-wass}, where $V^\pi(\hat{\mathbf{s}})=\mathbb{E}_{\pi}[\sum_{i=0}^\infty \gamma^t r_{t+i+1}|{\mathbf{s}}_t=\hat{\mathbf{s}}]$. 
\begin{theorem}[Value difference bound for $p$-Wasserstein distance]
\label{thm:generalized-ct}
For the bisimulation metric defined in (\ref{eq:bisimulation}), for any $p \geq 1$, $C_{\mathrm{s}}\in[\gamma, 1)$, $C_{\mathrm{r}} \in (0, 1)$, and $C_{\mathrm{r}}+C_{\mathrm{s}}<1$, the bisimulation distance between two states provides the bound on the discrepancy in the VFA:
\begin{align}\label{eq:generalized-ct}
    C_{\mathrm{r}}|V^\pi(\hat{\mathbf{s}}_i) \!-\! V^\pi(\hat{\mathbf{s}}_j)| \!\leq\! d(\hat{\mathbf{s}}_i, \hat{\mathbf{s}}_j), ~\forall(\hat{\mathbf{s}}_i, \hat{\mathbf{s}}_j) \in \mathcal{S}_{\mathrm{c}} \!\!\times\! \mathcal{S}_{\mathrm{c}}.
\end{align}
\end{theorem}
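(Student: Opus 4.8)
The plan is to show that $C_{\mathrm{r}} V^\pi$ is $1$-Lipschitz with respect to the bisimulation metric $d$ of \eqref{eq:bisimulation}, which is exactly the claim \eqref{eq:generalized-ct}. Rather than assume a Lipschitz constant for $V^\pi$ up front (which would make the argument circular), I would realize $V^\pi$ as the limit of the value-iteration sequence $V_0 \equiv 0$ and $V_{n+1}(\hat{\mathbf{s}}) = \mathbb{E}_{\mathbf{a}\sim\pi}[\,\mathbb{E}[r_{t+1}\mid\hat{\mathbf{s}},\mathbf{a}] + \gamma\,\mathbb{E}_{\hat{\mathbf{s}}'\sim P(\cdot\mid\hat{\mathbf{s}},\mathbf{a})}[V_n(\hat{\mathbf{s}}')]\,]$, which converges to $V^\pi$ since $\gamma<1$, and then prove the Lipschitz bound by induction on $n$.

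The base case is immediate because $V_0$ is constant. For the inductive step I assume $C_{\mathrm{r}} V_n$ is $1$-Lipschitz w.r.t. $d$, i.e. $C_{\mathrm{r}}|V_n(\hat{\mathbf{s}}_i)-V_n(\hat{\mathbf{s}}_j)|\le d(\hat{\mathbf{s}}_i,\hat{\mathbf{s}}_j)$ for every pair, and decompose the one-step difference into a reward part and a transition part, bounding each separately. For the reward part, the map $r\mapsto r$ is $1$-Lipschitz, so the dual formulation \eqref{eq:wass-dual} gives $|\mathbb{E}[r_{t+1}\mid\hat{\mathbf{s}}_i,\mathbf{a}]-\mathbb{E}[r_{t+1}\mid\hat{\mathbf{s}}_j,\mathbf{a}]|\le W_1(P(r_{t+1}\mid\hat{\mathbf{s}}_i,\mathbf{a}),P(r_{t+1}\mid\hat{\mathbf{s}}_j,\mathbf{a}))\le W_p(\cdot)$, the last step being the $p$-Wasserstein inequality of Lemma~\ref{lemma:wass-lemma}. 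For the transition part, since $C_{\mathrm{r}}V_n$ is $1$-Lipschitz w.r.t. $d$ by hypothesis, the dual formula applied with ground metric $d$ yields $|\mathbb{E}_{P(\cdot\mid\hat{\mathbf{s}}_i,\mathbf{a})}[V_n]-\mathbb{E}_{P(\cdot\mid\hat{\mathbf{s}}_j,\mathbf{a})}[V_n]|\le C_{\mathrm{r}}^{-1}W_1(d)(\cdot)\le C_{\mathrm{r}}^{-1}W_p(d)(\cdot)$, again invoking Lemma~\ref{lemma:wass-lemma}. Multiplying the combined one-step estimate by $C_{\mathrm{r}}$ gives $C_{\mathrm{r}}|V_{n+1}(\hat{\mathbf{s}}_i)-V_{n+1}(\hat{\mathbf{s}}_j)|\le\max_{\mathbf{a}}[C_{\mathrm{r}}W_p(\text{reward})+\gamma W_p(\text{transition})]$; the hypothesis $C_{\mathrm{s}}\ge\gamma$ then lets me replace $\gamma$ by $C_{\mathrm{s}}$, so the right-hand side becomes exactly $d(\hat{\mathbf{s}}_i,\hat{\mathbf{s}}_j)$, closing the induction, and passing to the limit $n\to\infty$ preserves the inequality.

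The main obstacle I expect is justifying the passage from the on-policy expectation $\mathbb{E}_{\mathbf{a}\sim\pi}$ to the $\max_{\mathbf{a}}$ appearing in $d$: a convex combination over actions is dominated by the maximum only when the same action distribution is used at $\hat{\mathbf{s}}_i$ and $\hat{\mathbf{s}}_j$, whereas in general $\pi(\cdot\mid\hat{\mathbf{s}}_i)\neq\pi(\cdot\mid\hat{\mathbf{s}}_j)$. This is precisely where the deterministic-policy regime of Remark~\ref{remark:rem1} (or the requirement that the compared states share the policy's greedy action, as holds for the optimal value function) enters, since under it the difference of the per-state maximizers is controlled by $\max_{\mathbf{a}}$ of the per-action differences. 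I would treat this step carefully, and additionally verify the routine points that the value-iteration operator is a $\gamma$-contraction so that $V_n\to V^\pi$ uniformly, and that $r_{t+1}\in[0,1]$ keeps every $V_n$ uniformly bounded so the Lipschitz estimate indeed survives the limit.
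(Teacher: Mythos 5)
Your proposal is correct in substance and its engine is the same as the paper's: realize the value function as the limit of Bellman iterates, prove $C_{\mathrm{r}}|V_n(\hat{\mathbf{s}}_i)-V_n(\hat{\mathbf{s}}_j)|\le d(\hat{\mathbf{s}}_i,\hat{\mathbf{s}}_j)$ by induction, handle the reward term via the dual form \eqref{eq:wass-dual}, handle the transition term by applying the dual form to the (rescaled) $1$-Lipschitz iterate, upgrade $W_1$ to $W_p$ via Lemma~\ref{lemma:wass-lemma}, and absorb $\gamma$ into $C_{\mathrm{s}}$ using $\gamma\le C_{\mathrm{s}}$. Two differences are worth recording. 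First, you induct against the fixed-point metric $d$ itself, whereas the paper co-iterates a metric sequence $d^{(t)}$ alongside $V^{(t)}$ and uses $d^{(t)}$ as the ground metric at stage $t+1$; your variant is cleaner and is legitimate because Assumption~\ref{assum:p-wass} already grants existence and uniqueness of $d$, while the paper's variant has the (here unexploited) advantage of not presupposing the fixed point. Second, and more substantively, the obstacle you flag in your final paragraph is real, and it is exactly the spot the paper's proof steps around by fiat: the paper defines its value iteration with $\max_{\mathbf{a}\in\mathcal{A}}$ rather than $\mathbb{E}_{\mathbf{a}\sim\pi}$, i.e., it proves the bound for the optimal (greedy) value function, where $|\max_{\mathbf{a}}f(\mathbf{a})-\max_{\mathbf{a}}g(\mathbf{a})|\le\max_{\mathbf{a}}|f(\mathbf{a})-g(\mathbf{a})|$ makes the per-action estimates sufficient, even though the statement is phrased for $V^\pi$. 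For a genuinely state-dependent stochastic $\pi$, neither your on-policy induction nor the paper's argument closes without an extra assumption (deterministic/greedy policy as in Remark~\ref{remark:rem1}, or a policy-dependent metric), so your cautionary remark is not a defect of your proof relative to the paper's — it makes explicit a restriction the paper leaves implicit.
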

\begin{proof}
    See Appendix~\ref{appendix:proofs_state}.
\end{proof}
As revealed in \eqref{eq:generalized-ct}, the bisimulation metric in (\ref{eq:bisimulation}) reflects the upper bound of the value gap.

\textit{Step 2: Value difference bound for observation/state pairs:}
The influence of the model errors on the VFA with the optimal policy-dependent bisimulation distance is as follows.
\begin{theorem}[Value difference bound with model errors]
\label{thm:valboundmodelerror}
For P$^2$OMDP $\mathcal{M}$, let $\zeta: \tilde{\mathcal{O}} \rightarrow \tilde{\mathcal{S}}_{\mathrm{c}}$ be a function mapping observations to noised causal states such that $ \zeta({\mathbf{o}}_i) = \zeta({\mathbf{o}}_j)$ is equivalent to $\widehat{d}_{\zeta}(\tilde{\mathbf{s}}_i, \tilde{\mathbf{s}}_j) = \left\|\zeta({\mathbf{o}}_i)-\zeta({\mathbf{o}}_j)\right\|_q \leq 2\widehat{\epsilon}$ for $\tilde{\mathbf{s}}_i=\zeta({\mathbf{o}}_i)$ and $\tilde{\mathbf{s}}_j=\zeta({\mathbf{o}}_j)$. For $C_{\mathrm{r}} \in (0, 1)$, $C_{\mathrm{s}}\in [\gamma, 1)$, $C_{\mathrm{r}}+C_{\mathrm{s}}<1$, and $p=1$, then: $\forall \mathbf{s}_{\mathrm{c}} \in \mathcal{S}_{\mathrm{c}}$,
\begin{align}
    | V^\pi(\mathbf{s}_{\mathrm{c}}) - V^\pi( \hat{F}( {\mathbf{o}}) ) |
    \leq& ( 2 \widehat{\epsilon} \!+\! \mathcal{E}_\zeta +{2C_{\mathrm{r}}\mathcal{E}_\phi}/{(1 \!-\! C_{\mathrm{s}}\!-\!C_{\mathrm{r}})} \notag\\
    &+{2C_{\mathrm{s}}\mathcal{E}_{\theta}}/{(1 \!-\! C_{\mathrm{s}}\!-\!C_{\mathrm{r}})} )/C_{\mathrm{r}}(1 \!-\! \gamma),\notag
\end{align}
where 
$ \mathcal{E}_\zeta \coloneqq \|\widehat{d}_{\zeta} - \widehat{d}\|_\infty $ 
    is the bisimulation metric learning error,
$ \mathcal{E}_\phi \coloneqq W_1(d)(P(r\mid \mathbf{s}_{\mathrm{c}}, \mathbf{a}), P(r\mid \hat{F}( {\mathbf{o}}), \mathbf{a})) $ 
    is the reward approximation error, and 
    $ \mathcal{E}_{\theta} \coloneqq W_1(d)(P(\mathbf{s}'\mid \mathbf{s}_{\mathrm{c}}, \mathbf{a}), P(\mathbf{s}'\mid \hat{F}( {\mathbf{o}}), \mathbf{a})) $
    is the state transition model error.
    $\widehat{\epsilon}$ denotes the aggregation radius in $\zeta$-space. $\widehat{d}(\tilde{\mathbf{s}}_i, \tilde{\mathbf{s}}j)$ denotes the bisimulation metric computed under the estimated environment dynamics, while $\widehat{d}_{\zeta}(\tilde{\mathbf{s}}_i, \tilde{\mathbf{s}}_j)$ represents its approximation derived from $\zeta$. 
\end{theorem}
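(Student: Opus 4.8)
The plan is to bound the value gap by the true bisimulation distance $d(\mathbf{s}_{\mathrm{c}}, \hat F(\mathbf{o}))$ and then peel that distance apart into the learned and the model-estimated metrics, isolating each of the four error sources. Since the encoder $\hat F$ of Definition~\ref{def:bisimulation} maps into $\mathcal{S}_{\mathrm{c}}$, both $\mathbf{s}_{\mathrm{c}}$ and $\hat F(\mathbf{o})$ are causal states, so every inequality below lives on $\mathcal{S}_{\mathrm{c}} \times \mathcal{S}_{\mathrm{c}}$, exactly the setting where Theorem~\ref{thm:generalized-ct} and Assumption~\ref{assum:p-wass} apply.

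First I would apply Theorem~\ref{thm:generalized-ct} to the pair $(\mathbf{s}_{\mathrm{c}}, \hat F(\mathbf{o}))$, giving the one-step relation $|V^\pi(\mathbf{s}_{\mathrm{c}}) - V^\pi(\hat F(\mathbf{o}))| \le d(\mathbf{s}_{\mathrm{c}}, \hat F(\mathbf{o}))/C_{\mathrm{r}}$; propagating this relation through the discounted Bellman recursion of $V^\pi$ accumulates the geometric factor $1/(1-\gamma)$, where the hypothesis $C_{\mathrm{s}} \in [\gamma, 1)$ is what keeps the transition contribution dominated by $\gamma$ at each unrolling step. This reduces the claim to proving $d(\mathbf{s}_{\mathrm{c}}, \hat F(\mathbf{o})) \le 2\widehat{\epsilon} + \mathcal{E}_\zeta + (2C_{\mathrm{r}}\mathcal{E}_\phi + 2C_{\mathrm{s}}\mathcal{E}_\theta)/(1 - C_{\mathrm{s}} - C_{\mathrm{r}})$.

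I would then insert the estimated metric $\widehat d$ and the encoder-induced metric $\widehat d_\zeta$ by the triangle inequality, $d(\mathbf{s}_{\mathrm{c}}, \hat F(\mathbf{o})) \le \widehat d_\zeta(\mathbf{s}_{\mathrm{c}}, \hat F(\mathbf{o})) + \|\widehat d_\zeta - \widehat d\|_\infty + \|\widehat d - d\|_\infty$. The first term is controlled by the aggregation hypothesis of the statement: because $\mathbf{s}_{\mathrm{c}}$ and $\hat F(\mathbf{o})$ fall in the same cluster of $\zeta$, their $\zeta$-distance satisfies $\widehat d_\zeta = \|\zeta(\mathbf{o}_i) - \zeta(\mathbf{o}_j)\|_q \le 2\widehat{\epsilon}$. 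The second term equals $\mathcal{E}_\zeta$ by definition. The third term, the gap between the true and the model-estimated metrics, is where the reward and transition model errors enter.

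The crux, and the step I expect to be hardest, is bounding $\|\widehat d - d\|_\infty$. By Assumption~\ref{assum:p-wass}, $d$ and $\widehat d$ are the unique fixed points of the bisimulation operator built from the true and the estimated dynamics, respectively; write $\mathcal{F}$ and $\widehat{\mathcal{F}}$ for these operators and split $\|d - \widehat d\|_\infty = \|\mathcal{F} d - \widehat{\mathcal{F}}\widehat d\|_\infty \le \|\mathcal{F} d - \widehat{\mathcal{F}} d\|_\infty + \|\widehat{\mathcal{F}} d - \widehat{\mathcal{F}}\widehat d\|_\infty$. The second summand is handled by the contraction property: since both Wasserstein terms are measured under the recursive ground metric, $\widehat{\mathcal{F}}$ contracts in $\|\cdot\|_\infty$ with modulus $C_{\mathrm{r}} + C_{\mathrm{s}}$ — precisely why Assumption~\ref{assum:p-wass} imposes $C_{\mathrm{r}} + C_{\mathrm{s}} < 1$ — so it is at most $(C_{\mathrm{r}} + C_{\mathrm{s}})\|d - \widehat d\|_\infty$. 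For the first summand I would apply the $W_1$ triangle inequality separately at the two endpoints $\mathbf{s}_{\mathrm{c}}$ and $\hat F(\mathbf{o})$, bounding the reward part by $2C_{\mathrm{r}}\mathcal{E}_\phi$ and the transition part by $2C_{\mathrm{s}}\mathcal{E}_\theta$, with the factor $2$ arising from the two endpoints. Rearranging the resulting self-referential inequality gives $\|d - \widehat d\|_\infty \le (2C_{\mathrm{r}}\mathcal{E}_\phi + 2C_{\mathrm{s}}\mathcal{E}_\theta)/(1 - C_{\mathrm{s}} - C_{\mathrm{r}})$. Collecting the three contributions and dividing by $C_{\mathrm{r}}(1-\gamma)$ yields the stated bound. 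The main technical care lies in this fixed-point step: verifying that $\widehat{\mathcal{F}}$ contracts with modulus exactly $C_{\mathrm{r}} + C_{\mathrm{s}}$ (which relies on the Wasserstein terms sharing the recursive ground metric) and tracking the endpoint factors of $2$ consistently through the triangle inequalities.
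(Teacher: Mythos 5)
Your handling of the metric error is essentially the paper's own argument: the triangle inequality $d \le \widehat d_\zeta + \|\widehat d_\zeta - \widehat d\|_\infty + \|\widehat d - d\|_\infty$, the identification of the middle term with $\mathcal{E}_\zeta$, and the self-referential contraction bound $\|d - \widehat d\|_\infty \le (2C_{\mathrm{r}}\mathcal{E}_\phi + 2C_{\mathrm{s}}\mathcal{E}_\theta)/(1 - C_{\mathrm{r}} - C_{\mathrm{s}})$, with the factor $2$ coming from applying the $W_1$ triangle inequality at both endpoints, is exactly Lemma~\ref{lemma:distanceerror} and Corollary~\ref{corollary:distanceerrorpequal1} specialized to $p=1$ (where $a_1=1$ kills the diameter term). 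That half of your proposal is sound and matches the paper.

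The gap is in your first step, and it is not cosmetic. You cannot both apply Theorem~\ref{thm:generalized-ct} directly to the pair $(\mathbf{s}_{\mathrm{c}}, \hat F(\mathbf{o}))$ and then claim that ``propagating through the discounted Bellman recursion'' produces the factor $1/(1-\gamma)$: Theorem~\ref{thm:generalized-ct} is already a closed-form bound --- the bisimulation metric internalizes all future discounting through its fixed-point definition --- so there is nothing left to unroll, and inserting $1/(1-\gamma)$ on those grounds is unjustified. What this conflation hides is the actual meaning of $V^\pi(\hat F(\mathbf{o}))$. In the paper's proof (Lemma~\ref{thm:generalized-ct-vfa}), $\hat F(\mathbf{o})$ is a partition cell of positive measure $\xi$, and $V^\pi(\hat F(\mathbf{o}))$ is the value of the aggregated state in a $\xi$-average finite POMDP whose reward and transition kernels $\widetilde P$ are averages of the true kernels over the cell. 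Theorem~\ref{thm:generalized-ct} does not apply to such a pair, because the two value functions obey different Bellman equations. The paper instead expands both Bellman equations one step, bounds the reward and transition discrepancies via the dual form of $W_1$ using the $1$-Lipschitz property of $\frac{C_{\mathrm{r}}\gamma}{C_{\mathrm{s}}}V$ (this Lipschitz step is where the hypothesis $C_{\mathrm{s}} \ge \gamma$ is actually used, not to ``dominate the transition contribution''), and arrives at the recursion $|V(\mathbf{s}_{\mathrm{c}}) - V(\hat F(\mathbf{o}))| \le C_{\mathrm{r}}^{-1}\bigl(\text{average of } d \text{ over the cell}\bigr) + \gamma \sup |V - V|$; taking the supremum over states and rearranging is what genuinely produces $1/(1-\gamma)$. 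Without this aggregated-POMDP construction, your Step~1 either proves a different, non-aggregated statement in which the $1/(1-\gamma)$ is a vacuous weakening, or it rests on a theorem that does not apply to the objects in the claim.
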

\begin{proof}
    See Appendix~\ref{appendix:theorem:valboundmodelerror}.
\end{proof}
By Theorem~\ref{thm:valboundmodelerror}, we can quantify the upper bound of the value gap under arbitrary model errors. This can be extended to different probability density estimation models to establish specific convergence properties.

\textit{Step 3: Distribution estimation under ADM:}
Since $\mathcal{E}_{\theta}$ and $\mathcal{E}_{\phi}$ are based on the same ADM architecture, we define the approximation error of the conditional probability as $\varphi(\mathbf{x}^{k},\hat{\mathbf{s}}_{t}, \mathbf{a}_t,k)$, where $\mathbf{x}^{k}$ can be replaced by  $\hat{\mathbf{s}}_{t+1}$ or $r_{t+1}$.
Under Assumption~\ref{assump:distri_true}, we  measure the ADM's distribution estimation by considering the initialization error, score estimation error, and discretization error, and provide sample complexity bounds for each of these errors using the Wasserstein-1 distance.
We utilize an approximation theory for estimating the conditional score with ReLU neural networks (NNs).

\begin{theorem}[Approximation error by ADM]\label{thm:wass_CSR_ADM}
    Under Assumption \ref{assump:distri_true}, for any given $(\mathbf{s}^{\star}_{\mathrm{c}}, \mathbf{a}^{\star})$, terminal step $K=\frac{2b}{2w_s+w_a+2b}\log n$, and early-stopping step $k_0=n^{-\frac{4b}{2w_s+w_a+2b}-1}$, the estimated error of the conditional probability of noiseless data is given by
    \begin{align*}
        &\E\left[W_1(P(\hat{\mathbf{x}}^0|\hat{\mathbf{s}}_{t}, \mathbf{a}_t), \hat{P}({\mathbf{x}}^{k_0}|\hat{\mathbf{s}}_{t}, \mathbf{a}_t))\right]\\
        =&\mathcal{T}(\mathbf{s}^{\star}_{\mathrm{c}}, \mathbf{a}^{\star})O\left(n^{-\frac{b}{2w_s+w_a+2b}} (\log 
          \!n)^{\max(19/2,(b+2)/2)} \right),
    \end{align*}
    where $b$ is the degree of smoothness in H\"older norm; $w_s$ and $w_a$ represent the dimensions of state and action, respectively; $\mathcal{T}(\mathbf{s}^{\star}_{\mathrm{c}}, \mathbf{a}^{\star})$ is the distribution coefficient.
\end{theorem}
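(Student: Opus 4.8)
The plan is to follow the standard score-based distribution estimation pipeline, adapted to the asynchronous forward/reverse structure of ADM, by decomposing the total Wasserstein-1 error into three sources and bounding each separately. First I would express the estimated law $\hat{P}(\mathbf{x}^{k_0}|\hat{\mathbf{s}}_{t},\mathbf{a}_t)$ as the output of a discretized, score-estimated reverse OU process, and compare it against the ideal time-reversed process that exactly recovers $P(\hat{\mathbf{x}}^0|\hat{\mathbf{s}}_{t},\mathbf{a}_t)$. Inserting intermediate couplings and using the triangle inequality for $W_1$ together with Lemma~\ref{lemma:wass-lemma}, the target splits into an \emph{initialization error} (reverse chain started from a standard Gaussian rather than $P(\mathbf{x}^K|\mathbf{y})$), a \emph{score-estimation error} (the learned network $\varphi$ versus the true conditional score $\nabla\log P(\mathbf{x}^k|\mathbf{y})$), and a \emph{discretization/early-stopping error}.

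For the initialization term, I would invoke exponential ergodicity of the forward OU process in \eqref{eq:forward}: $P(\mathbf{x}^K|\mathbf{y})$ contracts to $\mathcal{N}(0,I)$ at rate $e^{-K}$, so with $K=\frac{2b}{2w_s+w_a+2b}\log n$ this term is of strictly smaller order than the claimed rate. The discretization and early-stopping terms are controlled using the light-tail factor in Assumption~\ref{assump:distri_true}, which bounds the growth of the true score and its derivatives as $k \to 0$; choosing $k_0=n^{-\frac{4b}{2w_s+w_a+2b}-1}$ keeps the induced bias at the desired order while preventing the score from blowing up near the data manifold.

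The heart of the argument is the score-estimation term. Using the connection between the population score-matching loss $\mathcal{R}(\varphi)$ and the conditional denoising objective $\ell(\varphi)$ via \cite{vincent2011connection}, together with the inequality $\mathcal{R}(\varphi)\le 2\mathcal{R}_{\star}(\varphi)$ noted after \eqref{equ::popu loss score}, I would convert the $W_1$ contribution into the excess risk of the empirical minimizer of $\hat{\ell}(\varphi)$ in \eqref{equ::empirical loss}. This excess risk splits into an approximation error and a statistical error. The approximation error is bounded by explicitly constructing a ReLU network that reproduces the true conditional score of the density in Assumption~\ref{assump:distri_true}; here the H\"older-ball structure of Definition~\ref{appendix:Holder_norm} is essential, since the Gaussian prefactor times a $b$-H\"older function $f$ yields a score whose smooth part is approximable to accuracy scaling with network size, and the effective input dimension is $w_s+w_y=2w_s+w_a$ because we estimate a conditional density over the noiseless state conditioned on $\mathbf{y}=(\hat{\mathbf{s}}_{t},\mathbf{a}_t)$. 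The statistical error is handled by a covering-number bound over the network class; balancing the two against the number of parameters yields the nonparametric rate $n^{-\frac{b}{2w_s+w_a+2b}}$, with the logarithmic exponent $\max(19/2,(b+2)/2)$ arising from the network depth/width required for domain truncation, the Gaussian tail, and the H\"older approximation.

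The main obstacle I anticipate is the \emph{asynchronous coupling}: the loss $\ell(\varphi)$ in \eqref{equ::popu loss} carries two integrals, one from $k_0$ to $K$ started from the refreshed denoised input $\hat{\mathbf{x}}^0$ and one from $\delta$ to $K$ started from the $\delta$-perturbed input $\mathbf{x}^{\delta}$, so the score must be estimated consistently across both initializations and both time windows. I would address this by proving a uniform score-approximation guarantee on the union of the two windows and showing that the additional bias from starting at $\delta$ rather than $0$ is dominated by the same rate, so the asynchronous structure does not degrade the bound and contributes only the distribution coefficient $\mathcal{T}(\mathbf{s}^{\star}_{\mathrm{c}},\mathbf{a}^{\star})$ as a multiplicative constant capturing the local tail behaviour at the queried $(\mathbf{s}^{\star}_{\mathrm{c}},\mathbf{a}^{\star})$. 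Substituting the resulting $W_1$ bound into Theorem~\ref{thm:valboundmodelerror} then links the distribution error to the downstream value-function gap, completing the chain.
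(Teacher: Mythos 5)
Your proposal follows essentially the same route as the paper's proof in Appendix~\ref{appendix:proof_wass_CSR_ADM}: the same three-term $W_1$ triangle-inequality decomposition into early-stopping, initialization, and score-estimation errors, the same $\exp(-K)$ contraction for the initialization term and light-tail control of the $k_0$ term, and the same reduction of the score term---via the Girsanov/TV-to-KL chain, the bound $\mathcal{R}(\varphi)\le 2\mathcal{R}_{\star}(\varphi)$, and a ReLU approximation-plus-covering-number excess-risk analysis over both asynchronous time windows (the paper's Lemmas~\ref{lemma::covering number S}, \ref{lemma:approx_prob}, and \ref{proof_Rs})---with $\mathcal{T}(\mathbf{s}^{\star}_{\mathrm{c}},\mathbf{a}^{\star})$ entering exactly as you describe, as a concentrability-type coefficient relating the queried pair to the training distribution. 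The approach and the resulting rate match the paper's argument step for step.
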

\begin{proof}
    See Appendix~\ref{appendix:proof_wass_CSR_ADM}.
\end{proof}
As $n\rightarrow\infty$, the distribution estimation measured by the Wasserstein-1 distance converges, i.e., $$\mathbb{E}_{\left\{\mathbf{o}_t, \mathbf{a}_t, r_{t+1}, \mathbf{o}_{t+1}\right\}}\!\!\left[W_1(P(\hat{\mathbf{x}}^0|\hat{\mathbf{s}}_{t}, \mathbf{a}_t), \hat{P}({\mathbf{x}}^{k_0}|\hat{\mathbf{s}}_{t}, \mathbf{a}_t))\right] \rightarrow 0,$$ showing the effective distribution estimation of ADM.

\textit{Step 4: Wasserstein value difference bound under ADM:}
The bisimulation metric learning can be achieved, e.g., by an RNN, whose convergence rate of $ \mathcal{E}_\zeta$ is $\mathcal{O}\left(n^{-\frac{2p_R}{2p_R +w_s+1}}(\log n)^6\right)$ with the model size $p_R$ \cite{kohler2023rate}. Substituting the approximation error in Theorem~\ref{thm:wass_CSR_ADM} with the respective error terms, $\mathcal{E}_\phi$ and $\mathcal{E}_\theta$, introduced in Theorems~\ref{thm:valboundmodelerror}, we establish the theoretical guarantee of CaDiff in P$^2$OMDPs, as follows.

\begin{theorem}[Value difference bound with ADM]
\label{thm:bound_CSR_ADM}
Consider the conditions in Theorems~\ref{thm:valboundmodelerror}~and~\ref{thm:wass_CSR_ADM}. Let $c_b=\max\{\frac{19}{2},\frac{b+2}{2}\}$. $\forall \mathbf{s} \in \mathcal{S}$, 
\begin{align}\label{eq:final}
    &\mathbb{E}\! \left[\left| V^\pi(\mathbf{s}_{\mathrm{c}}) \!-\! V^\pi(\hat{F}( {\mathbf{o}})) \right|\right]
    \!\leq \!2\widehat{\epsilon} 
    \notag\\
    &
    + \frac{1}{C_{\mathrm{r}}(1 \!- \!\gamma)} \Big(\mathcal{O}\left(n^{-2p_R/({2p_R +w_s+1})}(\log n)^6\right)\\
    &+\frac{2C_{\mathrm{r}}+2C_{\mathrm{s}}}{1 - C_{\mathrm{s}}-C_{\mathrm{r}}} \mathcal{T}(\mathbf{s}^{\star}_{\mathrm{c}}, \mathbf{a}^{\star})\mathcal{O}\left(n^{-{b}/({2w_s+w_a+2b})} (\log n)^{c_b} \right)\Big).\notag
\end{align}
\end{theorem}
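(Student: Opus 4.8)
The plan is to obtain Theorem~\ref{thm:bound_CSR_ADM} by \emph{composition}: the statement is the expected version of the deterministic bound in Theorem~\ref{thm:valboundmodelerror}, in which each of the three model-error terms is replaced by its statistical convergence rate. First I would start from the inequality of Theorem~\ref{thm:valboundmodelerror}, which already holds pointwise for every observation--state pair under the conditions $C_{\mathrm{r}}\in(0,1)$, $C_{\mathrm{s}}\in[\gamma,1)$, $C_{\mathrm{r}}+C_{\mathrm{s}}<1$, and $p=1$ (exactly the hypotheses inherited here), and take the expectation $\mathbb{E}_{\{\mathbf{o}_t,\mathbf{a}_t,r_{t+1},\mathbf{o}_{t+1}\}}[\,\cdot\,]$ over the sampled transitions on both sides. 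By linearity of expectation this reduces the problem to bounding $\mathbb{E}[\mathcal{E}_\zeta]$, $\mathbb{E}[\mathcal{E}_\phi]$, and $\mathbb{E}[\mathcal{E}_\theta]$ separately, with the constant factors $\tfrac{1}{C_{\mathrm{r}}(1-\gamma)}$, $\tfrac{2C_{\mathrm{r}}}{1-C_{\mathrm{s}}-C_{\mathrm{r}}}$, and $\tfrac{2C_{\mathrm{s}}}{1-C_{\mathrm{s}}-C_{\mathrm{r}}}$ passing through unchanged, and with the aggregation radius $2\widehat{\epsilon}$ retained as the irreducible, non-vanishing contribution.

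Next I would substitute the encoder rate. Since $\mathcal{E}_\zeta=\|\widehat{d}_\zeta-\widehat{d}\|_\infty$ is precisely the approximation error of the RNN $\zeta$ fitting the noised causal state, its expectation is controlled by the ReLU-network rate $\mathcal{O}(n^{-2p_R/(2p_R+w_s+1)}(\log n)^6)$ of \cite{kohler2023rate}, yielding the first term inside the parentheses of \eqref{eq:final}. The heart of the argument is the substitution for $\mathcal{E}_\phi$ and $\mathcal{E}_\theta$. Both are Wasserstein-1 distances between a distribution conditioned on the true causal state and one conditioned on the encoder output, and both are produced by the \emph{same} ADM architecture (the reward-denoising model $\phi$ and the observation-denoising model $\theta$). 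I would therefore identify each with the conditional distribution-estimation error of Theorem~\ref{thm:wass_CSR_ADM}, where the denoised target $P(\hat{\mathbf{x}}^0\mid\cdot)$ plays the role of the causal-state-conditioned law and the ADM output $\hat{P}(\mathbf{x}^{k_0}\mid\cdot)$ the role of its estimate, specializing $\mathbf{x}^0$ to $r_{t+1}$ for $\mathcal{E}_\phi$ and to $\hat{\mathbf{s}}_{t+1}$ for $\mathcal{E}_\theta$. Because the two terms then carry a common rate, I would add them, collapsing $\tfrac{2C_{\mathrm{r}}}{1-C_{\mathrm{s}}-C_{\mathrm{r}}}\mathcal{E}_\phi+\tfrac{2C_{\mathrm{s}}}{1-C_{\mathrm{s}}-C_{\mathrm{r}}}\mathcal{E}_\theta$ into the single factor $\tfrac{2C_{\mathrm{r}}+2C_{\mathrm{s}}}{1-C_{\mathrm{s}}-C_{\mathrm{r}}}$ multiplying $\mathcal{T}(\mathbf{s}^{\star}_{\mathrm{c}},\mathbf{a}^{\star})\,\mathcal{O}(n^{-b/(2w_s+w_a+2b)}(\log n)^{c_b})$ with $c_b=\max\{19/2,(b+2)/2\}$, which is exactly the second term of \eqref{eq:final}.

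The step I expect to be the main obstacle is precisely this identification, for two reasons. First, Theorem~\ref{thm:wass_CSR_ADM} ties the input dimension to the state through the exponent $b/(2w_s+w_a+2b)$, whereas $\mathcal{E}_\phi$ estimates a \emph{scalar} reward; I would need to check that the reward case still obeys the light-tail Assumption~\ref{assump:distri_true} and verify that its faster (smaller input-dimension) rate is dominated by the state rate, so that using the common state-driven exponent is a legitimate upper bound for both terms. Second, one must confirm that the Wasserstein-1 distances defining $\mathcal{E}_\phi$ and $\mathcal{E}_\theta$ in Theorem~\ref{thm:valboundmodelerror} genuinely coincide with the ADM estimation error of Theorem~\ref{thm:wass_CSR_ADM}, if necessary through a triangle inequality that separates the irreducible encoder mismatch (already absorbed into $\mathcal{E}_\zeta$ and the aggregation radius $2\widehat{\epsilon}$) from the diffusion estimation error, so that no double counting occurs. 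Once these compatibility checks are settled, the remaining manipulations are a routine collection of constants, and letting $n\to\infty$ recovers the convergence claim that $\mathbb{E}[\,|V^\pi(\mathbf{s}_{\mathrm{c}})-V^\pi(\hat{F}(\mathbf{o}))|\,]$ approaches the aggregation floor of order $\widehat{\epsilon}$.
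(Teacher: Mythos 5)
Your proposal matches the paper's own argument: the paper establishes Theorem~\ref{thm:bound_CSR_ADM} exactly by taking the pointwise bound of Theorem~\ref{thm:valboundmodelerror}, substituting the RNN rate $\mathcal{O}\bigl(n^{-2p_R/(2p_R+w_s+1)}(\log n)^6\bigr)$ of \cite{kohler2023rate} for $\mathcal{E}_\zeta$ and the ADM rate of Theorem~\ref{thm:wass_CSR_ADM} for both $\mathcal{E}_\phi$ and $\mathcal{E}_\theta$, and then merging the two coefficients into $\tfrac{2C_{\mathrm{r}}+2C_{\mathrm{s}}}{1-C_{\mathrm{s}}-C_{\mathrm{r}}}$, with the expectation passing through as you describe. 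Your additional compatibility checks (scalar reward dimension, no double counting) are refinements the paper glosses over, but they do not change the route.
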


As $n \to \infty$, the approximate value function for the estimated causal state $V^\pi(\hat{F}( {\mathbf{o}}))$ in (\ref{eq:final}) converges to within $2\hat{\epsilon}$-neighborhood of the ground-truth causal state $V^\pi(\mathbf{s}_{\mathrm{c}})$.
The parameters $(C_{\mathrm{r}}, C_{\mathrm{s}})$ ensure a trade-off between the reward approximation error and the state transition model error, while $\frac{1}{C_{\mathrm{r}}(1 - \gamma)}$ and $\frac{2C_{\mathrm{r}}+2C_{\mathrm{s}}}{1 - C_{\mathrm{s}}-C_{\mathrm{r}}}$ balance the approximation error of the bisimulation and that of the noisy distribution.


\textbf{Computational cost:}
We evaluate the additional computational cost of CaDiff on top of RL algorithms. 
From \cite{chen2024accelerating}, the computational cost of a DM is $\widetilde{\mathcal{O}}(\mathrm{poly} \log d_i)$, where $d_i$ is the dimension of the input data. 
{\color{black}The loss function of ADM includes two terms to be computed, i.e., \eqref{equ::empirical loss}, thereby doubling the computational cost of a standard DM.} 
The additional computational cost of CSR is $\widetilde{\mathcal{O}}(\mathrm{poly} \log \max\{|\mathcal{A}|, |\mathcal{O}|\})$ in CaDiff,
as empirically evaluated in Section~\ref{appendix:cost}.

\section{Experiments}\label{sec:result}
To evaluate CaDiff in classical control tasks with high degrees of freedom (DoFs), we consider Roboschool tasks under standard POMDPs \cite{standard_POMDP}. There are six environments, i.e., \{Hopper, Ant, Walker\}-\{P, V\}, where ``-P" stands for observing positions and angles only, and ``-V" stands for observing velocities only.
In the no-velocities (i.e., ``-P") settings, velocity features are excluded from the raw observations; in the velocities-only (i.e., ``-V") settings, the observations contain only velocity-related information. Details for each environment are summarized in Table~\ref{table:env} with a maximum of 1,000 steps.
{\color{black}In all experiments, both the rewards and observations are perturbed by zero-mean Gaussian noise, whose variance is scaled by a noise scale to control the perturbation intensity. Unless otherwise specified, the noise intensity is set to $\delta=2$ to evaluate the impact of the noise.}
Since CaDiff can accommodate any RL algorithm, we consider SAC, a typical RL algorithm. 
We evaluate all experiments with $600,000$ iterations and smooth each return. 
The hyperparameters are summarized in Table~\ref{table:para_all}.

\begin{table}[t]
\centering
\vspace{-0.3cm}
\caption{Information of environments in this paper}
\vspace{-0.3cm}
\label{table:env}
\begin{tabular}{l|c|c}
\hline
Name                 & Dimension of observation space & DoF \\ \hline
RoboschoolAnt        & 28                       & 8   \\
RoboschoolAnt-V      & 11                       & 8   \\
RoboschoolAnt-P      & 17                       & 8   \\
RoboschoolHopper     & 15                       & 3   \\
RoboschoolHopper-V   & 6                        & 3   \\
RoboschoolHopper-P   & 9                        & 3   \\
RoboschoolWalker2d   & 22                       & 6   \\
RoboschoolWalker2d-V & 9                        & 6   \\
RoboschoolWalker2d-P & 13                       & 6   \\ \hline
\end{tabular}
\vspace{-0.2cm}
\end{table}

\begin{table}[t]
\centering
\caption{Hyperparameters for tasks}
\vspace{-0.2cm}
\label{table:para_all}
\begin{tabular}{l|l}
\hline
Description                                               & Value   \\ \hline
 Number of training iterates                               & 600    \\
 Size of replay memory                                 & $10^6$  \\
 Number of samples for each update                     & 64      \\
 Discount factor                                           & 0.99    \\
 Fraction of updating the target network per gradient step & 0.005   \\ 
 Learning rate for the policy and value networks               & 0.0003  \\ 
 Learning rate for the entropy coefficient in SAC          & 0.0003  \\ 
 Target entropy in SAC                                     & 0.2     \\
 Learning rate of the asynchronous diffusion model   & 0.0003  \\ 
 Learning rate of the bisimulation metric learning   & 0.0003  \\ 
 Network for the asynchronous diffusion model        & UNet    \\ 
 Total diffusion step                            & 500     \\ 
 Beta schedule                                   & linear  \\ 
 Noise intensity of observation and reward           & 2       \\ 
 \hline
\end{tabular}

\vspace{-0.2cm}
\end{table}

    
\begin{figure}[ht]
    \centering
    \subfigure[Ant-P]{
        \centering
        \includegraphics[width=0.225\textwidth]{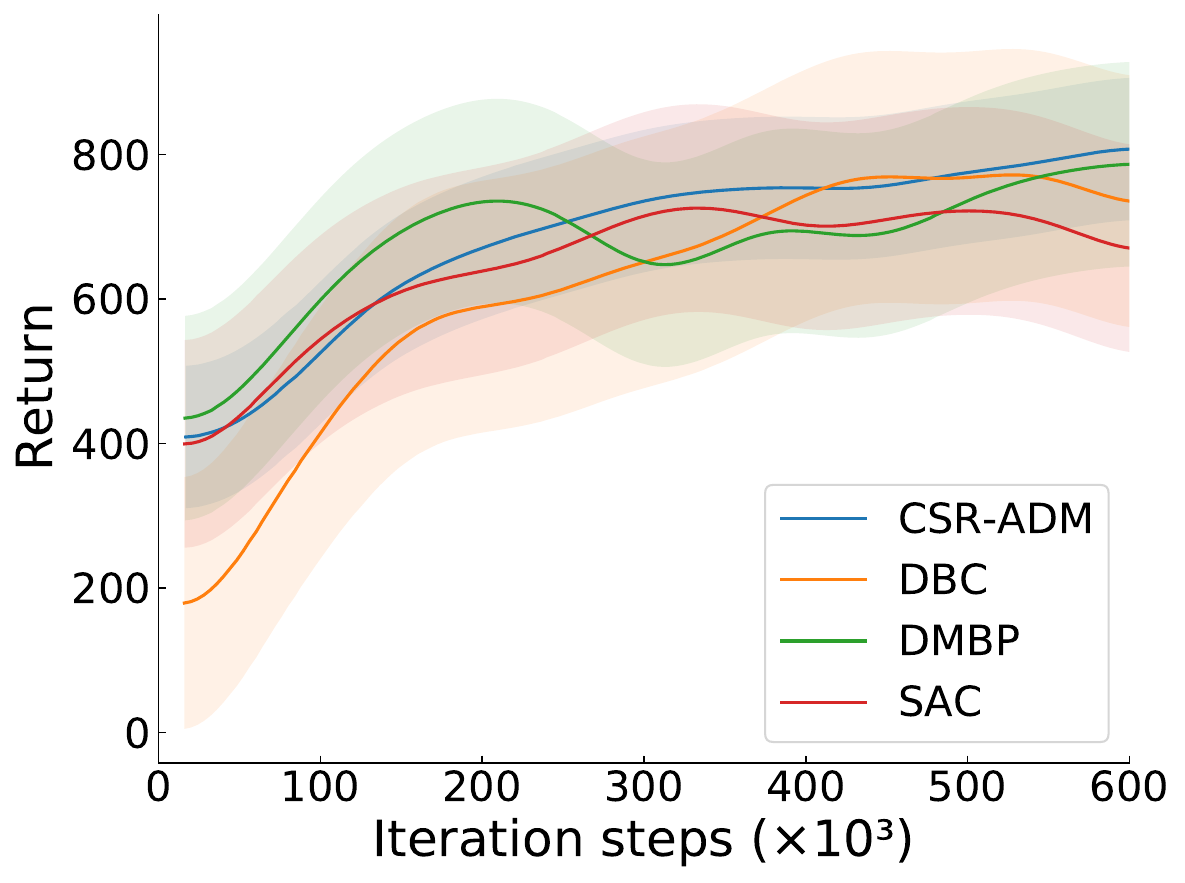}
    }
    \subfigure[Ant-V]{
        \centering
        \includegraphics[width=0.225\textwidth]{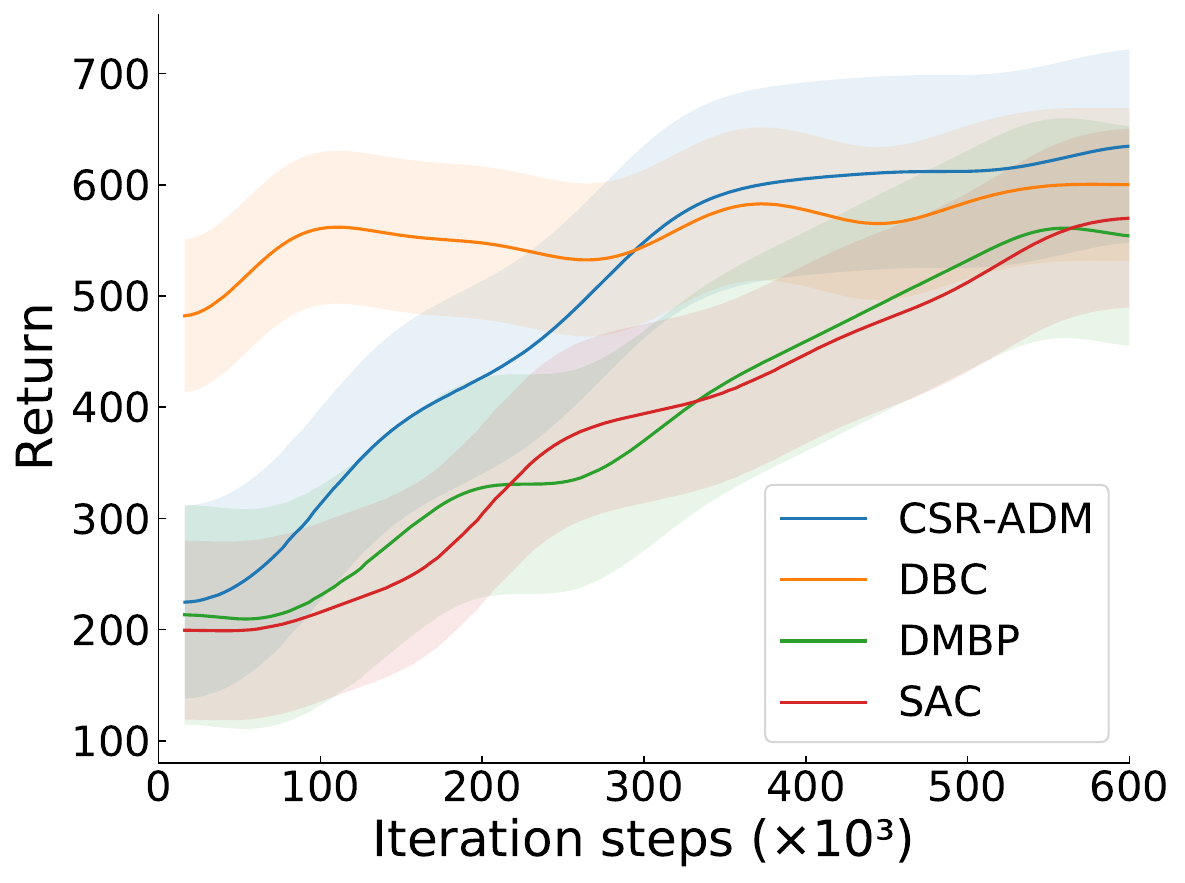}
    }
    \subfigure[Hopper-P]{
        \centering
        \includegraphics[width=0.225\textwidth]{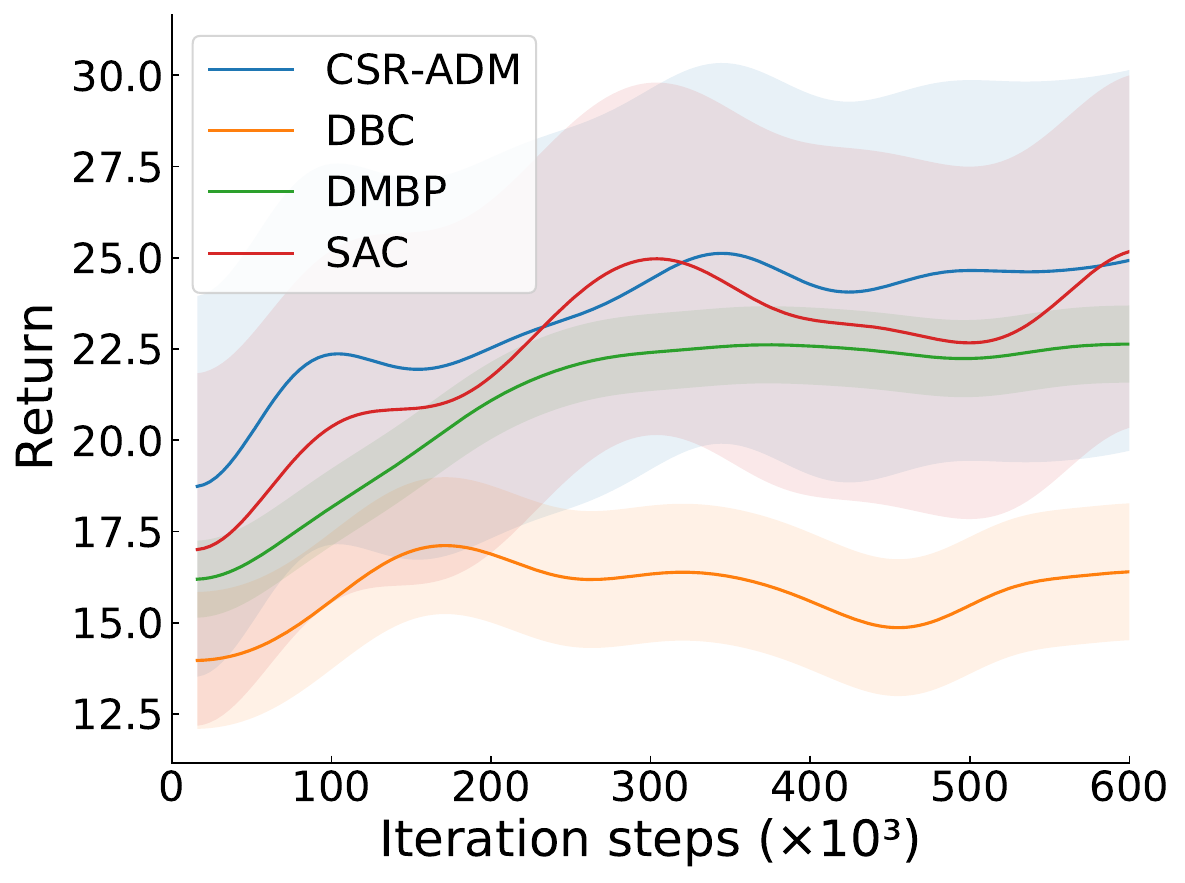}
    }
    \subfigure[Hopper-V]{
        \centering
        \includegraphics[width=0.225\textwidth]{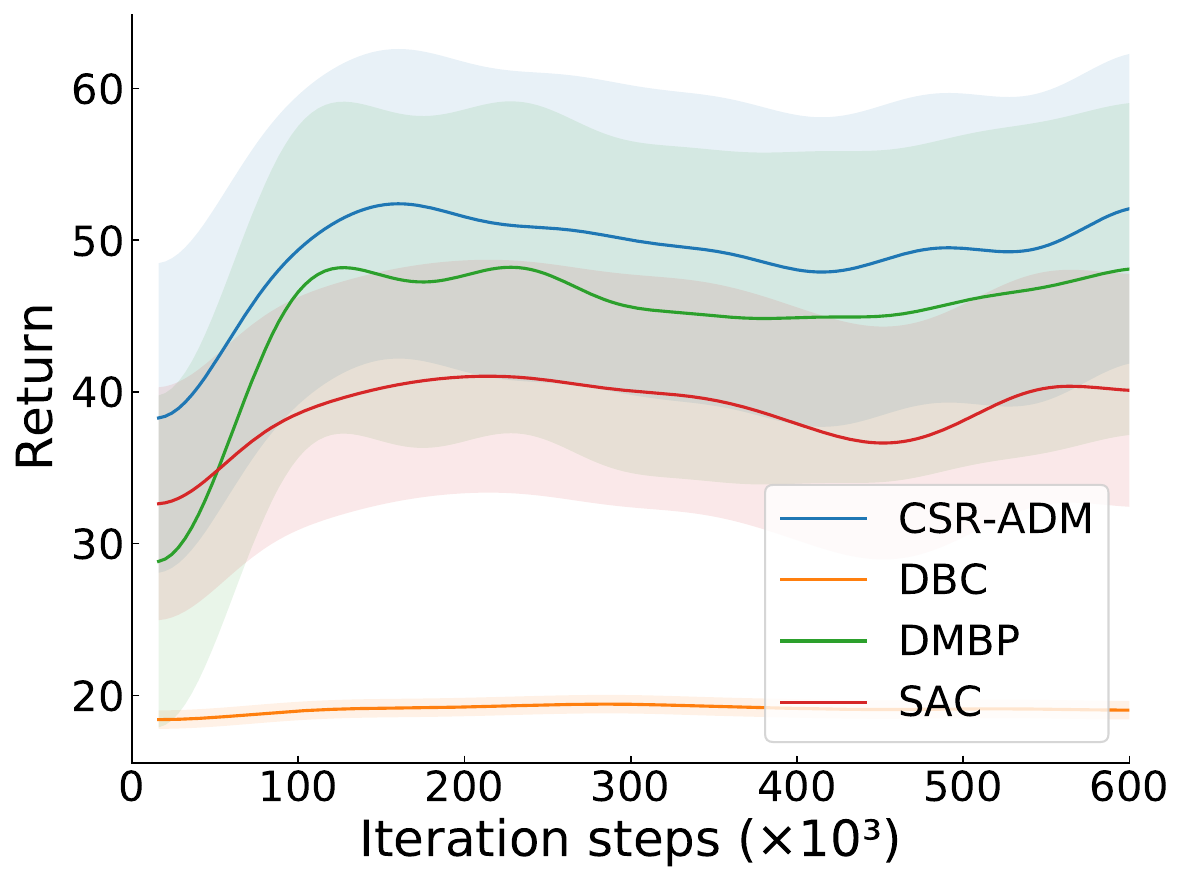}
    }
    \subfigure[Walker-P]{
        \centering
        \includegraphics[width=0.225\textwidth]{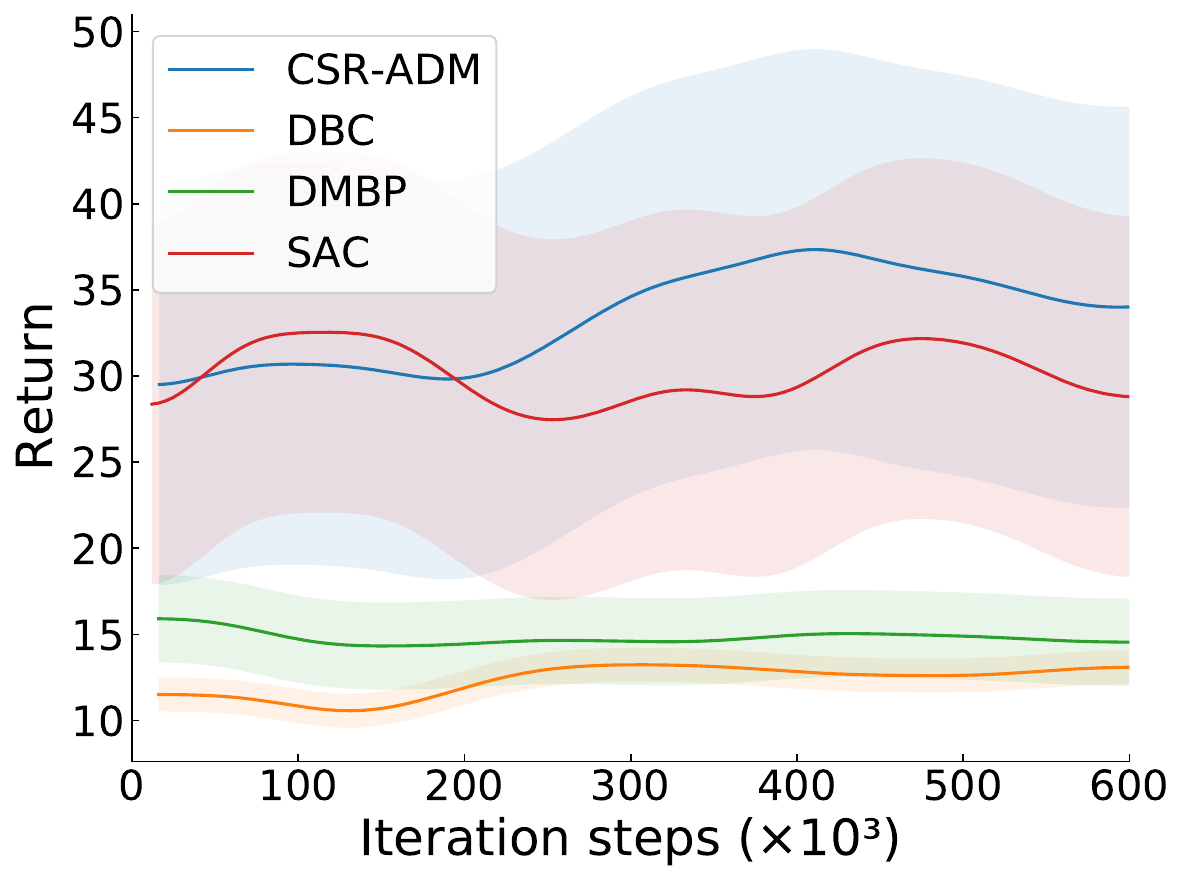}
    }
    \subfigure[Walker-V]{
        \centering
        \includegraphics[width=0.225\textwidth]{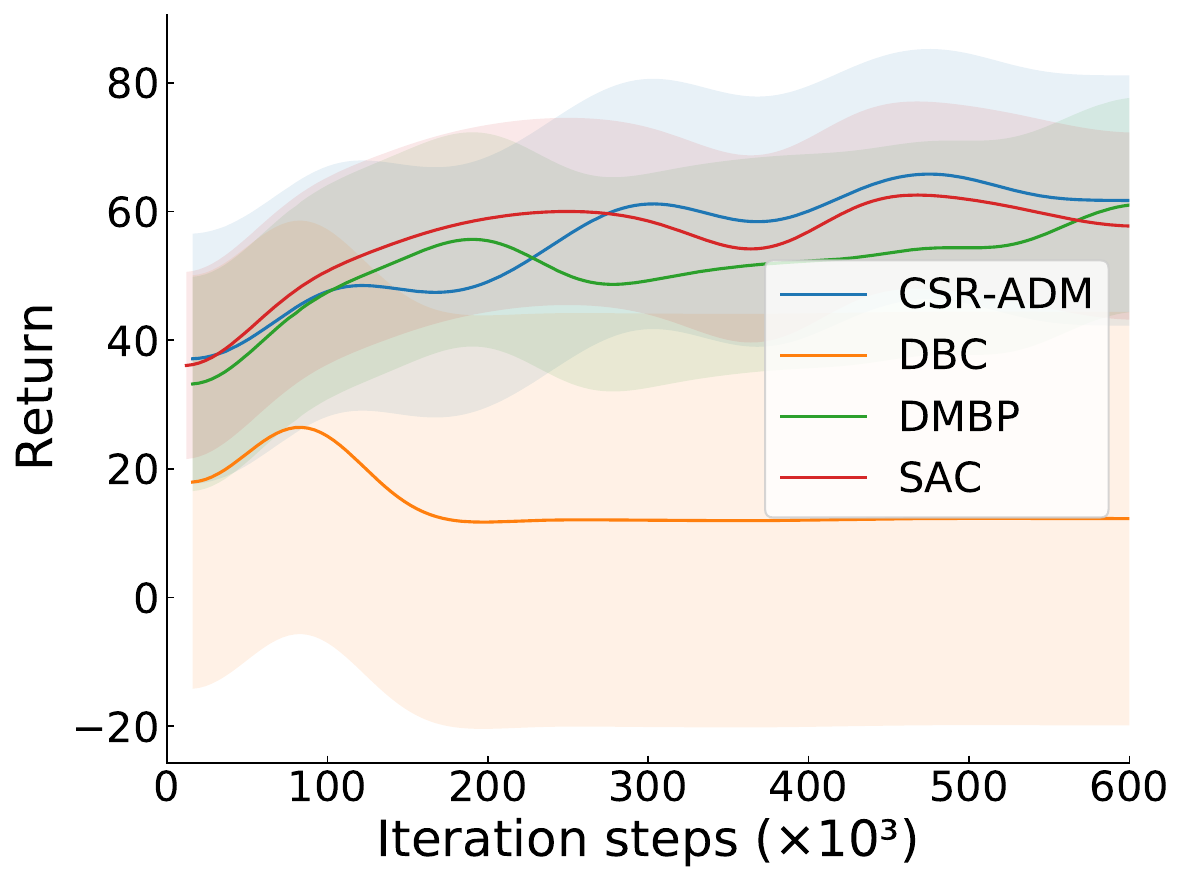}
    }
    \vspace{-0.2cm}
    \caption{Comparison of CaDiff and baselines on six environments.}
    \vspace{-0.4cm}
    \label{fig:compare}
\end{figure}

\subsubsection{Comparison with Baselines}\label{sec. baseline comparison}

Compared to classical SAC, DMBP \cite{DM_offline_3} (only denoising), and DBC \cite{bisimulation} (only considering bisimulation), we demonstrate the effectiveness and scalability of CaDiff. 
In Fig.~\ref{fig:compare}, CaDiff demonstrates superior performance across six environments. Compared to DMBP in Walker-V or DBC in Hopper-P, CaDiff exhibits superior generalization capabilities. Notably, CaDiff improves returns by at least 14.18\%, 29.42\%, and 136.63\% across the six environments when compared to SAC, DMBP, and DBC, respectively.
CaDiff achieves better performance in the early stages of training in five out of six environments.

\subsubsection{Ablation Study}\label{appendix:ablation}
    
\begin{figure}[ht]
    \centering
    \subfigure[Ant-P]{
        \centering
        \includegraphics[width=0.225\textwidth]{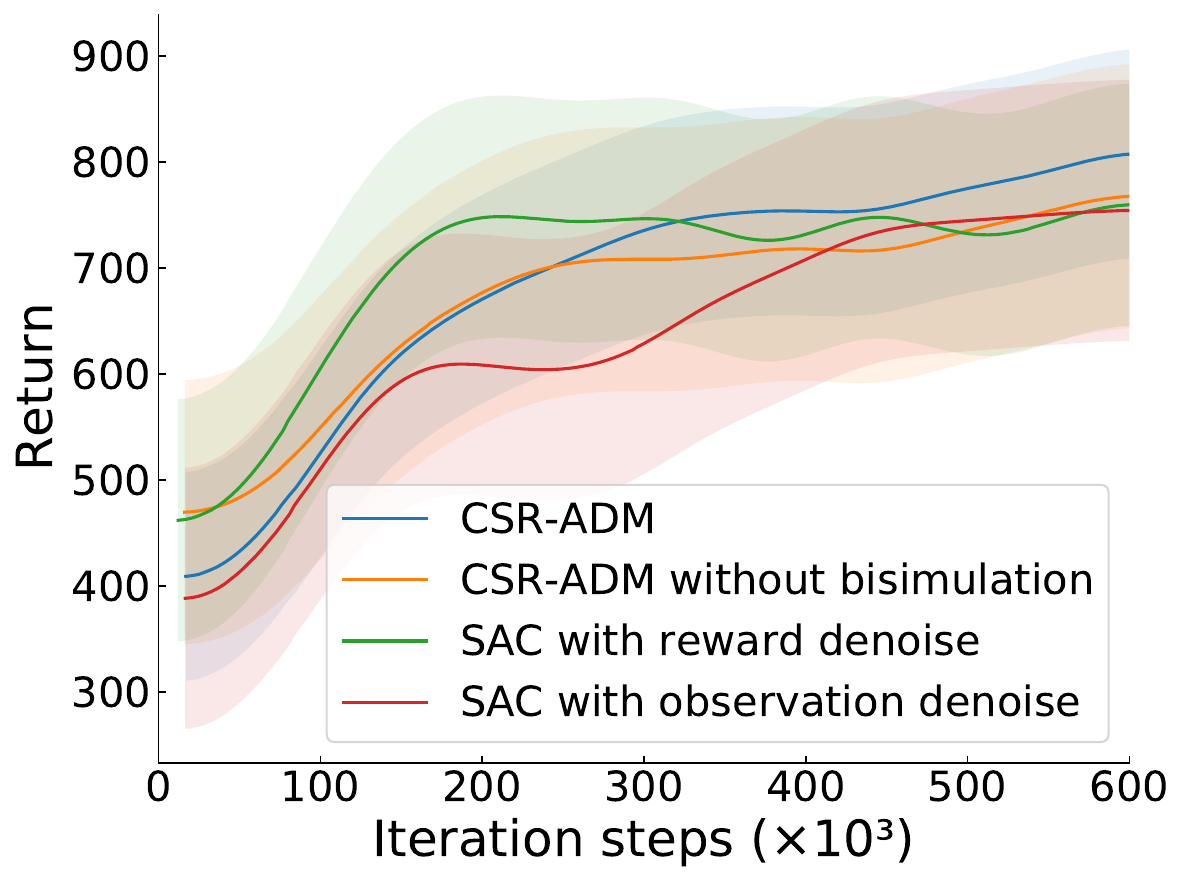}
    }
    \subfigure[Ant-V]{
        \centering
        \includegraphics[width=0.225\textwidth]{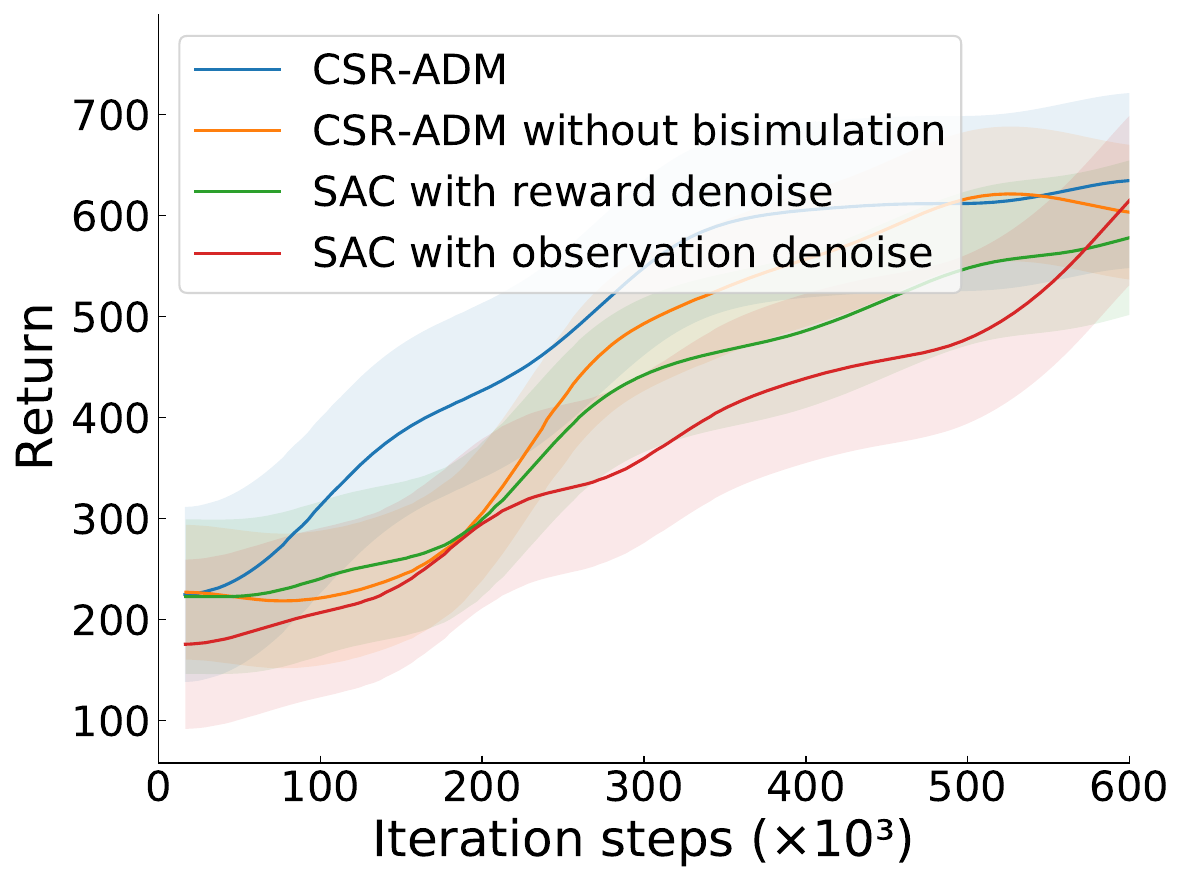}
    }
    \subfigure[Hopper-P]{
        \centering
        \includegraphics[width=0.225\textwidth]{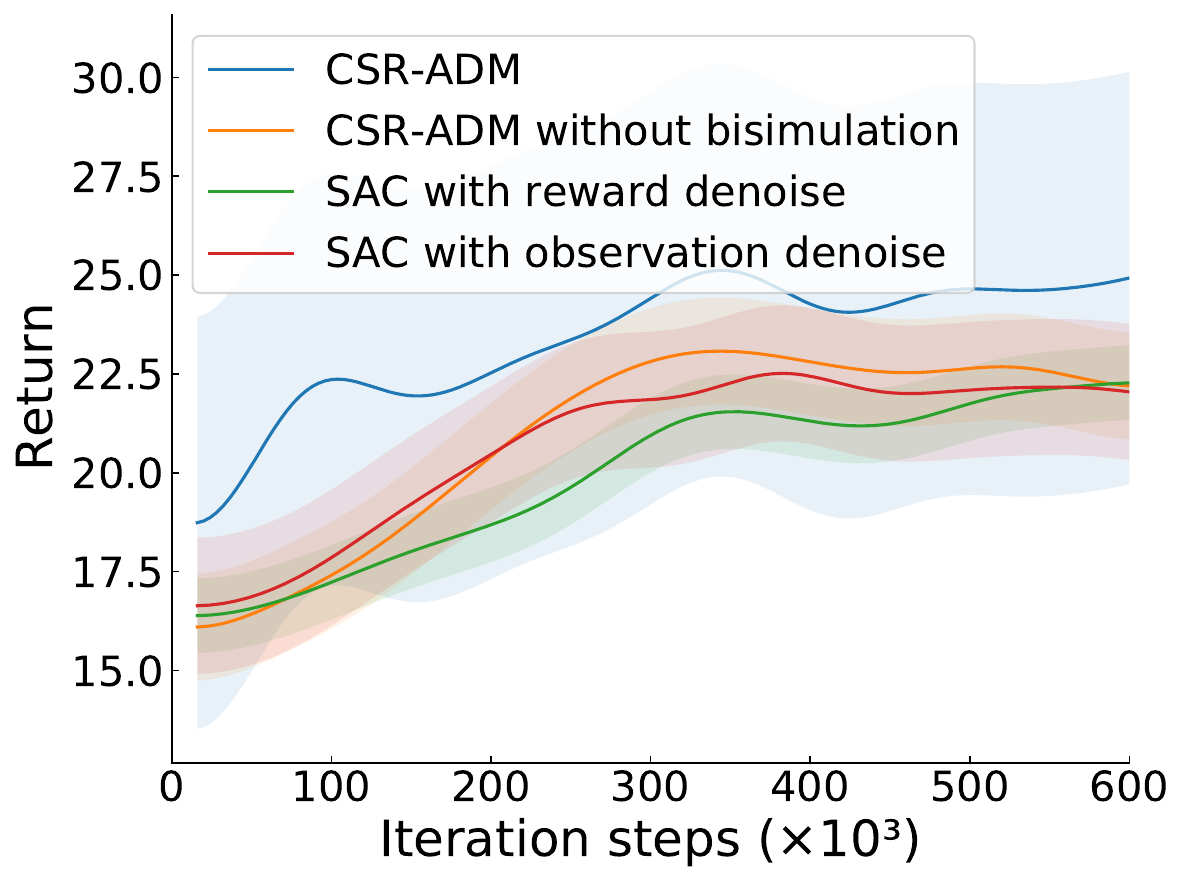}
    }
    \subfigure[Hopper-V]{
        \centering
        \includegraphics[width=0.225\textwidth]{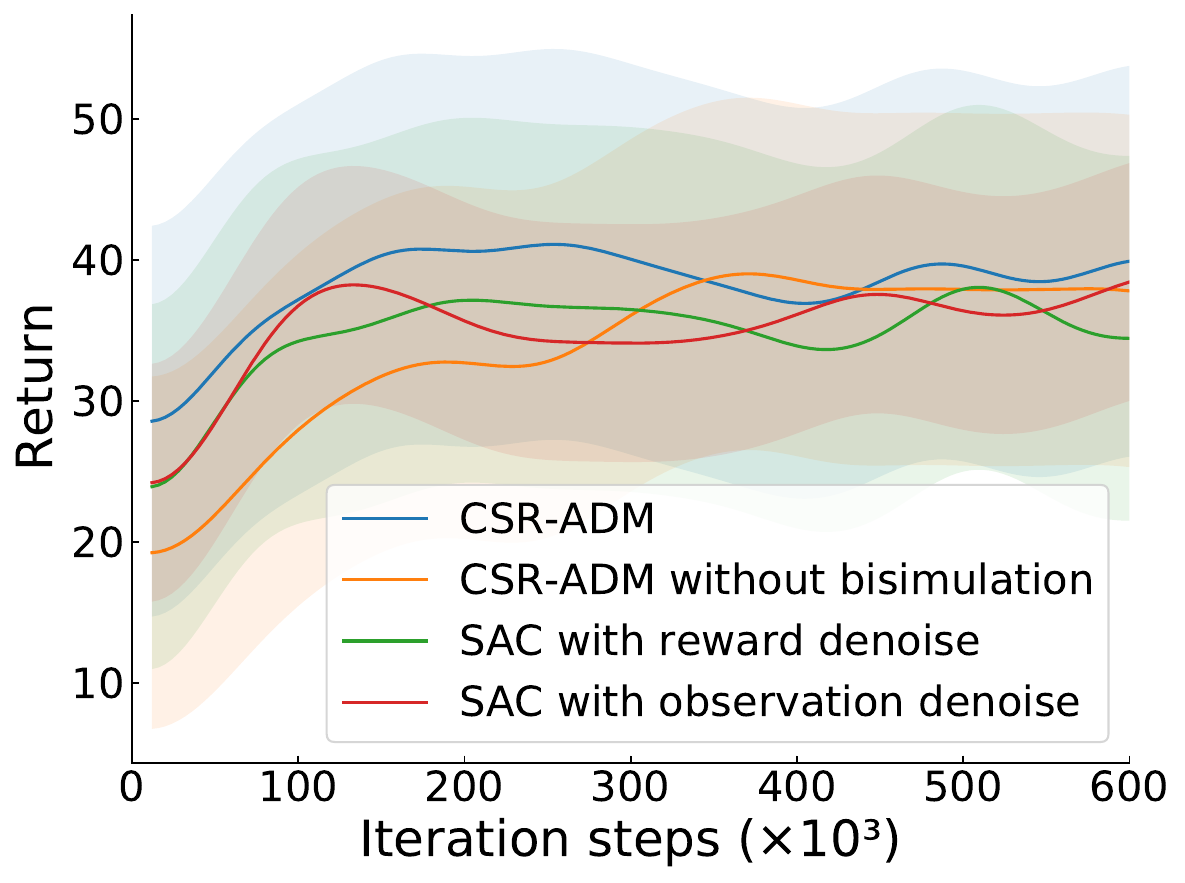}
    }
    \subfigure[Walker-P]{
        \centering
        \includegraphics[width=0.225\textwidth]{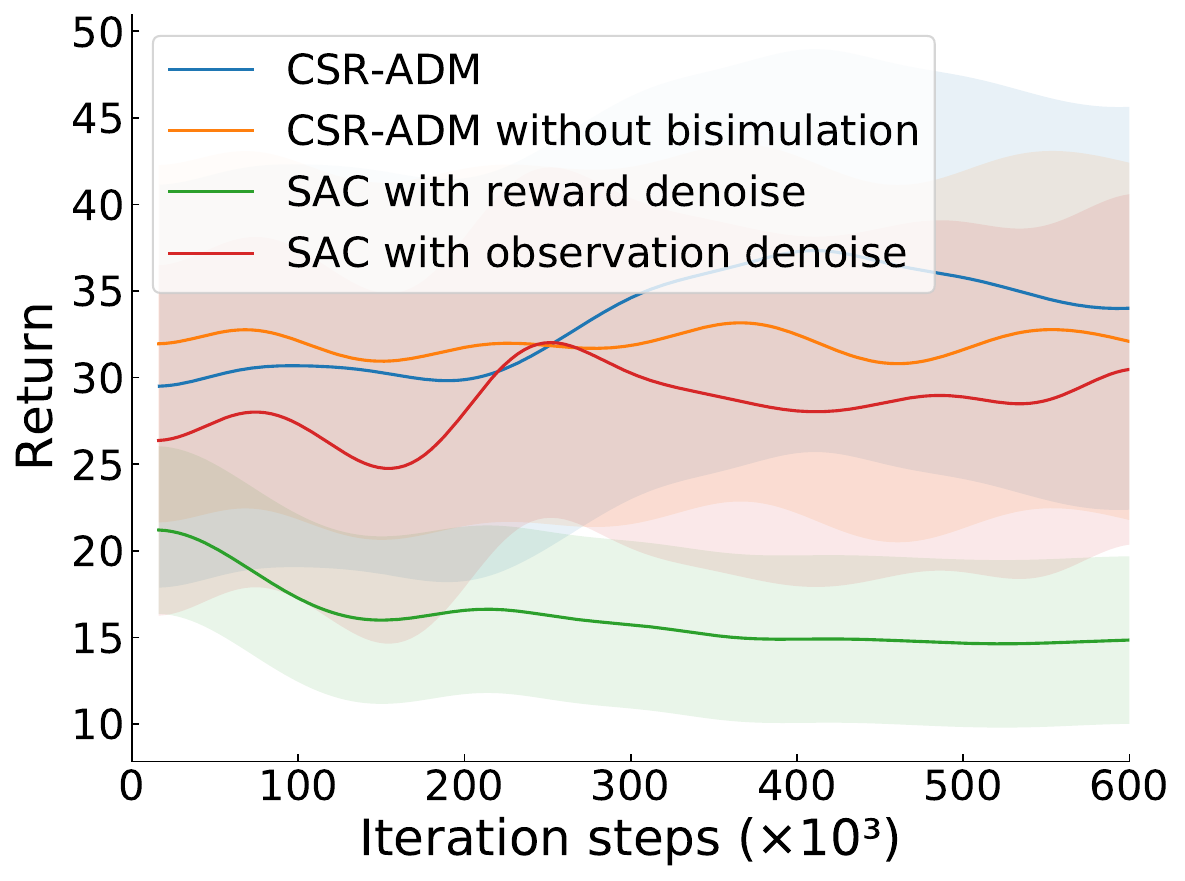}
    }
    \subfigure[Walker-V]{
        \centering
        \includegraphics[width=0.225\textwidth]{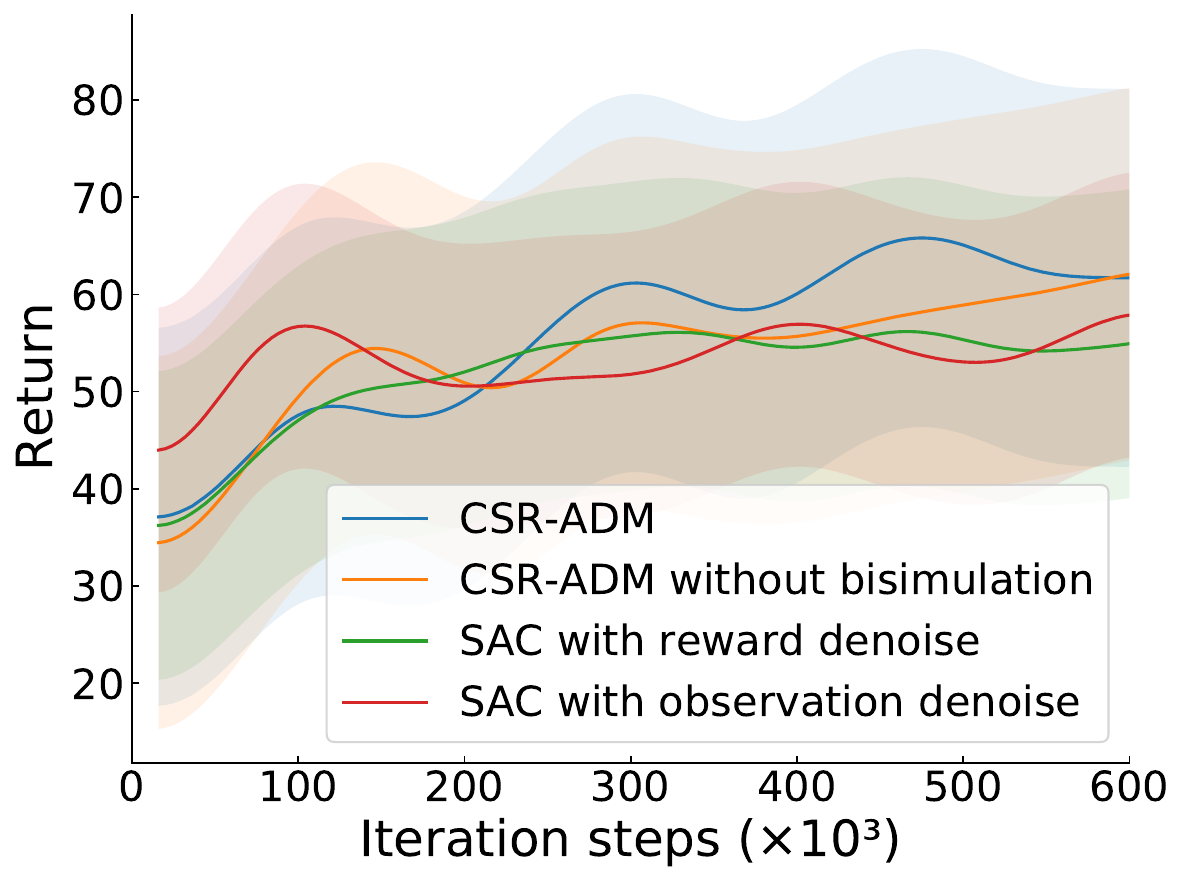}
    }
    \vspace{-0.2cm}
    \caption{Ablation studies of CaDiff on six environments.}
    \vspace{-0.4cm}
    \label{fig:ablation}
\end{figure}

We proceed to disable individual modules of CaDiff: bisimulation, reward denoising, and observation denoising. 
Fig.~\ref{fig:ablation} presents the ablation study of CaDiff across six environments. By comparing these three cases with our approach, it is evident that both bisimulation and ADM contribute to the return. In most environments, reward denoising is much less effective than state denoising. This can be attributed to the higher observation dimension compared to the reward, resulting in a greater impact of its noise. Additionally, the environments of Hopper-V and Walker-P exhibit higher sensitivity to noise (see Table~\ref{tab:key_para}).

\subsubsection{Influence on Key Parameters}
We evaluate CaDiff across three noise scales with varying intensities in six environments in Table~\ref{tab:key_para}. Bold numbers indicate the optimal results for each environment and noise scale. The conclusion drawn across the six environments is that for a noise scale of 0.1, the optimal noise intensity is $\delta=1$; for a noise scale of 0.5, the optimal noise intensity is $\delta=2$. {\color{black}When the noise scale increases to 1, the optimal noise intensity becomes unstable and fluctuates between 1 and 3, indicating that a larger noise intensity may be required to ensure reliable estimation under higher noise scales.}
As the noise scale increases from 0.1 to 0.5, half of the environments exhibit relatively stable returns. 
When the noise scale rises from 0.5 to 1, the returns of CaDiff across all six environments show no significant change. In this sense, CaDiff can maintain stable performance under high noise intensity. 
\begin{table}[t]
\centering
\setlength\tabcolsep{1mm}
\footnotesize
\caption{Returns of noise intensities with various noise scales.}
\vspace{-0.2cm}
\label{tab:key_para}
\begin{tabular}{cccccccc}
\hline
Noise scale          & $\delta$ & Ant-P & Ant-V & Hopper-P & Hopper-V & Walker-P & Walker-V \\ \hline
\multirow{3}{*}{0.1} & $1$  & \textbf{790.8} & \textbf{573.8} & \textbf{214} & \textbf{183} & \textbf{285.1} & \textbf{65.44} \\ \cline{2-8} 
                     & $2$  & 764.2 & 499.1 & 153.3 & 161 & 221.3 & 52.68 \\ \cline{2-8} 
                     & $3$  & 694 & 466 & 122.4 & 128.7 & 215.4 & 58.05 \\ \hline
\multirow{3}{*}{0.5}   & $1$  & 727.7 & 465.4 & 23.87 & 45.58 & 31.04 & 58.15 \\ \cline{2-8} 
                     & $2$  & \textbf{789.3} & \textbf{615.4} & \textbf{24.18} & \textbf{50.17} & \textbf{34.01} & \textbf{65.24} \\ \cline{2-8} 
                     & $3$  & 670.9 & 560.3 & 21.94 & 43.3 & 31.23 & 61.03 \\ \hline
\multirow{3}{*}{1}   & $1$  & 569.2 & \textbf{538.2} & 24.93 & 45.26 & 35.25 & 50.45 \\ \cline{2-8} 
                     & $2$  & 597.3 & 533.8 & \textbf{26.2} & 48.16 & \textbf{35.8} & 53.36 \\ \cline{2-8} 
                     & $3$  & \textbf{648.3} & 528.4 & 25.53 & \textbf{48.96} & 33.3 & \textbf{62.26} \\ \hline
\end{tabular}
    \vspace{-0.4cm}
\end{table}

\subsubsection{Computational Cost}\label{appendix:cost}
We provide a quantitative analysis of the additional computational cost introduced by our denoising CSR using the FLOPs metric.
Experiments are conducted on PyTorch 2.0.0 with an NVIDIA RTX 4090 24GB GPU, as shown in Table~\ref{tab:flops}. 
We also report the average runtime over 200 iterations across six environments. The runtime varies around 200 ms per update on average.
Compared to SAC, the complexity of our method is increased 
by less than 7 GFLOPs, lower than the typical overhead of DMs; see \cite{yan2024diffusion}. 
%

\begin{table}[t]
\centering
\footnotesize
\caption{{\color{black}Computational cost in terms of FLOPs and runtime (ms).}}
\vspace{-0.2cm}
\label{tab:flops}
\begin{tabular}{cccccccc}
\hline
           &            & Ant & Hopper & Walker \\ \hline
\multirow{3}{*}{FLOPs on ``-P''} 
           & GFLOPs of $\phi$   & 3.3354 & 3.3350 & 3.3352 \\ \cline{2-5}
           & GFLOPs of $\theta$ & 3.3365 & 3.3355 & 3.3360 \\ \cline{2-5}
           & GFLOPs of $\zeta$   & 0.2785 & 0.1474 & 0.2129 \\ \hline
\multirow{3}{*}{FLOPs on ``-V''} 
           & GFLOPs of $\phi$   & 3.3352 & 3.3349 & 3.3351 \\ \cline{2-5}
           & GFLOPs of $\theta$ & 3.3359 & 3.3352 & 3.3356 \\ \cline{2-5}
           & GFLOPs of $\zeta$   & 0.1802 & 0.9830  & 0.1474 \\ \hline
\multirow{4}{*}{Runtime on ``-P''} 
           & CaDiff & 208.95 & 219.06 & 221.69 \\ \cline{2-5}
           & SAC    & 25.03  & 25.06  & 24.35  \\ \cline{2-5}
           & DMBP   & 86.64  & 87.67  & 87.11  \\ \cline{2-5}
           & DBC    & 40.29  & 41.03  & 41.27  \\ \hline
\multirow{4}{*}{Runtime on ``-V''} 
           & CaDiff & 209.09 & 211.93 & 219.99 \\ \cline{2-5}
           & SAC    & 24.60  & 24.74  & 24.17  \\ \cline{2-5}
           & DMBP   & 87.04  & 88.26  & 87.56  \\ \cline{2-5}
           & DBC    & 41.95  & 41.41  & 41.09  \\ \hline
\end{tabular}
\vspace{-0.4cm}
\end{table}

\section{Conclusion}\label{sec:con}
This paper introduces \textit{CaDiff} that effectively addresses the challenges posed by high-dimensional and noisy inputs to RL in P$^2$OMDPs. Integrating the new ADM for denoising both rewards and observations with a new bisimulation metric, CaDiff captures essential CSRs, which are crucial for decision-making tasks. Our analysis provides rigorous guarantees for the approximation of value functions between noisy observation spaces and causal state spaces.
Experiments corroborate CaDiff's superiority, enhancing the performance and robustness of RL under various noise conditions. 
This research lays the groundwork for studies on CSR in noisy environments.

\appendix



\subsection{Proof of Remark~\ref{remark:rem1}}\label{appendix:remark}
We first prove that the update $d(\hat{\mathbf{s}}_i, \hat{\mathbf{s}}_j)$ satisfies the contraction property. Applying the Banach fixed-point theorem then ensures the existence and uniqueness of the resulting metric.
Start with $p=1$ under the Cauchy–Schwarz inequality as:
{\small\begin{align}\label{eq:app_4}
    &d(\hat{\mathbf{s}}_i, \hat{\mathbf{s}}_j) - d^\prime(\hat{\mathbf{s}}_i, \hat{\mathbf{s}}_j)\notag\\
    \le& \max\nolimits_{\mathbf{a}\in\mathcal{A}}\Big(C_{\mathrm{r}}W_1\left\|d-d'\right\|_\infty\left(P\left(r_{i+1}\mid \hat{\mathbf{s}}_{i}, \mathbf{a}\right), P\left(r_{j+1}\mid \hat{\mathbf{s}}_{j}, \mathbf{a}\right)\right)\notag\\
    &+C_{\mathrm{s}}W_1\left\|d-d'\right\|_\infty\left(P\left(\mathbf{s}'\mid \hat{\mathbf{s}}_{i}, \mathbf{a}\right), P\left(\mathbf{s}'\mid \hat{\mathbf{s}}_{j}, \mathbf{a}\right)\right)\Big)\notag\\
    \le& \left(C_{\mathrm{r}}+C_{\mathrm{s}}\right)\left\|d-d'\right\|_\infty, ~\forall(\hat{\mathbf{s}}_i, \hat{\mathbf{s}}_j) \in \mathcal{S}_{\mathrm{c}} \times \mathcal{S}_{\mathrm{c}}.
\end{align}}
For $C_{\mathrm{r}} + C_{\mathrm{s}} \in [0, 1)$, the Banach fixed-point theorem~\cite{bharucha1976fixed} guarantees the existence and uniqueness of the fixed point.

We turn to the deterministic setting, where both $P$ and $\pi$ are deterministic, implying $P$ being delta distribution.
As noted in Definition~\ref{def:wass-primal}, the Wasserstein distance satisfies $W_p(d)(\Delta(\hat{\mathbf{s}}_i), \Delta(\hat{\mathbf{s}}_j)) = d(\mathbf{s}_i, \mathbf{s}_j)$ for point masses, where $\Delta(\cdot)$ denotes the Dirac delta function.
Under the fact that
{\small{\begin{align}\label{eq:app_5}
    &d(\hat{\mathbf{s}}_i, \hat{\mathbf{s}}_j) - d^\prime(\hat{\mathbf{s}}_i, \hat{\mathbf{s}}_j)
    =\max\nolimits_{\mathbf{a}\in\mathcal{A}}\Big(C_{\mathrm{r}}\left(d\left(r_{i'}, r_{j'}\right)-d'\left(r_{i'}, r_{j'}\right)\right)\notag\\
    &+C_{\mathrm{s}}\left(d\left(r_{i'}, r_{j'}\right)-d'\left(r_{i'}, r_{j'}\right)\right)\Big)\notag
    \leq\left(C_{\mathrm{r}}+C_{\mathrm{s}}\right)\left\|d-d^\prime\right\|_\infty,
\end{align}}}
the convergence of the update $d^{(n+1)}(\hat{\mathbf{s}}_i, \hat{\mathbf{s}}_j)$ from $d^{(n)}(\hat{\mathbf{s}}_i, \hat{\mathbf{s}}_j)$ is ensured when the underlying POMDP is finite, $\forall\hat{\mathbf{s}}_i, \hat{\mathbf{s}}_j$.

\subsection{Proof of Theorem~\ref{thm:generalized-ct}}\label{appendix:proofs_state}
We prove (\ref{eq:generalized-ct}) by induction with the update rules:
{\small{\begin{align}
    V^{(t+1)}(\hat{\mathbf{s}}_i) &= \max_{\mathbf{a}\in\mathcal{A}}\Big(\int_{r\in \mathcal{R}}r\left(\hat{\mathbf{s}}_i,\mathbf{a}\right)P\left(r_{i+1}\mid \hat{\mathbf{s}}_{i}, \mathbf{a}\right)dr\notag\\
    &+ \gamma \int_{\mathbf{s}^\prime \in \mathcal{S}}P\left(\mathbf{s}'\mid \hat{\mathbf{s}}_{i}, \mathbf{a}\right)V^{(t)}(\mathbf{s}^\prime)d\mathbf{s}^\prime\Big)\\
    d^{(t+1)}(\hat{\mathbf{s}}_i, \hat{\mathbf{s}}_j) &= \max_{\mathbf{a}\in\mathcal{A}}(C_{\mathrm{r}}W_p(d^{(t)})\left(P\left(r_{i+1}\mid \hat{\mathbf{s}}_{i}, \mathbf{a}\right), P\left(r_{j+1}\mid \hat{\mathbf{s}}_{j}, \mathbf{a}\right)\right)\notag\\
    &\!\!\!\!\!\!\!\!\!\!\!\!\!\! +C_{\mathrm{s}} W_p(d^{(t)})\left(P\left(\mathbf{s}'\mid \hat{\mathbf{s}}_{i}, \mathbf{a}\right), P\left(\mathbf{s}'\mid \hat{\mathbf{s}}_{j}, \mathbf{a}\right)\right)).
\end{align}}}
We proceed to demonstrate that, $\forall t \in \mathbb{N}$,
{\small{\begin{align}\label{eq:base_Vn}
        C_{\mathrm{r}}\!\left|V^{(t)}(\hat{\mathbf{s}}_i) \!- \!V^{(t)}(\hat{\mathbf{s}}_j)\right| \!\leq\! d^{(t)}(\hat{\mathbf{s}}_i, \hat{\mathbf{s}}_j), \forall(\hat{\mathbf{s}}_i, \hat{\mathbf{s}}_j) \!\in\! \mathcal{S}_\mathrm{c} \!\times\! \mathcal{S}_\mathrm{c}.
\end{align}}}
Then, (\ref{eq:generalized-ct}) holds as $t \rightarrow \infty$.
With mathematical induction, the base case $t=0$ satisfies because
{\small{\begin{align*}
    \left|V^{(0)}(\hat{\mathbf{s}}_i) - V^{(0)}(\hat{\mathbf{s}}_j)\right|=d^{(0)}(\hat{\mathbf{s}}_i, \hat{\mathbf{s}}_j)=0, ~\forall (\hat{\mathbf{s}}_i, \hat{\mathbf{s}}_j) \in \mathcal{S}_\mathrm{c} \times \mathcal{S}_\mathrm{c}.
\end{align*}}}
Assume (\ref{eq:base_Vn}) holds $\forall t$. In the case of $t+1$, we have \eqref{eq:app_1} under triangle inequality.
\begin{figure*}[ht]
    \centering
    {\small{\begin{align}\label{eq:app_1}
        &C_{\mathrm{r}}|V^{(t+1)}(\hat{\mathbf{s}}_i) - V^{(t+1)}(\hat{\mathbf{s}}_j)|\\
        \le& C_{\mathrm{r}} \Big|\max_{\mathbf{a}\in\mathcal{A}}\Big(\int_{r\in \mathcal{R}}\!\!r\left(\hat{\mathbf{s}}_i,\mathbf{a}\right)P\left(r_{i+1}\mid \hat{\mathbf{s}}_{i}, \mathbf{a}\right)dr\!-\!\int_{r\in \mathcal{R}}\!\!r\left(\hat{\mathbf{s}}_j,\mathbf{a}\right)P\left(r_{j+1}\mid \hat{\mathbf{s}}_{j}, \mathbf{a}\right)dr\!+\! \gamma \int_{\mathbf{s}^\prime \in \mathcal{S}}\!\!\left(P\left(\mathbf{s}'\mid \hat{\mathbf{s}}_{i}, \mathbf{a}\right)\!-\!P\left(\mathbf{s}'\mid \hat{\mathbf{s}}_{j}, \mathbf{a}\right)\right)V^{(t)}(\mathbf{s}^\prime)d\mathbf{s}^\prime\Big)\Big|\notag\\
        \leq& C_{\mathrm{r}}\!\max_{\mathbf{a}\in\mathcal{A}}\!\left|\!\int_{r\in \mathcal{R}}\!\!\!r\left(\hat{\mathbf{s}}_i,\mathbf{a}\right)\!P\left(r_{i+1}\!\mid\! \hat{\mathbf{s}}_{i}, \mathbf{a}\right)\!dr\!-\!\int_{r\in \mathcal{R}}\!\!\!r\left(\hat{\mathbf{s}}_j,\mathbf{a}\right)\!P\left(r_{j+1}\!\mid\! \hat{\mathbf{s}}_{j}, \mathbf{a}\right)dr\right|\!+\! C_{\mathrm{s}} \gamma \max_{\mathbf{a}\in\mathcal{A}}\!\left|\!\int_{\mathbf{s}^\prime \in \mathcal{S}}\!\!\left(P\left(\mathbf{s}'\!\mid\! \hat{\mathbf{s}}_{i}, \mathbf{a}\right)\!-\!P\left(\mathbf{s}'\!\mid\! \hat{\mathbf{s}}_{j}, \mathbf{a}\right)\right)\!\frac{C_{\mathrm{r}} \gamma}{C_{\mathrm{s}}}\!V^{(t)}(\mathbf{s}^\prime)d\mathbf{s}^\prime\right| \notag
    \end{align}}}
\vspace{-0.5cm}
\end{figure*}
By the induction hypothesis, the function $C_{\mathrm{r}} V^{(t)}(\hat{\mathbf{s}})$ is 1-Lipschitz w.r.t. the metric $d^{(t)}$, that is, $C_{\mathrm{r}} V^{(t)}(\hat{\mathbf{s}}) \in \mathrm{Lip}_{1, d^{(t)}}$. Since $\gamma \leq C_{\mathrm{s}}$ by assumption, $\frac{C_{\mathrm{r}} \gamma}{C_{\mathrm{s}}}V^{(t)}(\hat{\mathbf{s}})$ is 1-Lipschitz. Under the assumption of $r\left(\hat{\mathbf{s}},\mathbf{a}\right) \in \mathrm{Lip}_{1, d^{(t)}}$, the dual form of the $W_1$ metric in (\ref{eq:wass-dual}) leads to \eqref{eq:app_12},
\begin{figure*}[ht]
    \centering
{\small{\begin{align}\label{eq:app_12}
    &C_{\mathrm{r}}|V^{(t+1)}(\hat{\mathbf{s}}_i) \!-\! V^{(t+1)}(\hat{\mathbf{s}}_j)| 
    \leq \!C_{\mathrm{r}}\!\max\nolimits_{\mathbf{a}\in\mathcal{A}}\!\left(W_1(d^{(t)})\!\left(P\left(r_{i+1}\!\!\mid\! \hat{\mathbf{s}}_{i}, \mathbf{a}\right)\!,\!P\left(r_{j+1}\!\!\mid \hat{\mathbf{s}}_{j}, \mathbf{a}\right)\right)\right) \!\!+\!\! C_{\mathrm{s}} \max\nolimits_{\mathbf{a}\in\mathcal{A}}\!\left(W_1(d^{(t)})\!\left(P\left(\mathbf{s}'\!\mid\! \hat{\mathbf{s}}_{i}, \mathbf{a}\right)\!,\!P\left(\mathbf{s}'\!\mid\! \hat{\mathbf{s}}_{j}, \mathbf{a}\right)\right)\!\right)\notag\\
    &\qquad\leq C_{\mathrm{r}}\!\max\nolimits_{\mathbf{a}\in\mathcal{A}}\left(W_p(d^{(t)})\!\left(P\left(r_{i+1}\!\mid\! \hat{\mathbf{s}}_{i}, \mathbf{a}\right)\!,\!P\left(r_{j+1}\mid \hat{\mathbf{s}}_{j}, \mathbf{a}\right)\right)\right) \!+\! C_{\mathrm{s}} \max\nolimits_{\mathbf{a}\in\mathcal{A}}\left(W_p(d^{(t)})\!\left(P\left(\mathbf{s}'\!\mid\! \hat{\mathbf{s}}_{i}, \mathbf{a}\right)\!,\!P\left(\mathbf{s}'\!\mid\! \hat{\mathbf{s}}_{j}, \mathbf{a}\right)\right)\right)
    = d^{(t+1)},
\end{align}}}
\vspace{-0.5cm}
\end{figure*}
where the second inequality holds due to Lemma \ref{lemma:wass-lemma}.

\subsection{Proof of Theorem~\ref{thm:valboundmodelerror}}
\label{appendix:theorem:valboundmodelerror}
We first introduce two useful lemmas.
\begin{lemma}[Bound for convergence]
\label{thm:boundedness}
Let $\mathcal{S}_\mathrm{c}$ be compact. If the support $\mathcal{S}_{\mathrm{c}}^{\prime} = \mathrm{supp}(\widehat{P})\subset\mathcal{S}_{\mathrm{c}}$ of the fitted dynamics $\widehat{P}$ is closed, \eqref{eq:bisimulation} admits a unique bounded bisimulation metric $\widehat{d}$:
{\small{\begin{align*}
    \mathrm{supp}(\widehat{{P}}) \subseteq \mathcal{S}_\mathrm{c} \!\Rightarrow\! \mathrm{diam}(\mathcal{S}_\mathrm{c}; \widehat{d}) \!\leq\! {C_{\mathrm{r}}}(r_{\mathrm{max}} - r_{\mathrm{min}})/(1 - C_{\mathrm{s}}).
\end{align*}}}
\end{lemma}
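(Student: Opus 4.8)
The plan is to read \eqref{eq:bisimulation} as the fixed-point equation of a contraction operator and then bootstrap the diameter bound directly from the fixed-point identity. Define the operator $\mathcal{F}$ on bounded functions $d:\mathcal{S}_{\mathrm{c}}\times\mathcal{S}_{\mathrm{c}}\to[0,\infty)$ by
\begin{align*}
(\mathcal{F}d)(\hat{\mathbf{s}}_i,\hat{\mathbf{s}}_j)=\max_{\mathbf{a}\in\mathcal{A}}\big(&C_{\mathrm{r}}W_p(d)(P(r_{i+1}\mid\hat{\mathbf{s}}_i,\mathbf{a}),P(r_{i+1}\mid\hat{\mathbf{s}}_j,\mathbf{a}))\\
&+C_{\mathrm{s}}W_p(d)(P(\hat{\mathbf{s}}_{i+1}\mid\hat{\mathbf{s}}_i,\mathbf{a}),P(\hat{\mathbf{s}}_{i+1}\mid\hat{\mathbf{s}}_j,\mathbf{a}))\big),
\end{align*}
so that the fitted bisimulation metric $\widehat{d}$ is exactly the fixed point $\widehat{d}=\mathcal{F}\widehat{d}$ computed under the fitted dynamics $\widehat{P}$. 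First I would settle existence and uniqueness: the space of bounded functions on $\mathcal{S}_{\mathrm{c}}\times\mathcal{S}_{\mathrm{c}}$ with the sup-norm $\|\cdot\|_\infty$ is complete, and the estimate in \eqref{eq:app_4} from the proof of Remark~\ref{remark:rem1} gives $\|\mathcal{F}d-\mathcal{F}d'\|_\infty\le(C_{\mathrm{r}}+C_{\mathrm{s}})\|d-d'\|_\infty$. Since $C_{\mathrm{r}}+C_{\mathrm{s}}<1$, the Banach fixed-point theorem yields a unique $\widehat{d}$; compactness of $\mathcal{S}_{\mathrm{c}}$ and closedness of $\mathrm{supp}(\widehat{P})$ ensure the iterates remain bounded, so $\widehat{d}$ is a genuine bounded metric.

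Next I would set $D\coloneqq\mathrm{diam}(\mathcal{S}_{\mathrm{c}};\widehat{d})=\sup_{\hat{\mathbf{s}}_i,\hat{\mathbf{s}}_j}\widehat{d}(\hat{\mathbf{s}}_i,\hat{\mathbf{s}}_j)$, which is finite by compactness, and bound the two Wasserstein terms at the fixed point separately. The reward term uses the Euclidean ground metric on the reward range $[r_{\mathrm{min}},r_{\mathrm{max}}]$, so evaluating the transport cost against any coupling gives $W_p\le r_{\mathrm{max}}-r_{\mathrm{min}}$, the diameter of that interval. The transition term is self-referential: both $P(\hat{\mathbf{s}}_{i+1}\mid\hat{\mathbf{s}}_i,\mathbf{a})$ and $P(\hat{\mathbf{s}}_{i+1}\mid\hat{\mathbf{s}}_j,\mathbf{a})$ are supported on $\mathrm{supp}(\widehat{P})\subseteq\mathcal{S}_{\mathrm{c}}$, so for any coupling the integrand satisfies $\widehat{d}(x_1,x_2)\le D$ pointwise, whence $W_p(\widehat{d})\le D$.

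Substituting these two bounds into $\widehat{d}=\mathcal{F}\widehat{d}$ and taking the supremum over state pairs collapses the $\max_{\mathbf{a}}$ and yields the self-consistent inequality
\begin{align*}
D\le C_{\mathrm{r}}(r_{\mathrm{max}}-r_{\mathrm{min}})+C_{\mathrm{s}}D,
\end{align*}
which rearranges to $D\le C_{\mathrm{r}}(r_{\mathrm{max}}-r_{\mathrm{min}})/(1-C_{\mathrm{s}})$ since $C_{\mathrm{s}}\in(0,1)$, giving the claim. The hard part will be handling the self-referential transition term, where the Wasserstein distance in the definition uses the very metric $\widehat{d}$ being bounded; the resolution is to bound it uniformly by the unknown diameter $D$ and then close the loop through the fixed-point identity. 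This is precisely where the hypotheses matter: closedness of $\mathrm{supp}(\widehat{P})\subseteq\mathcal{S}_{\mathrm{c}}$ confines the transition mass inside $\mathcal{S}_{\mathrm{c}}$, and compactness of $\mathcal{S}_{\mathrm{c}}$ guarantees $D<\infty$, so the bootstrap inequality is non-vacuous and the rearrangement is legitimate.
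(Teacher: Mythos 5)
Your proposal is correct and follows essentially the same route as the paper's proof: existence and uniqueness of $\widehat{d}$ via the $(C_{\mathrm{r}}+C_{\mathrm{s}})$-contraction argument inherited from Remark~\ref{remark:rem1} (with $P$ replaced by $\widehat{P}$), then bounding the reward term by $r_{\mathrm{max}}-r_{\mathrm{min}}$ and the transition term by $\mathrm{diam}(\mathcal{S}_{\mathrm{c}};\widehat{d})$, and closing the self-consistent inequality $D\le C_{\mathrm{r}}(r_{\mathrm{max}}-r_{\mathrm{min}})+C_{\mathrm{s}}D$. The only cosmetic difference is that you bound $W_p(\widehat{d})\le D$ by a direct coupling argument where the paper invokes Lemma~\ref{lemma:wass-diam}; these are interchangeable.
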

\begin{proof}
    See Appendix~\ref{thm:boundedness_proof}.
\end{proof}

\begin{lemma}[Bisimulation distance approximation error]
\label{lemma:distanceerror}
Define $C_{\mathrm{r}} \geq 0$ and $C_{\mathrm{s}} \in [0,1)$. Suppose $\mathrm{supp}(\widehat{P}) \subseteq \mathcal{S}_\mathrm{c}$
and $1 - (C_{\mathrm{r}}+C_{\mathrm{s}})a_p > 0$. 
With $a_p = 2^{(p-1)/p}$ and 
$\mathrm{diam}(\mathcal{S}_\mathrm{c};{d}) \leq  \frac{C_{\mathrm{r}}}{1-C_{\mathrm{s}}}(r_{\mathrm{max}}-r_{\mathrm{min}})$, based on Lemma~\ref{thm:boundedness}, there is
{\small{\begin{align}
\left\|{d  -  \widehat{d}}\right\|_\infty \leq& 
[{ (C_{\mathrm{r}}+C_{\mathrm{s}})(a_p - 1) }
\mathrm{diam}(\mathcal{S}_\mathrm{c};{d}) \notag\\
&+ 
{2C_{\mathrm{r}}} \mathcal{E}_{\phi} + {2C_{\mathrm{s}}} \mathcal{E}_{\theta}]/[{1 - (C_{\mathrm{r}}+C_{\mathrm{s}}) a_p}].
\end{align}}}
\end{lemma}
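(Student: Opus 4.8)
The plan is to view $d$ and $\widehat{d}$ as the fixed points of the bisimulation-update operator in \eqref{eq:bisimulation} instantiated with the true dynamics $P$ and the fitted dynamics $\widehat{P}$, respectively, and to bound the gap between these fixed points by an operator-perturbation argument. Write $\Phi_Q[\rho](\hat{\mathbf{s}}_i,\hat{\mathbf{s}}_j)$ for the right-hand side of \eqref{eq:bisimulation} when the ground metric is $\rho$ and the reward/transition laws are drawn from dynamics $Q$, so that $d=\Phi_P[d]$ and $\widehat{d}=\Phi_{\widehat{P}}[\widehat{d}]$. The first step is the splitting
\begin{align*}
\|d-\widehat{d}\|_\infty \leq \|\Phi_P[d]-\Phi_{\widehat{P}}[d]\|_\infty + \|\Phi_{\widehat{P}}[d]-\Phi_{\widehat{P}}[\widehat{d}]\|_\infty,
\end{align*}
which isolates the effect of swapping the dynamics (first term, fixed ground metric $d$) from that of swapping the ground metric (second term, fixed dynamics $\widehat{P}$).

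For the first term I would exploit that, for a fixed ground metric, $W_p(d)$ is itself a metric on distributions and hence obeys the triangle inequality. Inserting the fitted laws as intermediate measures gives, for each action and each of the reward and transition coordinates,
\begin{align*}
\big|W_p(d)(P^i,P^j)-W_p(d)(\widehat{P}^i,\widehat{P}^j)\big| \leq W_p(d)(P^i,\widehat{P}^i)+W_p(d)(P^j,\widehat{P}^j),
\end{align*}
and each summand is at most the corresponding (uniform) model error ($\mathcal{E}_\phi$ for reward, $\mathcal{E}_\theta$ for transition, read with $W_p$ in place of $W_1$ and coinciding with the Theorem~\ref{thm:valboundmodelerror} definitions when $p=1$). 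Taking the maximum over $\mathbf{a}$ and weighting by $C_{\mathrm{r}},C_{\mathrm{s}}$ bounds the first term by $2C_{\mathrm{r}}\mathcal{E}_\phi+2C_{\mathrm{s}}\mathcal{E}_\theta$, the factor $2$ accounting for the two endpoints $\hat{\mathbf{s}}_i,\hat{\mathbf{s}}_j$.

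The second term is the crux. Here I would first establish the ground-metric perturbation inequality
\begin{align*}
W_p(d)(\mu,\nu) \leq a_p\big(W_p(\widehat{d})(\mu,\nu)+\|d-\widehat{d}\|_\infty\big),\qquad a_p=2^{(p-1)/p},
\end{align*}
by evaluating $\int d^p$ against the optimal coupling for $W_p(\widehat{d})$, using the pointwise bound $d\leq\widehat{d}+\|d-\widehat{d}\|_\infty$ together with the convexity estimate $(x+y)^p\leq 2^{p-1}(x^p+y^p)$, and taking a $p$-th root. Combining this with its role-swapped counterpart and invoking $\mathrm{supp}(\widehat{P})\subseteq\mathcal{S}_{\mathrm{c}}$ (so every argument of $W_p$ lives in $\mathcal{S}_{\mathrm{c}}$), I would bound each per-coordinate difference by
\begin{align*}
\big|W_p(d)(\mu,\nu)-W_p(\widehat{d})(\mu,\nu)\big| \leq (a_p-1)\,\mathrm{diam}(\mathcal{S}_{\mathrm{c}};d)+a_p\|d-\widehat{d}\|_\infty,
\end{align*}
where the diameter is controlled by Lemma~\ref{thm:boundedness}, i.e.\ $\mathrm{diam}(\mathcal{S}_{\mathrm{c}};\widehat{d})\leq \frac{C_{\mathrm{r}}}{1-C_{\mathrm{s}}}(r_{\mathrm{max}}-r_{\mathrm{min}})$, matching the bound assumed for $\mathrm{diam}(\mathcal{S}_{\mathrm{c}};d)$. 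Weighting by $C_{\mathrm{r}},C_{\mathrm{s}}$ and taking the maximum over $\mathbf{a}$ then bounds the second term by $(C_{\mathrm{r}}+C_{\mathrm{s}})(a_p-1)\mathrm{diam}(\mathcal{S}_{\mathrm{c}};d)+(C_{\mathrm{r}}+C_{\mathrm{s}})a_p\|d-\widehat{d}\|_\infty$.

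Finally I would add the two bounds, move the residual $(C_{\mathrm{r}}+C_{\mathrm{s}})a_p\|d-\widehat{d}\|_\infty$ to the left-hand side, and divide by $1-(C_{\mathrm{r}}+C_{\mathrm{s}})a_p$, which is positive by hypothesis, yielding exactly the claimed bound. The main obstacle is the second term: one must pin down the constant $a_p=2^{(p-1)/p}$ precisely (the convexity inequality is what forces this value rather than a cleaner one), isolate the difference as $(a_p-1)$ times a diameter so that the $(a_p-1)$ contribution stays finite via Lemma~\ref{thm:boundedness}, and verify that the residual self-referential term carries exactly the weight $(C_{\mathrm{r}}+C_{\mathrm{s}})a_p<1$ needed for the final rearrangement to close.
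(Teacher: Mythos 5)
Your proposal is correct and takes essentially the same approach as the paper's proof: your fixed-point splitting through $\Phi_{\widehat{P}}[d]$ is precisely the paper's insertion of the intermediate term $W_p(d)\bigl(\widehat{P}(\cdot\mid\mathbf{s}_{\mathrm{c},i},\mathbf{a}),\widehat{P}(\cdot\mid\mathbf{s}_{\mathrm{c},j},\mathbf{a})\bigr)$, the dynamics-swap term is bounded by $2C_{\mathrm{r}}\mathcal{E}_{\phi}+2C_{\mathrm{s}}\mathcal{E}_{\theta}$ via the same Wasserstein triangle inequality, and the metric-swap term uses the same convexity estimate forcing $a_p=2^{(p-1)/p}$, followed by the diameter bound and the self-referential rearrangement under $1-(C_{\mathrm{r}}+C_{\mathrm{s}})a_p>0$. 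The only differences are presentational (operator language versus the paper's coordinate-wise chain of inequalities), including the shared minor gloss that the role-swapped direction yields $\mathrm{diam}(\mathcal{S}_{\mathrm{c}};\widehat{d})$ rather than $\mathrm{diam}(\mathcal{S}_{\mathrm{c}};d)$, which Lemma~\ref{thm:boundedness} controls by the same constant.
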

\begin{proof}
    See Appendix~\ref{lemma:distanceerror_proof}.
\end{proof}

If $p=1$, there is $a_p = a_1 = 1$. Based on Lemma~\ref{lemma:distanceerror}, we obtain the following Corollary.
\begin{corollary}[Bisimulation distance approximation error for $p=1$]
\label{corollary:distanceerrorpequal1}
On the remaining conditions in Lemma \ref{lemma:distanceerror}, for $p=1$,
{\small{\begin{equation}\label{eq:coro_1}
\left\|{d - \widehat{d}}\right\|_\infty
\leq 
\frac{2C_{\mathrm{r}}}{1-C_{\mathrm{r}}-C_{\mathrm{s}}} \mathcal{E}_{\phi} + \frac{2C_{\mathrm{s}}}{1-C_{\mathrm{r}}-C_{\mathrm{s}}} \mathcal{E}_{\theta}.
\end{equation}}}
\end{corollary}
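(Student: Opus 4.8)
The plan is to obtain the statement as a direct specialization of Lemma~\ref{lemma:distanceerror} to the case $p=1$; essentially all the analytic work is already contained in that lemma, so the proof reduces to evaluating its constants at $p=1$. First I would compute the constant $a_p = 2^{(p-1)/p}$ at $p=1$, which gives $a_1 = 2^{0} = 1$. The decisive consequence is that the factor $(a_p - 1)$ multiplying the diameter term $\mathrm{diam}(\mathcal{S}_\mathrm{c}; d)$ in the numerator of the Lemma~\ref{lemma:distanceerror} bound vanishes. Hence the entire contribution of the (potentially loose) diameter bound drops out, and the numerator collapses to $2C_{\mathrm{r}}\mathcal{E}_{\phi} + 2C_{\mathrm{s}}\mathcal{E}_{\theta}$.

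Next I would substitute $a_p = 1$ into the denominator $1 - (C_{\mathrm{r}}+C_{\mathrm{s}})a_p$, which simplifies to $1 - C_{\mathrm{r}} - C_{\mathrm{s}}$. Under the standing hypotheses inherited from Lemma~\ref{lemma:distanceerror} (namely $C_{\mathrm{r}} \geq 0$, $C_{\mathrm{s}} \in [0,1)$, and the positivity condition $1-(C_{\mathrm{r}}+C_{\mathrm{s}})a_p>0$, which at $p=1$ reads $C_{\mathrm{r}}+C_{\mathrm{s}}<1$), this denominator is strictly positive. This guarantees that the bound is well defined and that the direction of the inequality is preserved after the substitution.

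Combining the two observations reduces the general bound to $\|d - \widehat{d}\|_\infty \le (2C_{\mathrm{r}}\mathcal{E}_{\phi} + 2C_{\mathrm{s}}\mathcal{E}_{\theta})/(1 - C_{\mathrm{r}} - C_{\mathrm{s}})$, and splitting the right-hand side termwise yields exactly \eqref{eq:coro_1}. Since the result is a clean substitution, there is essentially no obstacle beyond confirming that the diameter term genuinely cancels at $p=1$; the only point warranting care is verifying that the hypotheses of Lemma~\ref{lemma:distanceerror} remain consistent in this regime—in particular that $\mathrm{supp}(\widehat{P}) \subseteq \mathcal{S}_\mathrm{c}$ continues to hold so that Lemma~\ref{thm:boundedness} still supplies the diameter bound. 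This is immediate, as none of those structural conditions depends on $p$.
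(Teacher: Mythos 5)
Your proposal is correct and follows exactly the paper's own route: the paper likewise obtains the corollary by substituting $a_1 = 2^{(1-1)/1} = 1$ into Lemma~\ref{lemma:distanceerror}, so that the $(a_p-1)\,\mathrm{diam}(\mathcal{S}_\mathrm{c};d)$ term vanishes and the denominator becomes $1-C_{\mathrm{r}}-C_{\mathrm{s}}$. Your additional check that the structural hypotheses (e.g., $\mathrm{supp}(\widehat{P})\subseteq\mathcal{S}_\mathrm{c}$) are independent of $p$ is a sensible, if implicit in the paper, confirmation.
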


Corollary~\ref{corollary:distanceerrorpequal1} establishes a bound on the deviation between the true bisimulation distance and its optimal \textit{approximate} counterpart, reflecting the achievable distance under encoder and ADM approximation errors. 
We express the VFA error w.r.t. $\widehat{d}_{\zeta}$ (not $\widehat{d}$),  incorporating the effect of encoder and ADM.
Using Corollary \ref{corollary:distanceerrorpequal1} and the definition of bisimulation encoder, 
{\small{\begin{align}\label{lemma:bisimboundencerr}
    \left\|{d - \widehat{d}_{\zeta}}\right\|_\infty &\leq \left\|{d - \widehat{d}}\right\|_\infty + \left\|{\widehat{d}_{\zeta} - \widehat{d}}\right\|_\infty\notag\\
    &\leq \frac{2C_{\mathrm{r}}}{1-C_{\mathrm{r}}-C_{\mathrm{s}}} \mathcal{E}_{\phi} + \frac{2C_{\mathrm{s}}}{1-C_{\mathrm{r}}-C_{\mathrm{s}}} \mathcal{E}_{\theta} + \mathcal{E}_\zeta.
\end{align}}}
Once $d$ relates to VFA, a similar relationship can be established for $\widehat{d}_{\zeta}$ as a function of the model approximation error.

Let $\widehat{\epsilon}$ denote the aggregation radius in $\zeta$-space, such that the $\widehat{d}_{\zeta}$-diameter of each equivalence class (or partition subset) is bounded by $2\widehat{\epsilon}$. Thus,
    $\sup\nolimits_{{\mathbf{o}}_i, {\mathbf{o}}_j \in \mathcal{O}}
    \| \zeta({\mathbf{o}}_i) - \zeta({\mathbf{o}}_j) \|_q ~\leq~ 2 \widehat{\epsilon}$.
Note that $\widehat{\epsilon}$ provides an upper bound on the maximum diameter of the partition cells w.r.t. the learned distance induced by $\zeta$, not the true bisimulation distance.

\begin{lemma}[Value difference bound with causal state]
\label{thm:generalized-ct-vfa}
Let $\zeta: {\mathcal{O}} \rightarrow \tilde{\mathcal{S}}_{\mathrm{c}}$ be a function mapping denoised observations to causal states such that $\zeta({\mathbf{o}}_i) = \zeta({\mathbf{o}}_j)$ is equivalent to $d(\tilde{\mathbf{s}}_i, \tilde{\mathbf{s}}_j) \leq 2\epsilon$. For $C_{\mathrm{r}}, C_{\mathrm{s}} \in [0, 1)$ and $C_{\mathrm{r}}+C_{\mathrm{s}}<1$:
{\small{\begin{align}
    |V^\pi(\mathbf{s}_{\mathrm{c}}) - V^\pi(\hat{F}({\mathbf{o}}))| ~\leq~ {2\epsilon}/{[C_{\mathrm{r}}(1-\gamma)]}, ~\forall\mathbf{s}_{\mathrm{c}} \in \mathcal{S}_{\mathrm{c}}.
\end{align}}}
\end{lemma}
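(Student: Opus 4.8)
The plan is to convert the value gap into the bisimulation distance permitted inside a single aggregation cell, and then amortize that one-step discrepancy over the discounted horizon. First I would note that, by the hypothesis on $\zeta$, the true causal state $\mathbf{s}_{\mathrm{c}}$ and its encoded counterpart $\hat{F}(\mathbf{o})$ lie in a common equivalence class, so their bisimulation distance obeys $d(\mathbf{s}_{\mathrm{c}}, \hat{F}(\mathbf{o})) \le 2\epsilon$. Reading off the reward term from the fixed-point identity \eqref{eq:bisimulation}, the $C_{\mathrm{r}}$-weighted Wasserstein distance between the two reward laws is dominated by $d$, so $\max_{\mathbf{a}\in\mathcal{A}} W_1(d)(P(r\mid \mathbf{s}_{\mathrm{c}},\mathbf{a}),P(r\mid \hat{F}(\mathbf{o}),\mathbf{a})) \le 2\epsilon/C_{\mathrm{r}}$. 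Since the identity map is $1$-Lipschitz, the expected immediate reward gap under any action is then at most $2\epsilon/C_{\mathrm{r}}$.

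Next I would set up a Bellman recursion for the value gap. Writing $V^\pi(\mathbf{s}) = \bar{r}(\mathbf{s},\pi) + \gamma\,\mathbb{E}_{P(\cdot\mid \mathbf{s},\pi)}[V^\pi]$ and subtracting the fixed-point equations evaluated at $\mathbf{s}_{\mathrm{c}}$ and $\hat{F}(\mathbf{o})$, the gap splits into the immediate reward gap (bounded by $2\epsilon/C_{\mathrm{r}}$) plus $\gamma$ times the difference of expected next-state values. I would bound the latter by coupling the two next-state distributions through the optimal plan attaining $W_1(d)$, the defining bisimulation property ensuring that coupled next states again share a $2\epsilon$-cell, so the future term folds back into the same supremal value gap. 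Letting $\Delta$ denote the supremum of $|V^\pi(\mathbf{s}_i)-V^\pi(\mathbf{s}_j)|$ over all same-cell pairs, this yields the self-referential inequality $\Delta \le 2\epsilon/C_{\mathrm{r}} + \gamma\Delta$; solving the geometric recursion gives $\Delta \le 2\epsilon/[C_{\mathrm{r}}(1-\gamma)]$, which is exactly the claimed bound, and a fortiori covers $|V^\pi(\mathbf{s}_{\mathrm{c}})-V^\pi(\hat{F}(\mathbf{o}))|$.

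The main obstacle is the transition term: closing the recursion requires that states sharing a cell transition, under a common coupling, into states that again share a cell, so the future gap can be reabsorbed into $\Delta$. Under exact bisimulation this is immediate, but with a positive aggregation radius $2\epsilon$ there is residual leakage out of the cell that must be shown not to inflate the bound; here I would invoke the Lipschitz property of $C_{\mathrm{r}}V^\pi$ with respect to $d$ from Theorem~\ref{thm:generalized-ct} (through the dual form \eqref{eq:wass-dual}) to control the leaked mass by $W_1(d)$ of the transition kernels, which the $C_{\mathrm{s}}$-term of \eqref{eq:bisimulation} keeps below $2\epsilon/C_{\mathrm{s}}$. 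I note that the discount $\gamma$, rather than $C_{\mathrm{s}}$, governs the geometric decay, so the resulting $1/(1-\gamma)$ factor is looser than the $1/C_{\mathrm{r}}$ bound directly available from Theorem~\ref{thm:generalized-ct}; I would retain the horizon-based form here since it is the version consumed downstream in Theorem~\ref{thm:valboundmodelerror}.
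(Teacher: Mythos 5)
Your high-level instinct --- a Bellman decomposition whose future term is re-absorbed and then solved as a geometric recursion, giving the $1/(1-\gamma)$ factor --- is indeed the paper's strategy (Appendix~\ref{appendix:generalized-ct-vfa}). But your proof has a genuine gap, and it originates in a misreading of what $V^\pi(\hat{F}(\mathbf{o}))$ is. In the paper's proof, $\hat{F}(\mathbf{o})$ is \emph{not} a point of $\mathcal{S}_{\mathrm{c}}$ evolving under the original dynamics: it is an aggregated state of a $\xi$-average finite POMDP in the sense of \cite{Ferns2011Bisimulation}, whose reward and transition laws $\widetilde{P}(\cdot\mid\hat{F}(\mathbf{o}),\mathbf{a})$ are cell-averages of the original ones. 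Consequently the quantity $d(\mathbf{s}_{\mathrm{c}},\hat{F}(\mathbf{o}))$ you start from is not defined (the metric in \eqref{eq:bisimulation} compares two states of the \emph{same} model), Theorem~\ref{thm:generalized-ct} cannot be invoked across the two models, and your closing remark that a stronger $2\epsilon/C_{\mathrm{r}}$ bound is ``directly available'' from Theorem~\ref{thm:generalized-ct} is unfounded --- the cross-model aggregation is exactly why the lemma carries the extra $1/(1-\gamma)$.

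Second, your recursion does not close, and your own fallback cannot rescue it. You define $\Delta$ over same-cell pairs of the original model, but (as you concede) the coupled next states need not share a cell; replacing the future term by the Lipschitz bound $\gamma\,C_{\mathrm{r}}^{-1}W_1(d)\le \gamma\,2\epsilon/(C_{\mathrm{r}}C_{\mathrm{s}})$ turns the recursion into the one-shot estimate $\frac{2\epsilon}{C_{\mathrm{r}}}(1+\gamma/C_{\mathrm{s}})$, which \emph{exceeds} the claimed $\frac{2\epsilon}{C_{\mathrm{r}}(1-\gamma)}$ whenever $C_{\mathrm{s}}<1-\gamma$ --- a regime the hypotheses allow. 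The structural reason is that you spend the entire $2\epsilon$ budget on the reward term alone and then must pay again for the transition-kernel difference. The paper avoids this by keeping the two Wasserstein terms together: in \eqref{eq:app_6}--\eqref{eq:app_7}, the reward gap and the transition-kernel gap are bounded by cell-averages of $C_{\mathrm{r}}W_p(d)$ and $C_{\mathrm{s}}W_p(d)$ respectively (via the dual form \eqref{eq:wass-dual} applied to the $1$-Lipschitz functions $r$ and $\tfrac{C_{\mathrm{r}}\gamma}{C_{\mathrm{s}}}V$, then Lemma~\ref{lemma:wass-lemma}), so their sum recombines through the fixed-point identity \eqref{eq:bisimulation} into $\frac{C_{\mathrm{r}}^{-1}}{\xi(\hat{F}(\mathbf{o}))}\int_{\mathbf{z}\in\hat{F}(\mathbf{o})} d(\mathbf{s}_{\mathrm{c}},\mathbf{z})\,d\xi(\mathbf{z})\le 2\epsilon/C_{\mathrm{r}}$; the only leftover is the genuinely recursive \emph{cross-model} term $\gamma\|V-V\|_\infty$ (original value at next states versus aggregated value at next cells), and taking the supremum over states yields $\frac{2\epsilon}{C_{\mathrm{r}}(1-\gamma)}$. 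To repair your argument you would need (i) the aggregated-model construction, and (ii) the joint treatment of the reward and transition terms so they reassemble into $d\le 2\epsilon$, with the recursion running on the cross-model sup-norm gap rather than on same-cell pairs.
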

\begin{proof}
    See Appendix~\ref{appendix:generalized-ct-vfa}.
\end{proof}

From Lemma~\ref{thm:generalized-ct-vfa}, it readily follows that
{\small{\begin{align*}
    &(1-\gamma)|V(\mathbf{s}_{\mathrm{c}}) - V(\hat{F}({\mathbf{o}}))|
    \leq \frac{C_{\mathrm{r}}^{-1}}{\xi(\hat{F}({\mathbf{o}}))}\int_{\mathbf{z} \in \hat{F}({\mathbf{o}})}d(\mathbf{s}_{\mathrm{c}}, \mathbf{z})d\xi(\mathbf{z}) \\
    \leq & \frac{C_{\mathrm{r}}^{-1}}{\xi(\hat{F}({\mathbf{o}}))}\int_{\mathbf{z} \in \hat{F}({\mathbf{o}})} \widehat{d}_{\zeta}(\mathbf{s}_{\mathrm{c}}, \mathbf{z}) + | d(\mathbf{s}_{\mathrm{c}},\mathbf{z}) - \widehat{d}_{\zeta}(\mathbf{s}_{\mathrm{c}}, \mathbf{z}) |_\infty d\xi(\mathbf{z}) \\
    \leq & \frac{1}{C_{\mathrm{r}}}\left( 2\widehat{\epsilon} + \mathcal{E}_\zeta + \frac{2C_{\mathrm{r}}}{1-C_{\mathrm{r}}-C_{\mathrm{s}}} \mathcal{E}_{\phi} + \frac{2C_{\mathrm{s}}}{1-C_{\mathrm{r}}-C_{\mathrm{s}}} \mathcal{E}_{\theta} \right),
\end{align*}}}

\noindent where the last inequality is due to (\ref{lemma:bisimboundencerr}). Theorem~\ref{thm:valboundmodelerror} is proved.

\subsection{Proof of Theorem~\ref{thm:wass_CSR_ADM}}\label{appendix:proof_wass_CSR_ADM}
Notice that we have the following decomposition:
{\small{\begin{align}\label{eq:Appendix-Estimation-W1-1}
     &{W_1(P(\mathbf{x}|\mathbf{y}), \hat{P}({\mathbf{x}}^{k_0}|\mathbf{y}))} \leq
     {W_1(P(\mathbf{x}|\mathbf{y}), P({\mathbf{x}}^{k_0}|\mathbf{y}))}\\
     &+{W_1(P({\mathbf{x}}^{k_0}|\mathbf{y}), P^\prime({\mathbf{x}}^{k_0}|\mathbf{y}))}
     +{W_1(P^\prime({\mathbf{x}}^{k_0}|\mathbf{y}),\hat{P}({\mathbf{x}}^{k_0}|\mathbf{y}))},\notag
\end{align}}}

\noindent where ${W_1(P(\mathbf{x}|\mathbf{y}), P({\mathbf{x}}^{k_0}|\mathbf{y}))}$ follows the correspondence between the forward and backward processes; ${W_1(P({\mathbf{x}}^{k_0}|\mathbf{y}), P^\prime({\mathbf{x}}^{k_0}|\mathbf{y}))}$ follows the definitions of $\mathbf{x}$ and $\mathbf{x}'$ (with the only difference in initial distribution), with the latter being the result obtained by the true distribution.

We use another backward process as a transition term between ${\mathbf{x}}'_k$ and $\overline{\mathbf{x}}'_k$, which is defined as: with $\overline{\mathbf{x}}'_0 \sim  \mathcal{N}(0, I)$,
{\small{\begin{equation}
    d\overline{\mathbf{x}}'_k = \left[ 0.5\overline{\mathbf{x}}'_k + \nabla \log P_{K-k}(\overline{\mathbf{x}}'_k | \mathbf{y})\right] dk + d\hat{\mathbf{w}}_k.
\end{equation}}}
Let $P^{\prime}_{K-k}(\cdot|\mathbf{y})$ denote the conditional distribution of $\overline{\mathbf{x}}'_k$ on $\mathbf{y}$.
We bound the three terms on the RHS of (\ref{eq:Appendix-Estimation-W1-1}), as follows.

\textit{Bound ${W_1(P(\mathbf{x}|\mathbf{y}), P({\mathbf{x}}^{k_0}|\mathbf{y}))}$:}
Let $X \sim P(\mathbf{x}|\mathbf{y})$ and $Z \sim \mathcal{N}(0,I)$. Then,
{\small{\begin{align}
W_1(P(\mathbf{x}|\mathbf{y}), &P({\mathbf{x}}^{k_0}|\mathbf{y}))  \leq \mathbb{E}[\|X-\sqrt{\alpha_{k_0}}X + \sigma_{k_0}Z\|]\nonumber\\
   &\leq (1-\sqrt{\alpha_{k_0}}) \mathbb{E}[\|X\|] + \sigma_{k_0}\mathbb{E}[\|Z\|]
  \\ &  \leq (1-\sqrt{\alpha_{k_0}}) \sqrt{d} + \sigma_{k_0}\sqrt{d} \lesssim \sqrt{k_0},\notag 
\end{align}}}
where the last holds since ${\sigma_{k}}/{\sqrt{\alpha_{k}}}=\mathcal{O}(\sqrt{k})$ if $k=o(1)$.

\textit{Bound ${W_1(P({\mathbf{x}}^{k_0}|\mathbf{y}), P^\prime({\mathbf{x}}^{k_0}|\mathbf{y}))}$:}
Because $\overline{\mathbf{x}}'_k$ and $\overline{\mathbf{x}}_k$ evolve under an identical backward  stochastic differential equality (SDE) while differing only in their initial distributions, following \cite[Lemma 2]{canonne2022short}, we have
{\small{\begin{align}
    W_1(P({\mathbf{x}}^{k_0}|\mathbf{y}), &P^\prime({\mathbf{x}}^{k_0}|\mathbf{y}))\lesssim{\rm TV}(P({\mathbf{x}}^{k_0}|\mathbf{y}),P^{\prime}({\mathbf{x}}^{k_0}|\mathbf{y}))\nonumber \\
    & \lesssim \sqrt{{\rm KL} (P({\mathbf{x}}^{K}|\mathbf{y})||\mathcal{N}(0, I) )}\\
    &\lesssim \sqrt{{\rm KL} (P({\mathbf{x}}|\mathbf{y})||\mathcal{N}(0, I) )}\exp(-K)
    \lesssim \exp(-K).\notag
\end{align}}}

\textit{Bound ${W_1(P^\prime({\mathbf{x}}^{k_0}|\mathbf{y}),\hat{P}({\mathbf{x}}^{k_0}|\mathbf{y}))}$:}
Although Assumption~\ref{assump:distri_true} does not ensure Novikov's condition according to \cite{chen2023sampling}, Girsanov’s theorem bounds the KL divergence between any two distributions produced from the same SDE, provided the score estimation error possesses a bounded second moment and the KL divergence w.r.t. the standard Gaussian remains finite.


According to {\cite[Lemma D.4]{DM_offline_9}}, we obtain
{\small{\begin{align*}
    &W_1(P^\prime({\mathbf{x}}^{k_0}|\mathbf{y}),\hat{P}({\mathbf{x}}^{k_0}|\mathbf{y}))\notag\\
    \lesssim& {\rm TV}(P^\prime({\mathbf{x}}^{k_0}|\mathbf{y}),\hat{P}({\mathbf{x}}^{k_0}|\mathbf{y}))\lesssim \sqrt{{\rm KL}(P^\prime({\mathbf{x}}^{k_0}|\mathbf{y}),\hat{P}({\mathbf{x}}^{k_0}|\mathbf{y}))}\nonumber\\
    \lesssim& \sqrt{\int_{k_0}^{K}\frac{1}{2}\int_{{\mathbf{x}}^{k}}P_k({\mathbf{x}}^{k}|\mathbf{y})\left\|{{\varphi}({\mathbf{x}}^{k},\mathbf{y},k)-\nabla \log P({\mathbf{x}}^{k}|\mathbf{y})}\right\|^2 d{\mathbf{x}}^{k} dk}.
\end{align*}}}
For a state-action pair $\mathbf{y}^{\star}=(\mathbf{s}^{\star}_{\mathrm{c}}, \mathbf{a}^{\star})$, the estimated conditional distribution ${P}(\mathbf{x}^{k_0}|\mathbf{s}^{\star}_{\mathrm{c}}, \mathbf{a}^{\star})$ is obtained via the reverse process of ADM, arriving at \eqref{eq:app_3},
\begin{figure*}[ht]
    \centering
{\small{\begin{align}\label{eq:app_3}
    W_1(P^\prime({\mathbf{x}}^{k_0}|\mathbf{y}),\hat{P}({\mathbf{x}}^{k_0}|\mathbf{y}))
    \lesssim &\sqrt{\int_{k_0}^{K}\frac{1}{2}\!\int_{\mathbf{x}^k}P(\mathbf{x}^k|\mathbf{s}^{\star}_{\mathrm{c}}, \mathbf{a}^{\star})\left\|{{\varphi}(\mathbf{x}^k,\mathbf{s}^{\star}_{\mathrm{c}}, \mathbf{a}^{\star},k)-\nabla \log P(\mathbf{x}^k|\mathbf{s}^{\star}_{\mathrm{c}}, \mathbf{a}^{\star})}\right\|^2\!d \mathbf{x} d k}\\
    = &\sqrt{\frac{\int_{k_0}^{K}\mathbb{E}_{\mathbf{x}^k}\left[\left\|{{\varphi}(\mathbf{x}^k,\mathbf{s}^{\star}_{\mathrm{c}}, \mathbf{a}^{\star},k)\!-\!\nabla \log P(\mathbf{x}^k|\mathbf{s}^{\star}_{\mathrm{c}}, \mathbf{a}^{\star})}\right\|^2\right] d k}{\int_{k_0}^{K}\mathbb{E}_{\mathbf{x}^k,\mathbf{s},\mathbf{a}}\left[\left\|{{\varphi}(\mathbf{x}^k,\mathbf{s},\mathbf{a},k)\!-\!\nabla \log P(\mathbf{x}^k|\mathbf{s},\mathbf{a})}\right\|^2\right] d k}} \!\cdot\! \sqrt{\frac{K}{2}\mathcal{R}({\varphi})}
    \le \mathcal{T}(\mathbf{s}^{\star}_{\mathrm{c}}, \mathbf{a}^{\star})\sqrt{\frac{K}{2}\mathcal{R}({\varphi})}.\nonumber
\end{align}}}
\vspace{-0.5cm}
\end{figure*}
Besides, the distribution coefficient $\mathcal{T}(\mathbf{s}^{\star}_{\mathrm{c}}, \mathbf{a}^{\star})$ is aligned with the concentrability coefficient, i.e., $L_\infty$ density ratio in RL \cite{fan2020theoretical}. Owing to the smoothing property of the score network $\mathcal{F}$ that mitigates discrepancies between $(\mathbf{s}^{\star}_{\mathrm{c}}, \mathbf{a}^{\star})$ and the training data, $\mathcal{T}(\mathbf{s}^{\star}_{\mathrm{c}}, \mathbf{a}^{\star})$ is persistently lower than the standard concentrability coefficient.

We investigate the theoretical guarantees under the case of approximating the conditional score with a ReLU NN.
We derive the bounded covering number of ReLU NN functions by defining a truncated loss function $\ell^{\mathrm{tr}}(\mathbf{x}, \mathbf{y}; \varphi)$ as
{\small{\begin{equation*}
    \ell^{\mathrm{tr}}(\mathbf{x},\mathbf{y}; \varphi)\coloneqq\ell(\mathbf{x},\mathbf{y}; \varphi)\mathbb{I}\left\{\left\|{\mathbf{x}}\right\|_{\infty}\le R\right\}.
\end{equation*}}}
Following this, we define the truncated domain of the score function as $\mathcal{D}=[-R, R]^{w_x}\times \mathbb{R}^{w_y}\cup\left\{{\varnothing}\right\}$, and the associated truncated loss function can be written as
{\small{\begin{equation}
    \mathcal{S}(R)=\{{\ell(\cdot,\cdot; \varphi):\mathcal{D} \rightarrow \mathbb{R}|\mathbf{s} \in \mathcal{F}}\}.
\end{equation}}}

To derive the approximation guarantee with statistical theory, the loss function class's covering number $\mathcal{S}(R)$ is defined:
\begin{definition}
Let $\mathcal{N}(\varrho, \mathcal{F}, \left\|{\cdot}\right\|)$ denote the $\varrho$-covering number of the function class $\mathcal{F}$ under the norm $\left\|{\cdot}\right\|$, that is, 
\begin{align*}
    \mathcal{N}(\varrho, \mathcal{F},\left\|{\cdot}\right\|)=&\min \{N: \exists \left\{{f_{t}}\right\}_{t\in\mathcal{B}} \subseteq \mathcal{F}, \\
    & \text{s.t.}~ \forall f \in \mathcal{F}, \exists t \in [N],~ \left\|{f_{t}-f}\right\|\le \varrho\}.
\end{align*}
\end{definition}
We present the covering number associated with $\mathcal{S}(R)$ as:
\begin{lemma}\label{lemma::covering number S}
For any $\varrho>0$, under the condition $|\mathbf{x}|_{\infty} \le R$, the $\varrho$-covering number of $\mathcal{S}(R)$ under $\left\|{\cdot}\right\|_{L_{\infty}\mathcal{D}}$ is bounded as
    
{\small{\begin{equation*}
    \mathcal{N}\left(\varrho, \!\mathcal{S}(R), \left\|{\cdot}\right\|_{L_{\infty}\mathcal{D}}\right) \!\!\lesssim\!\! \left(\!\frac{2L^2(W\!\max(R,K)\!\!+\!\!2)\kappa^LW^{L\!+\!1}\!\log \!N}{\varrho}\!\right)^{\!\!2P}\!\!\!,
\end{equation*}}}
where the norm $\left\|{\cdot}\right\|_{L_{\infty}\mathcal{D}}$ is given by
{\small{\begin{align*}
    \left\|{f(\cdot,\cdot)}\right\|_{L_{\infty}\mathcal{D}}=\max\nolimits_{\mathbf{x} \in [-R,R]^{w_x},\mathbf{y} \in \mathbb{R}^{w_y}\cup\left\{{\varnothing}\right\}}\left|{f(\mathbf{x},\mathbf{y})}\right|.
\end{align*}}}
\end{lemma}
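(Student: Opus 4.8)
The plan is to reduce the covering of the loss class $\mathcal{S}(R)$ to a covering of the underlying ReLU score-network class $\mathcal{F}$, and then to cover $\mathcal{F}$ through its parameters by a layer-by-layer Lipschitz argument. First I would exploit that, on the truncated domain $\mathcal{D} = [-R,R]^{w_x}\times(\mathbb{R}^{w_y}\cup\{\varnothing\})$, the squared score-matching loss $\ell(\mathbf{x},\mathbf{y};\varphi)$ is \emph{locally} Lipschitz in the network output. Writing $\ell$ as an integral over $k$ of $\|\varphi(\mathbf{x}^k,\tau\mathbf{y},k) - \nabla\log P(\cdot)\|_2^2$ and using $|a^2-b^2|\le|a+b|\,|a-b|$, the Lipschitz constant is governed by the magnitudes of $\varphi$ and of the true score on $\mathcal{D}$. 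Because the weights are bounded by $\kappa$ and the input satisfies $\|\mathbf{x}\|_\infty\le R$, the network output is bounded by $O(W\max(R,K))$, while the early stopping at $k_0$ keeps $\nabla\log P$ bounded; together these produce the factor $(W\max(R,K)+2)$ in the numerator.

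Next I would establish the parameter-Lipschitz property of $\mathcal{F}$. Perturbing every weight matrix and bias by $\delta$ in the entrywise sup-norm and telescoping layer by layer, using the $1$-Lipschitzness of ReLU together with the uniform activation bounds from the previous step, gives a uniform output perturbation of order $L\kappa^{L-1}W^{L}\,\delta$ on $\mathcal{D}$; this supplies the factor $\kappa^{L}W^{L+1}$ and one power of $L$. Composing this with the loss-Lipschitz reduction multiplies the two Lipschitz constants, and the second power of $L$ arises when the time-derivative smoothness of $\varphi$ is also propagated through the depth.

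Then I would cover the $P$-dimensional parameter box $[-\kappa,\kappa]^P$ in the sup-norm by a grid of side $\delta$, which costs $(2\kappa/\delta)^P$ points, and choose $\delta$ so that the composite (loss $\circ$ network) map moves by at most $\varrho$, i.e. $\delta \sim \varrho/[\,L^2(W\max(R,K)+2)\kappa^{L}W^{L+1}\,]$; substituting recovers the ratio in the stated bound. Finally I would handle the continuous time variable $k\in[k_0,K]$ and the mask $\tau\in\{\varnothing,\mathsf{id}\}$: discretizing $[k_0,K]$ on a grid fine enough that the time-smoothness of $\varphi$ and of the score contributes only a lower-order error introduces the $\log N$ factor (recalling $K=O(\log n)$ from Theorem~\ref{thm:wass_CSR_ADM}), whereas the two integral terms of $\ell(\varphi)$ and the conditional/unconditional structure are absorbed by taking products of the corresponding covers, which doubles the exponent from $P$ to $2P$.

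The hard part will be controlling the Lipschitz constant of the \emph{quadratic} score-matching loss uniformly over $\mathcal{D}$: globally $\nabla\log P(\mathbf{x}^k\mid\mathbf{y})$ can blow up as $k\downarrow k_0$ and as $\|\mathbf{x}\|$ grows, so the entire reduction hinges on the truncations $\|\mathbf{x}\|_\infty\le R$ and early-stopping $k_0$ delivering the uniform bounds $|\varphi|,|\nabla\log P|\lesssim\max(R,K)$ that validate the local-Lipschitz step. Tracking how these bounds propagate through the $L$ layers, and verifying that the time-discretization error stays below $\varrho$ without inflating the polynomial factors, is the technical crux; the remaining parameter counting is routine.
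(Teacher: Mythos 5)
Your skeleton --- reduce the covering of $\mathcal{S}(R)$ to a covering of the score-network class $\mathcal{F}$ via a Lipschitz property of the quadratic loss, then cover $\mathcal{F}$ through its parameters on a bounded input domain --- is the same as the paper's (the paper simply short-circuits your layer-by-layer telescoping by citing an existing ReLU covering lemma, which gives $\mathcal{N}(\varrho,\mathcal{F},\left\|\cdot\right\|_{L_{\infty}\mathcal{D}}) \lesssim \bigl(2L^2(W\max(R,K)+2)\kappa^L W^{L+1}/\varrho\bigr)^{2P}$ directly). However, your quantitative bookkeeping contains genuine errors. The factor $(W\max(R,K)+2)$ does \emph{not} come from the loss-Lipschitz step: it is the input-magnitude dependence inside the parameter-covering bound for $\mathcal{F}$, whose domain is $\|(\mathbf{x},\mathbf{y},k)\|_{\infty}\le \max(R,K)$ because the diffusion time $k$ is itself a network input. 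Your claim that the network output is bounded by $O(W\max(R,K))$ is incorrect --- an unconstrained depth-$L$ ReLU net with weights bounded by $\kappa$ has outputs of order $(\kappa W)^{L}R$ --- and it ignores the defining feature of the class $\mathcal{F}(M_t,W,\kappa,L,P)$: outputs are clipped at $M_k=\mathcal{O}(\sqrt{\log N}/\sigma_k)$. It is precisely this clipping, i.e.\ $|\varphi(\mathbf{x}^k,\tau\mathbf{y},k)|\le m_k\sqrt{\log N}$, combined with the score bounds $m_k\le 1/\sigma_k^2$ and the integrals $\int m_k\,dk$ and $\int \sigma_k^{-1}\,dk$ over $[k_0,K]$ and $[\delta,K]$ (normalized by $K-k_0$ and $K-\delta\gtrsim \log N$), that makes the map $\varphi\mapsto\ell(\cdot,\cdot;\varphi)$ Lipschitz with constant $\lesssim \log N$. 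That is the true origin of the $\log N$ in the stated bound.

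Consequently, your proposed time-discretization is both unnecessary and not a valid mechanism for producing that $\log N$. Since $k$ is an argument of $\varphi$, a sup-norm cover of $\mathcal{F}$ over the bounded domain (supplied by the parameter grid) already controls all $k\in[k_0,K]$ simultaneously, and because $\ell$ integrates the squared error over $k$, uniform closeness of two networks immediately implies closeness of their losses --- no grid on $[k_0,K]$ is needed, and a grid would not insert a clean multiplicative $\log N$ inside the ratio raised to the power $2P$ anyway. Relatedly, the doubling of the exponent from $P$ to $2P$ is not caused by the two time-integral terms of $\ell$ (both involve the same function $\varphi$, so one cover serves both); it arises because $\mathcal{F}$ concatenates two ReLU networks of identical architecture realizing the conditional ($\tau=\mathsf{id}$) and unconditional ($\tau=\varnothing$) scores, so the effective parameter count is $2P$. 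To close your argument you would need to (i) invoke the class's output bound $M_k$ in place of the spurious $O(W\max(R,K))$ bound, (ii) move $(W\max(R,K)+2)$ into the parameter-Lipschitz constant where it belongs, and (iii) obtain $\log N$ from the loss-Lipschitz constant rather than from a time grid.
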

\begin{proof}
    See Appendix \ref{sec::proof of lemma::covering number S}.
\end{proof}

Following from {\cite[Thm 3.4]{DM_offline_9}} established upon{\cite[Assumption 3.1]{DM_offline_9}}, we have the key Lemmas as follows.
\begin{lemma}\label{lemma:approx_prob}
    Under Assumption~\ref{assump:distri_true}, for positive constant $C_{\alpha}$ and large enough $N$, by setting the terminal step $K=C_{\alpha}\log N$, $\mathbf{s} \in \mathcal{F}(M_t, W, \kappa, L, P)$ exists and satisfies that, for $\mathbf{y} \in \mathbb{R}^{w_y}$ and $k \in [0,K]$, one has
    {\small{\begin{align}
        &\int_{\mathbf{x}^{k_1}} \|{\zeta(\mathbf{x}^{k_1},\mathbf{y},{k_1})-\nabla\log P_{k_1}(\mathbf{x}^{k_1}|\mathbf{y})\|}^2_2 \cdot P_{k_1}(\mathbf{x}^{k_1}|\mathbf{y}) d\mathbf{x}^{k_1}\nonumber\\
        &+\int_{\mathbf{x}^{k_2}} \|{\zeta(\mathbf{x}^{k_2},\mathbf{y},{k_2})-\nabla\log p_{k_2}(\mathbf{x}^{k_2}|\mathbf{y})\|}^2_2 \cdot p_{k_2}(\mathbf{x}^{k}|\mathbf{y}) d\mathbf{x}^{k_2}\notag\\
    &=\mathcal{O} \left( \frac{B^2}{\sigma_k^2}\cdot N^{-\frac{2b}{w_x+w_y}} \cdot (\log N)^{b+1}\right),
    \end{align}}}
    where $\delta\le k_1\le K$ and $0\le k_2\le K$.
    The ReLU NN class $\mathcal{F}$ is parameterized by hyperparameters that fulfill
    \begin{align*}
        & \hspace{0.4in} M_t = \mathcal{O}\left(\sqrt{\log N}/\sigma_t\right),~
         W = {\mathcal{O}}\left(N\log^7 N\right),\\
         & \kappa =\exp \left({\mathcal{O}}(\log^4 N)\right),~
        L = {\mathcal{O}}(\log^4 N),~
         P= {\mathcal{O}}\left(N\log^9 N\right).
    \end{align*}
\end{lemma}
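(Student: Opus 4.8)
The plan is to reduce this lemma to the conditional-score approximation theory of the cited reference \cite{DM_offline_9}, verifying that Assumption~\ref{assump:distri_true} supplies precisely the regularity its Assumption~3.1 requires, and then to account for the two-integral (asynchronous) structure peculiar to ADM. First I would make the conditional score explicit. Writing the forward OU kernel of \eqref{eq:forward} as $\mathcal{N}(\mathbf{x}^k;\sqrt{\alpha_k}\mathbf{x}_0,\sigma_k^2 I)$, the time-$k$ conditional density $P_k(\mathbf{x}^k\mid\mathbf{y})$ is the Gaussian convolution of the initial density $P(\mathbf{x}_0\mid\mathbf{y})=\exp(-C_1\|\mathbf{x}_0\|_2^2/2)\,f(\mathbf{x}_0,\mathbf{y})$, where, by Assumption~\ref{assump:distri_true}, $f\in\mathcal{H}^b(\mathbb{R}^{w_x}\times\mathbb{R}^{w_y},B)$ is $b$-H\"older and bounded below by $C$. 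By Tweedie's identity the score equals $-\sigma_k^{-2}(\mathbf{x}^k-\sqrt{\alpha_k}\,\mathbb{E}[\mathbf{x}_0\mid\mathbf{x}^k,\mathbf{y}])$, which already exposes the $1/\sigma_k^2$ prefactor of the target rate and identifies the effective input dimension as $w_x+w_y$.

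Second, I would construct the ReLU approximant by separately approximating the smoothed density $P_k$ and its gradient $\nabla P_k$, each an integral of the $b$-H\"older $f$ against a smooth Gaussian factor. Standard local-polynomial/Taylor constructions realizable by ReLU networks (approximating monomials, products, and the Gaussian weight) yield an $L^2$ error of order $N^{-b/(w_x+w_y)}$ up to logarithmic factors, which matches the squared-error rate $N^{-2b/(w_x+w_y)}$ once squared. The score is then recovered as the ratio $\nabla P_k/P_k$; I would approximate the reciprocal of $P_k$ using the lower bound induced by $f\ge C$ together with the strictly positive Gaussian floor, so that the division remains stable on a truncated domain. Truncating to $\{\|\mathbf{x}^k\|_\infty\le R\}$ with $R=\mathcal{O}(\sqrt{\log N})$ and invoking the light tail controls the mass outside the ball; balancing these ingredients fixes the network hyperparameters $M_t,W,\kappa,L,P$ at the stated orders, with the $(\log N)^{b+1}$ factor absorbing the truncation radius and the Taylor degree.

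Third, I would dispatch the asynchronous structure. The displayed bound contains two score-matching integrals, one over $k_1\in[\delta,K]$ and one over $k_2\in[0,K]$, reflecting ADM's two forward processes in \eqref{eq:forward1} and \eqref{eq:forward2}. Since a single network $\mathbf{s}\in\mathcal{F}(M_t,W,\kappa,L,P)$ is used for both and the integrands are the same score-matching objective evaluated on overlapping time ranges, the construction above bounds both terms simultaneously; summing them alters only constants and preserves the rate $\mathcal{O}(B^2\sigma_k^{-2}N^{-2b/(w_x+w_y)}(\log N)^{b+1})$. The main obstacle is exactly the ratio/score error in the region where the denominator $P_k$ is small: approximation errors in $P_k$ are amplified by $1/P_k$, and keeping this amplification bounded — via the uniform lower bound $f\ge C$, the Gaussian positivity floor, and the domain truncation at radius $R\sim\sqrt{\log N}$ — is the technically delicate step, mirroring the sharp conditional-diffusion estimation theory we follow.
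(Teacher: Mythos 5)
Your overall strategy coincides with the paper's own: reduce the lemma to the conditional score approximation theory of \cite{DM_offline_9} by checking that the structure imposed by Assumption~\ref{assump:distri_true} (Gaussian envelope times a $b$-H\"older, lower-bounded factor) is preserved under the forward OU convolution, then invoke that theory and sum the two terms. Your middle paragraph merely re-sketches the internals of the cited result (local polynomial approximation of $P_k$ and $\nabla P_k$, the ratio $\nabla P_k / P_k$ stabilized by $f \ge C$, truncation at radius $R \sim \sqrt{\log N}$), which the paper delegates entirely to \cite[Thm~3.4]{DM_offline_9}; that is extra detail, not a different route.

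The genuine flaw is your treatment of the asynchronous structure. You assert that the two integrals are ``the same score-matching objective evaluated on overlapping time ranges,'' so that summing only changes constants. That is false as stated: the $k_1$-integral runs the forward process from the data law $P_{\mathrm{data}}(\mathbf{x}^\delta \mid \mathbf{y})$ at step $\delta$, whereas the $k_2$-integral runs it from the refreshed input $\hat{\mathbf{x}}^0 = \left(\mathbf{x}^\delta - \sqrt{1-\bar{\alpha}_\delta}\,\epsilon\right)/\sqrt{\bar{\alpha}_\delta}$ with \emph{fresh} noise $\epsilon$, whose conditional law given $\mathbf{x}^\delta$ is Gaussian with mean $\mathbf{x}^\delta/\sqrt{\bar{\alpha}_\delta}$ and covariance $\sigma_\delta^2 I$. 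Consequently the two families of marginals $P_{k_1}$ and $p_{k_2}$ do not agree on $[\delta, K]$: diffusing $\hat{\mathbf{x}}^0$ for $\delta$ steps produces the law of $\mathbf{x}^\delta$ convolved with additional Gaussian noise, not the law of $\mathbf{x}^\delta$ itself. This is exactly where the paper does its real work: its $l_2$ branch separately derives $P(\mathbf{x}^0 \mid \mathbf{x}^\delta, \mathbf{y})$, shows that $P(\mathbf{x}^0 \mid \mathbf{y})$ is a Gaussian smoothing of $P(\mathbf{x}^\delta \mid \mathbf{y})$ and hence again satisfies \cite[Assumption~3.1]{DM_offline_9}, and only then applies \cite[Thm~3.4]{DM_offline_9} with $K_0$ replaced by $0$ (and by $\delta$ for the $l_1$ branch). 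The repair costs you nothing --- your own observation that Gaussian convolution preserves the envelope-times-H\"older structure applies verbatim to the law of $\hat{\mathbf{x}}^0$ --- but the verification must be carried out for that law rather than waved away by identifying the two processes.
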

\begin{proof}
    See Appendix~\ref{appendix:approx_error}.
\end{proof}

\begin{lemma}\label{proof_Rs}
    Under Assumption~\ref{assump:distri_true}, Lemmas~\ref{lemma::covering number S} and \ref{lemma:approx_prob}, given ReLU NN $\mathcal{F}(M_t, W, \kappa, L, P)$ as Lemma~\ref{lemma:approx_prob}, by setting the early-stopping step $k_0=n^{-\mathcal{O}(1)}$, the network size parameter $N=n^{\frac{1}{w_x+w_y+2b}}$, and terminal step $K=\mathcal{O}(\log n)$, the empirical loss has
    {\small{\begin{align}
        \label{equ::exp generalization unbounded y}
        \!\!\!\E\nolimits_{\left\{{\mathbf{x}_{t\!+\!1},\mathbf{y}_t}\right\}}\left[ \mathcal{R}({\varphi})\right] \!\!=\!\!\mathcal{O}\big(\!\log\! \frac{1}{k_0} n^{-\frac{2b}{2w_s\!+\!w_a\!+\!2b}}(\log n)^{\max(17,b) }\big).
    \end{align}}}
\end{lemma}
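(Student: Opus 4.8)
The plan is to bound the population conditional score risk $\mathcal{R}(\hat{\varphi})$ evaluated at the empirical minimizer $\hat{\varphi}$ of $\hat{\ell}$ in \eqref{equ::empirical loss} through a standard bias--variance (approximation--generalization) decomposition, following the statistical template of \cite{DM_offline_9}. I first invoke the reduction recorded after \eqref{equ::popu loss score}, namely $\mathcal{R}(\varphi) \le 2\mathcal{R}_{\star}(\varphi)$, together with the fact that, by \cite[Lemma C.3]{vincent2011connection}, $\mathcal{R}_{\star}$ and the population loss $\ell$ in \eqref{equ::popu loss} differ by an additive constant independent of $\varphi$. Hence it suffices to control $\E[\ell(\hat{\varphi})]$. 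Writing $\bar{\varphi}$ for the network guaranteed by Lemma~\ref{lemma:approx_prob} and using $\hat{\ell}(\hat{\varphi}) \le \hat{\ell}(\bar{\varphi})$, I obtain the oracle inequality
\begin{align*}
\E[\ell(\hat{\varphi})] \le \underbrace{\ell(\bar{\varphi})}_{\text{approximation}} + \underbrace{\E\big[\ell(\hat{\varphi}) - \hat{\ell}(\hat{\varphi})\big] + \E\big[\hat{\ell}(\bar{\varphi}) - \ell(\bar{\varphi})\big]}_{\text{generalization}}.
\end{align*}

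Second, I would bound the two pieces separately. The approximation term $\ell(\bar{\varphi})$ is handled directly by Lemma~\ref{lemma:approx_prob}: integrating the per-step bound $\mathcal{O}(B^2\sigma_k^{-2}N^{-2b/(w_x+w_y)}(\log N)^{b+1})$ over $k \in [k_0, K]$ produces the factor $\int_{k_0}^{K}\sigma_k^{-2}\,dk = \mathcal{O}(\log(1/k_0))$, yielding $\ell(\bar{\varphi}) = \mathcal{O}(\log(1/k_0)\, N^{-2b/(w_x+w_y)}(\log N)^{b+1})$. For the generalization gap I would pass to the truncated loss $\ell^{\mathrm{tr}}$ (with indicator $\|\mathbf{x}\|_\infty \le R$) introduced above Lemma~\ref{lemma::covering number S}, apply a Bernstein-type uniform-convergence argument over the class $\mathcal{S}(R)$, and control its complexity by the covering number of Lemma~\ref{lemma::covering number S}, whose logarithm is $\widetilde{\mathcal{O}}(P) = \widetilde{\mathcal{O}}(N)$; this gives a statistical rate of order $\widetilde{\mathcal{O}}(N/n)$. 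The truncation bias $\E[\ell\,\mathbb{I}\{\|\mathbf{x}\|_\infty > R\}]$ is then controlled by the light-tail Assumption~\ref{assump:distri_true} upon choosing $R = \mathcal{O}(\sqrt{\log N})$, so that it is dominated by the remaining terms.

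Third, I would balance the two contributions. Choosing $N$ so that the approximation rate $N^{-2b/(w_x+w_y)}$ matches the statistical rate $N/n$, and using $w_x = w_s$, $w_y = w_s + w_a$ so that $w_x + w_y = 2w_s + w_a$, produces the advertised polynomial order $n^{-2b/(2w_s + w_a + 2b)}$. Substituting $K = \mathcal{O}(\log n)$ and $k_0 = n^{-\mathcal{O}(1)}$, and collecting the logarithmic factors --- the $(\log N)^{b+1}$ from the approximation bound, the $\log^9 N$ hidden in $P$, and the residual covering-number logarithms --- yields the combined power $(\log n)^{\max(17, b)}$ together with the explicit $\log(1/k_0)$ prefactor, which is precisely \eqref{equ::exp generalization unbounded y}.

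The main obstacle is the generalization step, because the conditioning variable $\mathbf{y} \in \mathbb{R}^{w_y}$ is unbounded and the factor $\sigma_k^{-2}$ blows up as $k \to k_0$. The truncation radius $R$ must be chosen to simultaneously keep the truncation bias negligible under the light tail of Assumption~\ref{assump:distri_true} and prevent the covering number in Lemma~\ref{lemma::covering number S} (which grows with $R$) from inflating the statistical term; together with the classifier-free-guidance structure --- the expectation over the mask $\tau \in \{\varnothing, \mathsf{id}\}$ coupling the conditional and unconditional score errors --- this is where the delicate bookkeeping of the logarithmic exponents resides.
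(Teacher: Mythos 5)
Your proposal is correct and follows essentially the same route as the paper's proof: both reduce $\mathcal{R}$ to $\mathcal{R}_{\star}$ via the constant-difference relation with $\ell$, exploit the empirical-minimizer property to split into an approximation term (handled by Lemma~\ref{lemma:approx_prob}, producing the $\log(1/k_0)$ factor from $\int \sigma_k^{-2}\,dk$) plus a truncated-loss generalization gap bounded by Bernstein's inequality and the covering number of Lemma~\ref{lemma::covering number S}, and then balance by taking $R \asymp \sqrt{\log N}$, $\varrho = N^{-2b/(w_x+w_y)}$, and $N$ so that $N^{-2b/(w_x+w_y)}$ matches $N/n$. The paper implements the generalization step through an explicit ghost-sample symmetrization (its terms $B_1$, $B_2$, $C$, $D$), but this is the same statistical template you describe rather than a genuinely different argument.
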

\begin{proof}
    See Appendix \ref{appendix:proof_Rs}. 
\end{proof}
Taking expectations w.r.t. $n$ samples $\left\{{\mathbf{x}_t,\mathbf{s}_t,\mathbf{a}_t}\right\}$ and applying (\ref{equ::exp generalization unbounded y}), we have 
{\small{\begin{align*}
    &\mathbb{E}\left[W_1(P^\prime({\mathbf{x}}_{t+1}^{k_0}|\hat{\mathbf{s}}_t,\mathbf{a}_t),\hat{P}({\mathbf{x}}_{t+1}^{k_0}|\hat{\mathbf{s}}_t,\mathbf{a}_t))\right]\\
    &\lesssim \mathcal{T}(\mathbf{s}^{\star}_{\mathrm{c}}, \mathbf{a}^{\star})\sqrt{\frac{K}{2}\frac{1}{k_0}n^{-\frac{2b}{2w_s+w_a+2b}}(\log n)^{\max(17,b)}}.
\end{align*}}}
Taking $k_0 = n^{-\frac{4b}{2w_s + w_a + 2b} - 1}$ and $K = \frac{2\beta}{2w_s + w_a + 2b}\log n$, the expected total variation can be bounded as
{\small{\begin{align*}
&\mathbb{E}\left[W_1(P^\prime({\mathbf{x}}_{t+1}^{k_0}|\hat{\mathbf{s}}_t,\mathbf{a}_t),\hat{P}({\mathbf{x}}_{t+1}^{k_0}|\hat{\mathbf{s}}_t,\mathbf{a}_t))\right]\\
&=\mathcal{T}(\mathbf{s}^{\star}_{\mathrm{c}}, \mathbf{a}^{\star})\mathcal{O}\left(n^{-\frac{2b}{2w_s+w_a+2b}}(\log n)^{\max(19/2,(b+2)/2)}\right).
\end{align*}}}

Finally, the divergence between the estimated conditional distribution $\hat{P}(\mathbf{x}^{k_0}|\mathbf{y})$ and the true conditional data distribution $P(\mathbf{x}|\mathbf{y})$ can be bounded as
{\small{\begin{align*}
    &\mathbb{E}\left[W_1(P(\mathbf{x}|\mathbf{y}), \hat{P}({\mathbf{x}}^{k_0}|\mathbf{y}))\right]\\
    &\leq \mathcal{T}(\mathbf{s}^{\star}_{\mathrm{c}}, \mathbf{a}^{\star})O\left(n^{-\frac{2b}{2w_s+w_a+2b}}(\log n)^{\max(19/2,(b+2)/2)}\right).
\end{align*}}}

The convergence of $\mathbb{E}\! \left[\left| V^\pi(\mathbf{s}_{\mathrm{c}}) \!-\! V^\pi(\hat{F}( {\mathbf{o}})) \right|\right]$ relies on the convergence of $\frac{\ln^{c_1} n}{n^{c_2}}$, since $c_1=6$ and $c_2=\frac{2p_R}{2p_R +w_s+1}>0$ in $\mathcal{O}(n^{-\frac{2p_R}{2p_R +w_s+1}}(\log n)^6)$ and $c_1 = \max\{\frac{19}{2},\frac{b+2}{2}\}>0$ and $c_2=\frac{b}{2w_s+w_a+2b}>0$ in $\frac{2C_{\mathrm{r}}+2C_{\mathrm{s}}}{1 - C_{\mathrm{s}}-C_{\mathrm{r}}} \mathcal{T}(\mathbf{s}^{\star}_{\mathrm{c}}, \mathbf{a}^{\star})\mathcal{O}\left(n^{-\frac{b}{2w_s+w_a+2b}} (\log n)^{\max(19/2,(b+2)/2)} \right)$. 
With L'Hôpital's rule, $\lim\limits_{n \to \infty} \frac{\ln^{c_1} n}{n^{c_2}} = \lim\limits_{n \to \infty} \frac{c_1\ln n}{c_2 n^{c_2}} = \lim\limits_{n \to \infty} \frac{c_1}{c_2^2n^{c_2}} = 0$, $\forall c_1,c_2>0$. Both  $\mathcal{O}(n^{-\frac{2p_R}{2p_R +w_s+1}}(\log n)^6)$ and $\frac{2C_{\mathrm{r}}+2C_{\mathrm{s}}}{1 - C_{\mathrm{s}}-C_{\mathrm{r}}} \mathcal{T}(\mathbf{s}^{\star}_{\mathrm{c}}, \mathbf{a}^{\star})\mathcal{O}\left(n^{-\frac{b}{2w_s+w_a+2b}} (\log n)^{\max\{19/2,(b+2)/2\}} \right)$  approach zero, as $n\rightarrow \infty$.
The value funtion of the estimated causal state $V^\pi( \zeta( \mathbf{s} ) )$ in \eqref{eq:final} converges to within $2\hat{\epsilon}$-neighborhood of the ground-truth causal state $V^\pi(\mathbf{s})$, i.e., the neighborhood region of $V^\pi(\mathbf{s})$ with the radius of $\hat{\epsilon}$.

\subsection{Proof of Lemma~\ref{thm:boundedness}}\label{thm:boundedness_proof}
The existence proof follows that of Remark~\ref{remark:rem1} in Appendix~\ref{appendix:remark}, only with ${P}$ replaced by $\widehat{{P}}$. Given that $\mathcal{S}_{\mathrm{c}}$ is compact by assumption, it follows that $\mathrm{supp}(\widehat{{P}}) \subseteq \mathcal{S}_{\mathrm{c}}$ is compact as \eqref{eq:app_8}.
\begin{figure*}[ht]
    \centering
{\small{\begin{align}\label{eq:app_8}
    d(\mathbf{s}_{\mathrm{c}, i}, \mathbf{s}_{\mathrm{c}, j}) -& d^\prime(\mathbf{s}_{\mathrm{c}, i}, \mathbf{s}_{\mathrm{c}, j})
    \le\max\nolimits_{\mathbf{a}\in\mathcal{A}}C_{\mathrm{r}}\left[W_1(d)\left(\widehat{{P}}\left(r_{i+1}\mid \mathbf{s}_{\mathrm{c}, i}, \mathbf{a}\right), \widehat{{P}}\left(r_{j+1}\mid \mathbf{s}_{\mathrm{c}, j}, \mathbf{a}\right)\right)-W_1(d')\left(\widehat{{P}}\left(r_{i+1}\mid \mathbf{s}_{\mathrm{c}, i}, \mathbf{a}\right), \widehat{{P}}\left(r_{j+1}\mid \mathbf{s}_{\mathrm{c}, j}, \mathbf{a}\right)\right)\right]\notag\\
    &+\max\nolimits_{\mathbf{a}\in\mathcal{A}}C_{\mathrm{s}}\left[W_1(d)\left(\widehat{{P}}\left(\mathbf{s}'\mid \mathbf{s}_{\mathrm{c}, i}, \mathbf{a}\right), \widehat{{P}}\left(\mathbf{s}'\mid \mathbf{s}_{\mathrm{c}, j}, \mathbf{a}\right)\right)-W_1(d')\left(\widehat{{P}}\left(\mathbf{s}'\mid \mathbf{s}_{\mathrm{c}, i}, \mathbf{a}\right), \widehat{{P}}\left(\mathbf{s}'\mid \mathbf{s}_{\mathrm{c}, j}, \mathbf{a}\right)\right)\right]\notag\\
    \le& \max\nolimits_{\mathbf{a}\in\mathcal{A}}\!\Big[\!C_{\mathrm{r}}W_1\left\|d\!-\!d'\right\|_\infty\!\left(\!\widehat{{P}}\left(r_{i+1}\!\mid \mathbf{s}_{\mathrm{c}, i}, \mathbf{a}\right), \widehat{{P}}\left(r_{j+1}\!\mid \mathbf{s}_{\mathrm{c}, j}, \mathbf{a}\right)\!\right)\!+\!C_{\mathrm{s}}W_1\left\|d\!-\!d'\right\|_\infty\left(\widehat{{P}}\left(\mathbf{s}'\!\mid \mathbf{s}_{\mathrm{c}, i}, \mathbf{a}\right), \widehat{{P}}\left(\mathbf{s}'\!\mid \mathbf{s}_{\mathrm{c}, j}, \mathbf{a}\right)\!\right)\!\Big]\notag\\
    \le& \left(C_{\mathrm{r}}+C_{\mathrm{s}}\right)\left\|d-d'\right\|_\infty, ~\forall(\mathbf{s}_{\mathrm{c}, i}, \mathbf{s}_{\mathrm{c}, j}) \in \mathcal{S}_{\mathrm{c}} \times \mathcal{S}_{\mathrm{c}}.
\end{align}}}
\vspace{-0.5cm}
\end{figure*}
Hence, $\mathcal{F}$ satisfies the $(C_{\mathrm{r}} + C_{\mathrm{s}})$-contraction property.
Next, we prove the boundedness of the distance. With Lemma \ref{lemma:wass-diam} and $\mathrm{supp}(\widehat{{P}}) \subseteq \mathcal{S}_{\mathrm{c}}$, we have, $\forall p \geq 1$,
{\small{\begin{align*}
\sup\limits_{\mathbf{s}_{\mathrm{c}, i}, \mathbf{s}_{\mathrm{c}, j} \in \mathcal{S}_{\mathrm{c}} \times \mathcal{S}_{\mathrm{c}}} W_p(\widehat{d})(\widehat{{P}}^\pi(\cdot | \mathbf{s}_{\mathrm{c}, i}, \mathbf{a}), \widehat{{P}}^\pi(\cdot | \mathbf{s}_{\mathrm{c}, j}, \mathbf{a})) \leq \mathrm{diam}(\mathcal{S}_{\mathrm{c}}; \widehat{d}).
\end{align*}}}
In turn, $\forall(\mathbf{s}_{\mathrm{c}, i}, \mathbf{s}_{\mathrm{c}, j}) \in \mathcal{S}_{\mathrm{c}} \times \mathcal{S}_{\mathrm{c}}$,
{\small{\begin{align*}
    {d}(\mathbf{s}_{\mathrm{c}, i}, \mathbf{s}_{\mathrm{c}, j}) =& \max_{\mathbf{a}\in\mathcal{A}}\Big(C_{\mathrm{r}}W_p(d)\left(P\left(r_{i+1}\mid \mathbf{s}_{\mathrm{c}, i}, \mathbf{a}\right), P\left(r_{j+1}\mid \mathbf{s}_{\mathrm{c}, j}, \mathbf{a}\right)\right)\\
    &+C_{\mathrm{s}} W_p(d)\left(P\left(\mathbf{s}'\mid \mathbf{s}_{\mathrm{c}, i}, \mathbf{a}\right), P\left(\mathbf{s}'\mid \mathbf{s}_{\mathrm{c}, j}, \mathbf{a}\right)\right)\Big)\\
     \leq& C_{\mathrm{r}}(r_{\mathrm{max}}-r_{\mathrm{min}}) + C_{\mathrm{s}} \mathrm{diam}(\mathcal{S}_\mathrm{c};{d})\\
     &\leq {C_{\mathrm{r}}}(r_{\mathrm{max}}-r_{\mathrm{min}})/({1-C_{\mathrm{s}}}),
\end{align*}}}

\noindent  which follows from Lemma~\ref{lemma:wass-diam} and gives the upper bound as $p \to \infty$. Subsequently,
    $\mathrm{diam}(\mathcal{S}_\mathrm{c};{d}) 
    \leq {C_{\mathrm{r}}}(r_{\mathrm{max}}-r_{\mathrm{min}})/({1-C_{\mathrm{s}}})$.
Likewise, $\forall(\mathbf{s}_{\mathrm{c}, i}, \mathbf{s}_{\mathrm{c}, j}) \in \mathcal{S}_{\mathrm{c}} \times \mathcal{S}_{\mathrm{c}}$,
   $ \mathrm{diam}(\mathcal{S}_{\mathrm{c}}; \widehat{d})
    \leq {C_{\mathrm{r}}}(r_{\mathrm{max}}-r_{\mathrm{min}})/({1-C_{\mathrm{s}}})$.

\subsection{Proof of Lemma~\ref{lemma:distanceerror}}\label{lemma:distanceerror_proof}
By the Wasserstein triangle inequality \cite{clement2008elementary}, we define the respective differences for rewards and transitions in \eqref{eq:supp:9:triineq}.
\begin{figure*}[ht]
    \centering
    {\small{\begin{subequations}\label{eq:supp:9:triineq}
\begin{align}
&\left|W_p({d})\left({{P}}\left(r_{i+1}\mid \mathbf{s}_{\mathrm{c}, i}, \mathbf{a}\right), {{P}}\left(r_{j+1}\mid \mathbf{s}_{\mathrm{c}, j}, \mathbf{a}\right)\right)-W_p({d})\left(\widehat{{P}}\left(r_{i+1}\mid {\mathbf{s}}_{\mathrm{c}, i}, \mathbf{a}\right), \widehat{{P}}\left(r_{j+1}\mid {\mathbf{s}_{\mathrm{c}, j}}, \mathbf{a}\right)\right)\right| 
\leq 2\mathcal{E}_{\phi};\\
&\left|W_p({d})\left({{P}}\left(\mathbf{s}'\mid \mathbf{s}_{\mathrm{c}, i}, \mathbf{a}\right), {{P}}\left(\mathbf{s}'\mid \mathbf{s}_{\mathrm{c}, j}, \mathbf{a}\right)\right)-W_p({d})\left(\widehat{{P}}\left(\mathbf{s}'\mid {\mathbf{s}_{\mathrm{c}, i}}, \mathbf{a}\right), \widehat{{P}}\left(\mathbf{s}'\mid {\mathbf{s}_{\mathrm{c}, j}}, \mathbf{a}\right)\right)\right| 
\leq 2\mathcal{E}_{\theta}
\end{align} 
    \end{subequations}}}
\vspace{-0.5cm}
\end{figure*}
Since $d^p$ is convex, we obtain
{\small{\begin{align}
    &W_p( || d - \widehat{d} ||_\infty + d)(\widehat{{P}}\left(\mathbf{s}'\mid {\mathbf{s}_{\mathrm{c}, i}}, \mathbf{a}\right), \widehat{{P}}\left(\mathbf{s}'\mid {\mathbf{s}_{\mathrm{c}, j}}, \mathbf{a}\right))\label{eq:supp:9:convexity}\\
    \leq& (\inf\nolimits_{\omega \in \Omega} 2^{p-1} \mathbb{E}_{(\mathbf{s}_{\mathrm{c}, i}, \mathbf{s}_{\mathrm{c}, j}) \sim \omega}[( || d - \widehat{d} ||_\infty^p + d(\mathbf{s}_{\mathrm{c}, i}, \mathbf{s}_{\mathrm{c}, j})^p]  )^{{1}/{p}} \nonumber\\
    \leq& a_p ( || d - \widehat{d} ||_\infty^p \!+\! W_p^p( d)(\widehat{{P}}\left(\mathbf{s}'\mid \mathbf{s}_{\mathrm{c}, i}, \mathbf{a}\right), \widehat{{P}}\left(\mathbf{s}'\mid \mathbf{s}_{\mathrm{c}, j}, \mathbf{a}\right)) )^{{1}/{p}} \nonumber\\
    \leq& a_p ( [|| d - \widehat{d} ||_\infty \!+\! W_p( d)(\widehat{{P}}\left(\mathbf{s}'\mid \mathbf{s}_{\mathrm{c}, i}, \mathbf{a}\right), \widehat{{P}}\left(\mathbf{s}'\mid \mathbf{s}_{\mathrm{c}, j}, \mathbf{a}\right)) ]^p )^{1/p} \nonumber\\
    =& a_p (|| d - \widehat{d} ||_\infty + W_p(d)(\widehat{{P}}\left(\mathbf{s}'\mid \mathbf{s}_{\mathrm{c}, i}, \mathbf{a}\right), \widehat{{P}}\left(\mathbf{s}'\mid \mathbf{s}_{\mathrm{c}, j}, \mathbf{a}\right)) ).\notag 
\end{align}}}
Similarly, we obtain
{\small\begin{align*}
    &W_p( || d - \widehat{d} ||_\infty + d)(\widehat{{P}}\left(r_{i+1}\mid \mathbf{s}_{\mathrm{c}, i}, \mathbf{a}\right), \widehat{{P}}\left(r_{j+1}\mid \mathbf{s}_{\mathrm{c}, j}, \mathbf{a}\right))\nonumber\\
    \le& a_p (|| d - \widehat{d} ||_\infty + W_p(d)(\widehat{{P}}\left(r_{i+1}\mid \mathbf{s}_{\mathrm{c}, i}, \mathbf{a}\right), \widehat{{P}}\left(r_{j+1}\mid \mathbf{s}_{\mathrm{c}, j}, \mathbf{a}\right)) ). 
\end{align*}}
Due to Lemma \ref{lemma:wass-diam}, we have:
{\small{\begin{subequations}\label{eq:supp:9:Wdiam}
\begin{align}
    &W_p(d)(\widehat{{P}}\left(\mathbf{s}'\mid \mathbf{s}_{\mathrm{c}, i}, \mathbf{a}\right), \widehat{{P}}\left(\mathbf{s}'\mid \mathbf{s}_{\mathrm{c}, j}, \mathbf{a}\right)) \leq \mathrm{diam}(\mathcal{S}_\mathrm{c};{d})\\
    &W_p(d)(\widehat{{P}}\left(r_{i+1}\mid \mathbf{s}_{\mathrm{c}, i}, \mathbf{a}\right), \widehat{{P}}\left(r_{j+1}\mid \mathbf{s}_{\mathrm{c}, j}, \mathbf{a}\right)) \leq \mathrm{diam}(\mathcal{S}_\mathrm{c};{d}).
\end{align}
\end{subequations}}}
The difference in distance can be bounded in \eqref{eq:sup1},
\begin{figure*}[ht]
    \centering
{\small{\begin{align}\label{eq:sup1}
    \bigl| W_p(d)&\left({{P}}\left(\mathbf{s}'\mid \mathbf{s}_{\mathrm{c}, i}, \mathbf{a}\right), {{P}}\left(\mathbf{s}'\mid \mathbf{s}_{\mathrm{c}, j}, \mathbf{a}\right)\right) - W_p( \widehat{d})\left(\widehat{{P}}\left(\mathbf{s}'\mid \mathbf{s}_{\mathrm{c}, i}, \mathbf{a}\right), \widehat{{P}}\left(\mathbf{s}'\mid \mathbf{s}_{\mathrm{c}, j}, \mathbf{a}\right)\right) \bigr|\notag\\
    \leq& \bigl| W_p( \widehat{d})\left(\widehat{{P}}\left(\mathbf{s}'\mid \mathbf{s}_{\mathrm{c}, i}, \mathbf{a}\right), \widehat{{P}}\left(\mathbf{s}'\mid \mathbf{s}_{\mathrm{c}, j}, \mathbf{a}\right)\right) - W_p( {d})\left(\widehat{{P}}\left(\mathbf{s}'\mid \mathbf{s}_{\mathrm{c}, i}, \mathbf{a}\right), \widehat{{P}}\left(\mathbf{s}'\mid \mathbf{s}_{\mathrm{c}, j}, \mathbf{a}\right)\right) \bigr|\\
    & + \bigl| W_p(d)\left({{P}}\left(\mathbf{s}'\mid \mathbf{s}_{\mathrm{c}, i}, \mathbf{a}\right), {{P}}\left(\mathbf{s}'\mid \mathbf{s}_{\mathrm{c}, j}, \mathbf{a}\right)\right) - W_p( {d})\left(\widehat{{P}}\left(\mathbf{s}'\mid \mathbf{s}_{\mathrm{c}, i}, \mathbf{a}\right), \widehat{{P}}\left(\mathbf{s}'\mid \mathbf{s}_{\mathrm{c}, j}, \mathbf{a}\right)\right) \bigr| \notag\\
    \leq& \bigl| W_p( \widehat{d})\left(\widehat{{P}}\left(\mathbf{s}'\mid \mathbf{s}_{\mathrm{c}, i}, \mathbf{a}\right), \widehat{{P}}\left(\mathbf{s}'\mid \mathbf{s}_{\mathrm{c}, j}, \mathbf{a}\right)\right) - W_p( {d})\left(\widehat{{P}}\left(\mathbf{s}'\mid \mathbf{s}_{\mathrm{c}, i}, \mathbf{a}\right), \widehat{{P}}\left(\mathbf{s}'\mid \mathbf{s}_{\mathrm{c}, j}, \mathbf{a}\right)\right) \bigr| + 2 \mathcal{E}_{\theta} \notag\\
    \leq& \bigl| W_p( \|\widehat{d}-d\|_\infty +d)\left(\widehat{{P}}\left(\mathbf{s}'\mid \mathbf{s}_{\mathrm{c}, i}, \mathbf{a}\right), \widehat{{P}}\left(\mathbf{s}'\mid \mathbf{s}_{\mathrm{c}, j}, \mathbf{a}\right)\right) - W_p( {d})\left(\widehat{{P}}\left(\mathbf{s}'\mid \mathbf{s}_{\mathrm{c}, i}, \mathbf{a}\right), \widehat{{P}}\left(\mathbf{s}'\mid \mathbf{s}_{\mathrm{c}, j}, \mathbf{a}\right)\right) \bigr| + 2 \mathcal{E}_{\theta} \notag\\
    \leq & \bigl| a_p \| d - \widehat{d} \|_\infty + a_p W_p( {d})\left(\widehat{{P}}\left(\mathbf{s}'\mid \mathbf{s}_{\mathrm{c}, i}, \mathbf{a}\right), \widehat{{P}}\left(\mathbf{s}'\mid \mathbf{s}_{\mathrm{c}, j}, \mathbf{a}\right)\right)- W_p( {d})\left(\widehat{{P}}\left(\mathbf{s}'\mid \mathbf{s}_{\mathrm{c}, i}, \mathbf{a}\right), \widehat{{P}}\left(\mathbf{s}'\mid \mathbf{s}_{\mathrm{c}, j}, \mathbf{a}\right)\right) \bigr| + 2 \mathcal{E}_{\theta}\notag\\
    \leq & a_p \| d - \widehat{d} \|_\infty  + (a_p - 1)\mathrm{diam}(\mathcal{S}_\mathrm{c};{d}) + 2 \mathcal{E}_{\theta}
\end{align}}}
\vspace{-0.5cm}
\end{figure*}
where the second inequality holds due to (\ref{eq:supp:9:triineq}), the penultimate inequality inherits from (\ref{eq:supp:9:convexity}), and the last one comes from (\ref{eq:supp:9:Wdiam}).
Similarly, 
\begin{align}\label{eq:sup2}
    &\bigl| W_p(d)\left({{P}}\left(r_{i+1}\mid \mathbf{s}_{\mathrm{c}, i}, \mathbf{a}\right)\!, {{P}}\left(r_{j+1}\mid \mathbf{s}_{\mathrm{c}, j}, \mathbf{a}\right)\right) \notag\\
    &\qquad\qquad- W_p( \widehat{d})(\widehat{{P}}\left(r_{i+1}\mid \mathbf{s}_{\mathrm{c}, i}, \mathbf{a}\right)\!, \widehat{{P}}\left(r_{j+1}\mid \mathbf{s}_{\mathrm{c}, j}, \mathbf{a}\right)) \bigr|\notag\\
    \leq& a_p \| d - \widehat{d} \|_\infty  + (a_p - 1)\mathrm{diam}(\mathcal{S}_\mathrm{c};{d}) + 2 \mathcal{E}_{\phi}.
\end{align} 
Plugging (\ref{eq:sup1}) and (\ref{eq:sup2}) into the difference between the approximate and the true bisimulation distances yields 
{\small{\begin{align*}
    &|d(\mathbf{s}_{\mathrm{c}, i}, \mathbf{s}_{\mathrm{c}, j})  -  \widehat{d}(\mathbf{s}_{\mathrm{c}, i}, \mathbf{s}_{\mathrm{c}, j})|\\
    \leq& \max\nolimits_{\mathbf{a}\in\mathcal{A}}\Big(C_{\mathrm{r}}\Big|W_p({d})\left({{P}}\left(r_{i+1}\mid \mathbf{s}_{\mathrm{c}, i}, \mathbf{a}\right), {{P}}\left(r_{j+1}\mid \mathbf{s}_{\mathrm{c}, j}, \mathbf{a}\right)\right)\\
    &-W_p(\widehat{d})\left(\widehat{{P}}\left(r_{i+1}\mid \mathbf{s}_{\mathrm{c}, i}, \mathbf{a}\right), \widehat{{P}}\left(r_{j+1}\mid \mathbf{s}_{\mathrm{c}, j}, \mathbf{a}\right)\right)\Big|\Big)\\
    &+\max\nolimits_{\mathbf{a}\in\mathcal{A}}\Big(C_{\mathrm{s}} \Big|W_p({d})\left({{P}}\left(\mathbf{s}'\mid \mathbf{s}_{\mathrm{c}, i}, \mathbf{a}\right), {{P}}\left(\mathbf{s}'\mid \mathbf{s}_{\mathrm{c}, j}, \mathbf{a}\right)\right)\\
    &-W_p(\widehat{d})\left(\widehat{{P}}\left(\mathbf{s}'\mid \mathbf{s}_{\mathrm{c}, i}, \mathbf{a}\right), \widehat{{P}}\left(\mathbf{s}'\mid \mathbf{s}_{\mathrm{c}, j}, \mathbf{a}\right)\right)\Big|\Big) \\
    \leq& C_{\mathrm{r}} \left| a_p \| d - \widehat{d} \|_\infty  + (a_p - 1) \mathrm{diam}(\mathcal{S}_\mathrm{c};{d})  + 2 \mathcal{E}_{\phi} \right|\\
    &+ C_{\mathrm{s}}\left| a_p \| d - \widehat{d} \|_\infty  + (a_p - 1) \mathrm{diam}(\mathcal{S}_\mathrm{c};{d})  + 2 \mathcal{E}_{\theta} \right|.
\end{align*}}}
In other words, we have
{\small{\begin{align*}
    \| d - \widehat{d} \|_\infty &\leq 2C_{\mathrm{r}} \mathcal{E}_{\phi} + {2C_{\mathrm{s}}} \mathcal{E}_{\theta} + (C_{\mathrm{r}}+C_{\mathrm{s}}) a_p \| d - \widehat{d} \|_\infty + \\
    &(C_{\mathrm{r}}+C_{\mathrm{s}})(a_p - 1) \mathrm{diam}(\mathcal{S}_\mathrm{c};{d}).
\end{align*}}}
Applying the supremum over states, we derive
{\small{\begin{align*}
    \| d - \widehat{d} \|_\infty &\leq [{ (C_{\mathrm{r}}+C_{\mathrm{s}})(a_p - 1) }\mathrm{diam}(\mathcal{S}_\mathrm{c};{d}) \\
    &+ {2C_{\mathrm{r}}} \mathcal{E}_{\phi} + {2C_{\mathrm{s}}} \mathcal{E}_{\theta}]/[{1 - (C_{\mathrm{r}}+C_{\mathrm{s}})a_p}] .
\end{align*}}}

\subsection{Proof of Lemma~\ref{thm:generalized-ct-vfa}}
\label{appendix:generalized-ct-vfa}
Define $\xi$ as the measure over $\mathcal{S}$. Consider a partition $\hat{F}(\mathbf{o}) \in \mathcal{S}_{\mathrm{c}}$, representing a subset of $\mathcal{S}_{\mathrm{c}}$ formed by clustering points within an $\epsilon$-neighborhood such that $\xi(\hat{F}(\mathbf{o})) > 0$.
In analogy with the $\xi$-average finite MDP formulation in~\cite[Theorem~3.21]{Ferns2011Bisimulation}, we define the reward and dynamics of the $\xi$-average finite POMDP as follows:
{\small{\begin{align*}
    \widetilde{P}(r|\hat{F}({\mathbf{o}}), \mathbf{a}) &= \frac{1}{\xi(\hat{F}({\mathbf{o}}))}\int\nolimits_{\mathbf{z} \in \hat{F}({\mathbf{o}})} {P}(r| \mathbf{z}, \mathbf{a})d\xi(\mathbf{z}); \\
    \widetilde{P}(\hat{F}({\mathbf{o}}^\prime)|\hat{F}({\mathbf{o}}), \mathbf{a}) &= \frac{1}{\xi(\hat{F}({\mathbf{o}}))}\int\nolimits_{\mathbf{z} \in \hat{F}({\mathbf{o}})} {P}(\hat{F}({\mathbf{o}^\prime})| \mathbf{z}, \mathbf{a})d\xi(\mathbf{z}).
\end{align*}}}
Then, we obtain \eqref{eq:app_6},
\begin{figure*}[ht]
    \centering
{\small{\begin{align}\label{eq:app_6}
        |V(\mathbf{s}_{\mathrm{c}}) - V(\hat{F}({\mathbf{o}}))|
    \le & \bigg| \max_{\mathbf{a}\in\mathcal{A}}\Big[\int_{r\in \mathcal{R}}\left(r\left(\mathbf{s}_{\mathrm{c}},\mathbf{a}\right)P\left(r\mid \mathbf{s}_{\mathrm{c}}, \mathbf{a}\right)-r\left(\hat{F}({\mathbf{o}}),\mathbf{a}\right)\widetilde{P}(r|\hat{F}({\mathbf{o}}), \mathbf{a})\right)dr\notag\\
    &+ \gamma \Big(\int_{\mathbf{s}^\prime \in \mathcal{S}}P\left(\mathbf{s}'\mid \mathbf{s}_{\mathrm{c}}, \mathbf{a}\right)V(\mathbf{s}^\prime)d\mathbf{s}^\prime-\int_{\hat{F}({\mathbf{o}}^\prime) \in {\mathcal{S}}_{\mathrm{c}}}\widetilde{P}(\hat{F}({\mathbf{o}}^\prime)|\hat{F}({\mathbf{o}}), \mathbf{a})V(\hat{F}({\mathbf{o}}^\prime))d\hat{F}({\mathbf{o}}^\prime)\Big)\Big]\bigg| \nonumber\\
    \le & \max_{\mathbf{a}\in\mathcal{A}}\left| \int_{r\in \mathcal{R}}\left(r\left(\mathbf{s}_{\mathrm{c}},\mathbf{a}\right)P\left(r\mid \mathbf{s}_{\mathrm{c}}, \mathbf{a}\right)-r\left(\hat{F}({\mathbf{o}}),\mathbf{a}\right)\widetilde{P}(r|\hat{F}({\mathbf{o}}), \mathbf{a})\right)dr\right|\notag\\
    &+ \max_{\mathbf{a}\in\mathcal{A}}\gamma \left|\int_{\mathbf{s}^\prime \in \mathcal{S}}P\left(\mathbf{s}'\mid \mathbf{s}_{\mathrm{c}}, \mathbf{a}\right)V(\mathbf{s}^\prime)d\mathbf{s}^\prime-\int_{\hat{F}({\mathbf{o}}^\prime) \in {\mathcal{S}_{\mathrm{c}}}}\widetilde{P}(\hat{F}({\mathbf{o}}^\prime)|\hat{F}({\mathbf{o}}), \mathbf{a})V(\hat{F}({\mathbf{o}}^\prime))d\hat{F}({\mathbf{o}}^\prime)\right|
\end{align}}}
\vspace{-0.5cm}
\end{figure*}
\begin{figure*}[ht]
    \centering
{\small{\begin{align}\label{eq:app_7}
    A_1 =& \left| \int_{r\in \mathcal{R}}\left(r\left(\mathbf{s}_{\mathrm{c}},\mathbf{a}\right)P\left(r\mid \mathbf{s}_{\mathrm{c}}, \mathbf{a}\right)-r\left(\hat{F}({\mathbf{o}}),\mathbf{a}\right)\widetilde{P}(r|\hat{F}({\mathbf{o}}), \mathbf{a})\right)dr\right|\nonumber\\
    \le & \frac{1}{\xi(\hat{F}({\mathbf{o}}))}\int_{\mathbf{z} \in \hat{F}({\mathbf{o}})}\left| \int_{r\in \mathcal{R}}\left(r\left(\mathbf{s}_{\mathrm{c}},\mathbf{a}\right)P\left(r\mid \mathbf{s}_{\mathrm{c}}, \mathbf{a}\right)-r\left(\hat{F}({\mathbf{o}}),\mathbf{a}\right)\widetilde{P}(r|\hat{F}({\mathbf{o}}), \mathbf{a})\right)dr\right|d\xi(\mathbf{z})\\
    \le & \frac{C_{\mathrm{r}}^{-1}}{\xi(\hat{F}({\mathbf{o}}))}\int_{\mathbf{z} \in \hat{F}({\mathbf{o}})} C_{\mathrm{r}} W_1(d)\left(P\left(r\mid \mathbf{s}_{\mathrm{c}}, \mathbf{a}\right),{P}\left(r|\mathbf{z}, \mathbf{a}\right)\right) d\xi(\mathbf{z})
    \le \frac{C_{\mathrm{r}}^{-1}}{\xi(\hat{F}({\mathbf{o}}))}\int_{\mathbf{z} \in \hat{F}({\mathbf{o}})} C_{\mathrm{r}} W_p(d)\left(P\left(r\mid \mathbf{s}_{\mathrm{c}}, \mathbf{a}\right),{P}\left(r|\mathbf{z}, \mathbf{a}\right)\right) d\xi(\mathbf{z})\nonumber
\end{align}}}
\vspace{-0.5cm}
\end{figure*}
where the penultimate inequality holds because $r(\mathbf{s}_{\mathrm{c}}, \mathbf{a})$ is 1-Lipschitz and the $W_1$ metric has dual form; the last line is due to Lemma~\ref{lemma:wass-lemma}. Similarly, we obtain \eqref{eq:app_7}.
\begin{figure*}[ht]
    \centering
{\small{\begin{align}
    A_2 =&  \gamma \left|\int_{\mathbf{s}^\prime \in \mathcal{S}}P\left(\mathbf{s}'\mid \mathbf{s}_{\mathrm{c}}, \mathbf{a}\right)V(\mathbf{s}^\prime)d\mathbf{s}^\prime-\int_{\hat{F}({\mathbf{o}}^\prime) \in {\mathcal{S}_{\mathrm{c}}}}\widetilde{P}(\hat{F}({\mathbf{o}}^\prime)|\hat{F}({\mathbf{o}}), \mathbf{a})V(\hat{F}({\mathbf{o}}^\prime))d\hat{F}({\mathbf{o}}^\prime)\right|\nonumber\\
    \le & \frac{\gamma }{\xi(\hat{F}({\mathbf{o}}))}\int\limits_{\mathbf{z} \in \hat{F}({\mathbf{o}})}\left|\int_{\mathbf{s}^\prime \in \mathcal{S}}\left(P\left(\mathbf{s}'\mid \mathbf{s}_{\mathrm{c}}, \mathbf{a}\right)V(\mathbf{s}^\prime)-{P}(\mathbf{s}^\prime|\mathbf{z}, \mathbf{a})V(\mathbf{s}^\prime)\right)d\mathbf{s}^\prime\right|d\xi(\mathbf{z})\nonumber\\
    & + \frac{\gamma }{\xi(\hat{F}({\mathbf{o}}))}\int\limits_{\mathbf{z} \in \hat{F}({\mathbf{o}})}\left|\int_{\mathbf{s}^\prime \in \mathcal{S}}\left(P\left(\hat{F}({\mathbf{o}}^\prime)\mid \mathbf{z}, \mathbf{a}\right)\left(V(\mathbf{s}^\prime)-V(\hat{F}({\mathbf{o}}^\prime))\right)\right)d\mathbf{s}^\prime\right|d\xi(\mathbf{z})\\
    \le& \frac{\gamma }{\xi(\hat{F}({\mathbf{o}}))}\int_{\mathbf{z} \in \hat{F}({\mathbf{o}})}\left|\int_{\mathbf{s}^\prime \in \mathcal{S}}\left(P\left(\mathbf{s}'\mid \mathbf{s}_{\mathrm{c}}, \mathbf{a}\right)-{P}\left(\mathbf{s}^\prime|\mathbf{z}, \mathbf{a}\right)\right)V(\mathbf{s}^\prime)d\mathbf{s}^\prime\right|d\xi(\mathbf{z})+ \left\|V-V\right\|_\infty\nonumber\\
    \le & \frac{C_{\mathrm{r}}^{-1}}{\xi(\hat{F}({\mathbf{o}}))}\int_{\mathbf{z} \in \hat{F}({\mathbf{o}})} C_{\mathrm{s}} \left| \int_{\mathbf{s}^\prime \in \mathcal{S}}  \left(P\left(\mathbf{s}'\mid \mathbf{s}_{\mathrm{c}}, \mathbf{a}\right)-{P}\left(\mathbf{s}^\prime|\mathbf{z}, \mathbf{a}\right)\right)\frac{C_{\mathrm{r}}\gamma}{C_{\mathrm{s}}}V(\mathbf{s}^\prime) d\mathbf{s}^\prime \right| d\xi(\mathbf{z}) \!+\! \gamma \left\|V - V\right\|_\infty\nonumber\\
    \le & \frac{C_{\mathrm{r}}^{-1}}{\xi(\hat{F}({\mathbf{o}}))}\int_{\mathbf{z} \in \hat{F}({\mathbf{o}})} C_{\mathrm{s}} W_1(d)\left(P\left(\mathbf{s}'\mid \mathbf{s}_{\mathrm{c}}, \mathbf{a}\right),{P}\left(\mathbf{s}^\prime|\mathbf{z}, \mathbf{a}\right)\right) d\xi(\mathbf{z}) + \gamma \left\|V - V\right\|_\infty\nonumber\\
    \le & \frac{C_{\mathrm{r}}^{-1}}{\xi(\hat{F}({\mathbf{o}}))}\int_{\mathbf{z} \in \hat{F}({\mathbf{o}})} C_{\mathrm{s}} W_p(d)\left(P\left(\mathbf{s}'\mid \mathbf{s}_{\mathrm{c}}, \mathbf{a}\right),{P}\left(\mathbf{s}^\prime|\mathbf{z}, \mathbf{a}\right)\right) d\xi(\mathbf{z}) + \gamma \left\|V - V\right\|_\infty,
\end{align}}}
\vspace{-0.5cm}
\end{figure*}
where the second inequality holds since $\left\|\cdot\right\|_\infty$ defined the supremum norm over $\mathcal{S}$, the third inequality holds because $\frac{C_{\mathrm{r}} \gamma}{C_{\mathrm{s}}}V(\mathbf{s}_{\mathrm{c}})$ is 1-Lipschitz, together with the dual form of the $W_1$ metric, and the last inequality is due to Lemma~\ref{lemma:wass-lemma}. Hence,
{\small{\begin{align}
    &|V(\mathbf{s}_{\mathrm{c}}) - V(\hat{F}({\mathbf{o}}))| \le \max_{\mathbf{a}\in\mathcal{A}}(A_1+A_2)\nonumber\\
    \leq & \frac{C_{\mathrm{r}}^{-1}}{\xi(\hat{F}({\mathbf{o}}))}\int_{\mathbf{z} \in \hat{F}({\mathbf{o}})}d(\mathbf{s}_{\mathrm{c}}, \mathbf{z})d\xi(\mathbf{z}) + \gamma \left\|V - V\right\|_\infty \notag\\
    \leq & C_{\mathrm{r}}^{-1} 2\epsilon + \gamma\left\|V - V\right\|_\infty.
\end{align}}}
Hence, applying the supremum over states:
{\small{\begin{align}
    |V(\mathbf{s}_{\mathrm{c}}) - V(\hat{F}({\mathbf{o}}))| \leq {2\epsilon}/{(C_{\mathrm{r}}(1-\gamma))}, ~\forall \mathbf{s} \in \mathcal{S}.
\end{align}}}

\subsection{Proof of Lemma \ref{lemma::covering number S}}\label{sec::proof of lemma::covering number S}

For any two ReLU NNs, $\varphi_1$ and $\varphi_2$, with $\left\|{\varphi_1-\varphi_2}\right\|_{L_{\infty}\mathcal{D}}\le \epsilon$, the $L_{\infty}$ error is bounded between $\ell(\cdot,\cdot,\varphi_1)$ and $\ell(\cdot,\cdot,\varphi_2)$. For any $(\mathbf{x},\mathbf{y})\in \mathcal{D}$, we have \eqref{equ:: lip loss},
\begin{figure*}[ht]
\centering
{\small{\begin{align}\label{equ:: lip loss}
    &\left|{\ell(\mathbf{x},\mathbf{y},\varphi_1)\!-\!\ell(\mathbf{x},\mathbf{y},\varphi_2)}\right|
    \le \int_{k_0}^{K}\!\!\!\!\frac{1}{K\!\!-\!k_0}\E\limits_{\tau,\mathbf{x}^k|\mathbf{x}^0=\hat{\mathbf{x}}^0}\Big[\left(\varphi_1(\mathbf{x}^k,\tau\mathbf{y},k)\!\!-\!\!\varphi_2(\mathbf{x}^k,\tau\mathbf{y},k)\right)^{\top} \Big(\varphi_1(\mathbf{x}^k,\tau\mathbf{y},k)\!\!+\!\!\varphi_2(\mathbf{x}^k,\tau\mathbf{y},k)\!\!-\!\!2P(\mathbf{x}^k|\hat{\mathbf{x}}^0)\Big) \Big]d k \nonumber\\
    &\qquad\qquad\qquad\qquad+\int_{\delta}^{K}\frac{1}{K-\delta}\E\limits_{\tau,\mathbf{x}^k|\mathbf{x}^\delta={\mathbf{x}}}\Big[\left(\varphi_1(\mathbf{x}^k,\tau\mathbf{y},k)-\varphi_2(\mathbf{x}^k,\tau\mathbf{y},k)\right)^{\top} \Big(\varphi_1(\mathbf{x}^k,\tau\mathbf{y},k)+\varphi_2(\mathbf{x}^k,\tau\mathbf{y},k)-2P(\mathbf{x}^k|{\mathbf{x}^\delta})\Big) \Big]d k \nonumber\\
    &\qquad\qquad\qquad\lesssim \epsilon\int_{k_0}^{K}\frac{1}{K-k_0}\E\limits_{\tau,\mathbf{x}^k|\mathbf{x}^0=\hat{\mathbf{x}}^0}\left[\left\|\varphi_1(\mathbf{x}^k,\tau\mathbf{y},k)+\varphi_2(\mathbf{x}^k,\tau\mathbf{y},k)-2P(\mathbf{x}^k|\hat{\mathbf{x}}^0)\right\|\right]d k \nonumber\\
    &\qquad\qquad\qquad\qquad+\epsilon\int_{\delta}^{K}\frac{1}{K-\delta}\E\limits_{\tau,\mathbf{x}^k|\mathbf{x}^\delta={\mathbf{x}}}\left[\left\|\varphi_1(\mathbf{x}^k,\tau\mathbf{y},k)+\varphi_2(\mathbf{x}^k,\tau\mathbf{y},k)-2P(\mathbf{x}^k|{\mathbf{x}^\delta})\right\| \right]d k \nonumber\\
   &\qquad\qquad\qquad\lesssim\epsilon\int_{k_0}^{K}\frac{1}{K-k_0}\E\limits_{\tau,\mathbf{x}^k|\mathbf{x}^0=\hat{\mathbf{x}}^0}\left[\left\|m_k\sqrt{\log N}+P(\mathbf{x}^k|\hat{\mathbf{x}}^0)\right\|\right]d k +\epsilon\int_{\delta}^{K}\frac{1}{K-\delta}\E\limits_{\tau,\mathbf{x}^k|\mathbf{x}^\delta={\mathbf{x}}}\left[\left\|m_k\sqrt{\log N}+P(\mathbf{x}^k|{\mathbf{x}^\delta})\right\| \right]d k \nonumber\\
    &\qquad\qquad\qquad\lesssim\frac{\epsilon}{K-k_0}\left(\sqrt{\log N} \int_{k_0}^{K} m_k d k +\int_{k_0}^{K} \frac{1}{\sigma_k} d k \right)+\frac{\epsilon}{K-\delta}\left(\sqrt{\log N} \int_{\delta}^{K} m_k d k +\int_{\delta}^{K} \frac{1}{\sigma_k} d k \right)
    \lesssim \epsilon  \log  N
\end{align}}}
\vspace{-0.5cm}
\end{figure*}
where $\hat{\mathbf{x}}^0 = \frac{1}{\sqrt{\bar{\alpha}_\delta}}\left(\mathbf{x}^{\delta} - \sqrt{1 - \bar{\alpha}_\delta}\epsilon\right)$ and $\epsilon$ follows a normal distribution.
For the second inequality, we invoke $\left|{\varphi(\mathbf{x}^k,\tau\mathbf{y},k)}\right|\le m_k \sqrt{\log N}$. 
In the last inequality, we invoke $m_k\le\frac{1}{\sigma^2_k} \le O\left(\frac{1}{k}\right)$ when  $k=o(1)$ and $m_k=\mathcal{O}(1)$ when $k\gg 1$,
and have
    ${\log N} \lesssim {K-\delta} $ and $ \delta\ge k_0$.
Since $\mathcal{F}$ concatenates two ReLU NNs of the identical architecture,  the input variable $\mathbf{z} = (\mathbf{x}, \mathbf{y}, k)$ (or $\mathbf{z} = (\mathbf{x}, k)$ in the unconditional setting) lies within the bounded domain $\|(\mathbf{x}, \mathbf{y}, k)\|_{\infty} \le \max(R, K)$.
By {\cite[Lemma~7]{chen2022nonparametric}}, the covering number of $\mathcal{F}$ is bounded as
{\small{\begin{equation*}
     \mathcal{N}\left(\varrho, \mathcal{F}, \left\|{\cdot}\right\|_{L_{\infty}\mathcal{D}}\right) \!\lesssim\! \left(\!\frac{2L^2(W\max(R,K)\!+\!2)\kappa^LW^{L+1}}{\varrho}\!\right)^{\!\!2P}\!\!.
\end{equation*}}}
Together with (\ref{equ:: lip loss}), we bound the covering number of~$\mathcal{S}(R)$. 

\subsection{Proof of Lemma~\ref{lemma:approx_prob}}\label{appendix:approx_error}
The proposed loss function can be divided as follows:
{\small{\begin{subequations}
    \begin{align}
        l_1 &\!=\!\!\! \int_{\mathbf{x}^{k}}\!\! \|{\varphi(\mathbf{x}^{k},\mathbf{y},{k})\!-\!\nabla\log P_{k}(\mathbf{x}^{k}|\mathbf{y})\|}^2_2 P_{k}(\mathbf{x}^{k}|\mathbf{y}) d\mathbf{x}^{k},\label{eq:obj_delta}\\
        &\qquad\qquad\qquad\qquad \mathbf{x}^{\delta}\sim P_{\text{data}}(\mathbf{x}^{\delta}|\mathbf{y}), \quad\forall\delta\le k\le K, \notag\\
        l_2 &\!=\!\!\! \int_{\mathbf{x}^{k}}\!\! \|{\varphi(\mathbf{x}^{k},\mathbf{y},{k})\!-\!\nabla\log P_{k}(\mathbf{x}^{k}|\mathbf{y})\|}^2_2 P_{k}(\mathbf{x}^{k}|\mathbf{y}) d\mathbf{x}^{k},\label{eq:obj_0}\\
        &\qquad\qquad\qquad\qquad\mathbf{x}^{0} = (\mathbf{x}^{\delta} - \sqrt{1 - \bar{\alpha}_\delta}\epsilon)/{\sqrt{\bar{\alpha}_\delta}}\quad\forall 0\le k\le K.\notag
    \end{align}
\end{subequations}}}
Next, we analyze $l_1$ and $l_2$ separately.

\textit{Proof of $l_1$:}
For $l_1$, each conditional distribution in the forward process satisfies {\cite[Assumption 3.1]{DM_offline_9}}. We obtain \eqref{eq:app_10}.
\begin{figure*}[ht]
\centering
{\small{\begin{align}\label{eq:app_10}
    p_t(\mathbf{x}^k|\mathbf{y})
    =&\int_{\mathbb{R}^{d}}P(\mathbf{x}_{\mathrm{tr}}|\mathbf{y})\frac{1}{\sigma_k^{d}(2\pi)^{d/2}} \exp\left(-\frac{\left\|\sqrt{\alpha_k}\mathbf{x}_{\mathrm{tr}}-\mathbf{x}^{k}\right\|^2}{2\sigma_k^2}\right) d\mathbf{x}_{\mathrm{tr}}\\
    =&\exp\left(\!-\!\frac{C_1\left\|\mathbf{x}^k\right\|^2_2}{2(\alpha_k\!+\!C_1\sigma_k^2)}\right)\int_{\mathbb{R}^{d}} \frac{f(\mathbf{x}_{\mathrm{tr}},\mathbf{y})}{(2\pi)^{d/2}\sigma_k^d}\exp\left(-\frac{\left\|\mathbf{x}_{\mathrm{tr}}-\sqrt{\alpha_k}\mathbf{x}^k/(\alpha_k\!+\!C_1\sigma_k^2)\right\|^2}{2\sigma_{k}^2/(\alpha_k\!+\!C_1\sigma_k^2)}\right)d\mathbf{x}_{\mathrm{tr}}
    \!=\!\exp\left(\!-\!\frac{C_1\left\|\mathbf{x}^k\right\|^2_2}{2(\alpha_k\!+\!C_1\sigma_k^2)}\right)f^k(\mathbf{x}^k,\mathbf{y}).\nonumber
\end{align}}}
\vspace{-0.5cm}
\end{figure*}
With $f \in \mathcal{H}^{b}(\mathbb{R}^{w_x}\times \mathbb{R}^{w_y}, B)$ and $f(\mathbf{x}_{\mathrm{tr}}, \mathbf{y}) \geq C$ in Assumption~\ref{assump:distri_true}, there are two constants $B'$ and $C'$, such that $f^k \in \mathcal{H}^{b}(\mathbb{R}^{w_x}\times \mathbb{R}^{w_y}, B')$ and $f(\mathbf{x}^k, \mathbf{y}) \geq C'$ hold. Let $C_2=C_1/(\alpha_k+C_1\sigma_k^2)$, and the conditional density function $P(\mathbf{x}^k | \mathbf{y}) = \exp(-C_2\left\|\mathbf{x}^k\right\|^2_2/2) \cdot f(\mathbf{x}^k, \mathbf{y})$, there is
{\small{\begin{align*}
    P(\mathbf{x}^k|\mathbf{y})&=\int_{\mathbb{R}^{d}}\frac{P(\mathbf{x}_{\mathrm{tr}}|\mathbf{y})}{\sigma_k^{d}(2\pi)^{d/2}} \exp\left(-\frac{\left\|\sqrt{\alpha_k}\mathbf{x}_{\mathrm{tr}}-\mathbf{x}^{k}\right\|^2}{2\sigma_k^2}\right) d\mathbf{x}_{\mathrm{tr}}.
\end{align*}}}
Therefore, {\cite[Assumption 3.1]{DM_offline_9}} holds $\forall k \ge 0$. 
Based on {\cite[Thm 3.4]{DM_offline_9}}, by replacing $K_0$ with $\delta$ in (\ref{eq:obj_delta}), we derive
\begin{align}\label{eq:l_1}
    l_1=\mathcal{O} \left( {B^2}\cdot N^{-\frac{2b}{w_x+w_y}} \cdot (\log N)^{b+1}/{\sigma_k^2}\right).
\end{align}

\textit{Proof of $l_2$:}
Based on DM, we have $\mathbf{x}^{0} = \frac{1}{\sqrt{\bar{\alpha}_\delta}}\left(\mathbf{x}^{\delta} - \sqrt{1 - \bar{\alpha}_\delta}\epsilon\right)$ with $\epsilon$ following a normal distribution: 
{\small{\begin{align}\label{eq:eps}
    \epsilon = {(\mathbf{x}^{\delta}-\sqrt{\bar{\alpha}_\delta}\mathbf{x}^{0})}/{\sqrt{1 - \bar{\alpha}_\delta}}.
\end{align}}}
With
    $P(\epsilon) = \exp\left(-{\|\epsilon\|_2^2}/{2}\right)/(2\pi)^{d/2}$, 
we have
{\small{\begin{align}
    P(\epsilon|\mathbf{x}^{0}, \mathbf{x}^{\delta}, \mathbf{y}) = \frac{1}{(2\pi)^{d/2}}\exp\left(-\frac{\|\mathbf{x}^{\delta}-\sqrt{\bar{\alpha}_\delta}\mathbf{x}^{0}\|_2^2}{2(1 - \bar{\alpha}_\delta)}\right).
\end{align}}}
With variable substitution, we have
{\small{\begin{align}\label{eq:sup4}
    P(\mathbf{x}^{0}|\mathbf{x}^{\delta}, \mathbf{y}) \!= \!\frac{1}{\sigma_\delta^d(2\pi)^{d/2}}\exp\left(\!\!-\frac{\|\mathbf{x}^{0}\!\!-\!\mathbf{x}^{\delta}/\sqrt{\bar{\alpha}_\delta}\|_2^2}{2\sigma_\delta^2}\!\right).
\end{align}}}
Therefore, $P(\mathbf{x}^{0}|\mathbf{x}^{\delta}, \mathbf{y})$ follows a distribution with mean $\mathbf{x}^{\delta}/\sqrt{\bar{\alpha}_\delta}$ and covariance $\sigma_\delta\mathbf{I}$, where $\sigma_\delta=\sqrt{(1-\bar{\alpha}_\delta)/(\bar{\alpha}_\delta)}$. Then, it readily follows that
{\small{\begin{align}
    &P(\mathbf{x}^{0}|\mathbf{y}) = \int_{\mathbb{R}^{d}}P(\mathbf{x}^{0}|\mathbf{x}^{\delta}, \mathbf{y})P(\mathbf{x}^{\delta}|\mathbf{y})d\mathbf{x}^{\delta}\\
    =&\int_{\mathbb{R}^{d}}P(\mathbf{x}^{\delta}|\mathbf{y})\frac{1}{\sigma_\delta^d(2\pi)^{d/2}}\exp\left(-\frac{\|\mathbf{x}^{0}-\mathbf{x}^{\delta}/\sqrt{\bar{\alpha}_\delta}\|_2^2}{2\sigma_\delta^2}\right)d\mathbf{x}^{\delta}.\nonumber
\end{align}}}
As a result, $P(\mathbf{x}^{0}|\mathbf{y})$ satisfies {\cite[Assumption 3.1]{DM_offline_9}} and leads to {\cite[Theorem 3.4]{DM_offline_9}}. 
Consequently, we can replace $K_0$ with $0$ and obtain the final conclusion, that is:
{\small{\begin{align}\label{eq:l_2}
    l_2=\mathcal{O} \left( {B^2}\cdot N^{-\frac{2b}{w_x+w_y}} \cdot (\log N)^{b+1}/{\sigma_k^2}\right).
\end{align}}}

\textit{By summing $l_1$ in (\ref{eq:l_1}) and $l_2$ in (\ref{eq:l_2}),}
we obtain the final NN approximation error. Lemma~\ref{lemma:approx_prob} is proved.

\subsection{Proof of Lemma~\ref{proof_Rs}}\label{appendix:proof_Rs}
{\color{black}Define $\mathcal{N}_R\coloneqq \mathcal{N}\left(\varrho, \mathcal{S}(R), \left\|{\cdot}\right\|_{L_{\infty}\mathcal{D}}\right)$ for brevity.}
Under the network configuration in Lemma~\ref{lemma:approx_prob}, we obtain the following upper bound on the logarithmic covering number:

{\small{\begin{align} 
    \log \mathcal{N}_R \lesssim& N\log^9 N \Big( \text{Poly}(\log \log N)+\log^8 N+\frac{1}{\varrho}  \nonumber\\
    &+\text{Poly}(\log \log N)\log N\log 
    R\Big)\notag\\
    \lesssim& N\log^{9} N \left( \log^8 N+\log^2 N\log R +\frac{1}{\varrho}  \right). \label{equ::logcover}
\end{align}}}
Let $\varphi^{\star}(\mathbf{x},\mathbf{y},k)=\nabla \log P(\mathbf{x}^k|\mathbf{y})$ be the true score if $\mathbf{y} \neq \varnothing$ and $\varphi^{\star}(\mathbf{x},\varnothing,k)=\nabla \log P(\mathbf{x}^k)$. Create $n$ i.i.d. ghost samples: 
$$(\mathbf{x}'_1,\mathbf{y}'_1),...,(\mathbf{x}'_n,\mathbf{y}'_n) \sim P_{\mathrm{data}}(\mathbf{x}^{\delta}, \mathbf{y}).$$
As $\mathcal{R}_{\star}(\varphi^{\star}) = 0$ and $\mathcal{R}_{\star}(\varphi)$ deviates from $\ell(\varphi)$ merely by a constant independent of $\varphi$, it can be readily bounded that
{\small{\begin{align}
    \mathcal{R}_{\star}({\varphi})&=\mathcal{R}_{\star}({\varphi})-\mathcal{R}_{\star}(\varphi^{\star})=\ell({\varphi})-\ell(\varphi^{\star})\\
    &=\frac{1}{n}\mathbb{E}_{{\mathbf{x}'_t,\mathbf{y}'_t}}\left[\sum\nolimits_{t}\!\left(\ell(\mathbf{x}'_t,\mathbf{y}'_t;{\varphi})\!-\!\ell(\mathbf{x}'_t,\mathbf{y}'_t;\varphi^{\star})  \right) \right].\notag
\end{align}}}
Define $\ell_1=\frac{1}{n}\sum_{t}\left(\ell(\mathbf{x}_{t+1},\mathbf{y}_t;{\varphi})-\ell(\mathbf{x}_{t+1},\mathbf{y}_t;\varphi^{\star})  \right)$, $\ell^{\mathrm{tr}}_1=\frac{1}{n}\sum_{t}\left(\ell^{\mathrm{tr}}(\mathbf{x}_{t+1},\mathbf{y}_t;{\varphi})-\ell^{\mathrm{tr}}(\mathbf{x}_{t+1},\mathbf{y}_t;\varphi^{\star})  \right)$ and $\ell_2=\frac{1}{n}\sum_{t}\left(\ell(\mathbf{x}'_t,\mathbf{y}'_t;{\varphi})-\ell(\mathbf{x}'_t,\mathbf{y}'_t;\varphi^{\star})  \right)$, $\ell^{\mathrm{tr}}_2=\frac{1}{n}\sum_{t}\left(\ell^{\mathrm{tr}}(\mathbf{x}'_t,\mathbf{y}'_t;{\varphi})-\ell^{\mathrm{tr}}(\mathbf{x}'_t,\mathbf{y}'_t;\varphi^{\star})  \right).$
We decompose  $\mathbb{E}_{{\mathbf{x}_{t+1},\mathbf{y}_t}}\left[  \mathcal{R}_{\star}({\varphi})\right]$ into
{\small{\begin{align}
    &\E\limits_{{\mathbf{x}_{t+1},\mathbf{y}_t}}\left[  \mathcal{R}_{\star}({\varphi})\right]=\E\limits_{{\mathbf{x}_{t+1},\mathbf{y}_t}}\left[\ell^{\mathrm{tr}}_1 -\ell_1 \right]+\E\limits_{{\mathbf{x}_{t+1},\mathbf{y}_t}}\left[\mathbb{E}_{\mathbf{x}'_t,\mathbf{y}'_t}\left[ \ell_2-\ell^{\mathrm{tr}}_2\right] \right]\notag\\
    &+\mathbb{E}_{{\mathbf{x}_{t+1},\mathbf{y}_t}}\left[\mathbb{E}_{{\mathbf{x}'_t,\mathbf{y}'_t}}\left[ \ell^{\mathrm{tr}}_2\right] -\ell^{\mathrm{tr}}_1\right]
    + \mathbb{E}_{{\mathbf{x}_{t+1},\mathbf{y}_t}}\left[\ell_1\right], \label{equ::score esti C}
\end{align}}}
with the four terms on the RHS denoted by $B_1$, $B_2$, $C$, and~$D$.

\textit{Bounding $B_1$ and $B_2$:}
By the definition of $\ell(\mathbf{x},\mathbf{y}; \varphi)$, \eqref{eq:app_11} holds $\forall\mathbf{x},\mathbf{y}$ and $\mathbf{s} \in \mathcal{F}$,
\begin{figure*}[ht]
    \centering
    {\small{
\begin{align}\label{eq:app_11}
    \ell(\mathbf{x},\!\mathbf{y};\! \varphi)\!&\le\! 2\!\!\int_{k_0}^{T}\!\!\!\frac{1}{K\!\!-\!k_0}\!\E\limits_{\tau,\mathbf{x}^k|\mathbf{x}^0\!=\!\hat{\mathbf{x}}^0}\!\left[\left\|{\varphi(\mathbf{x}^k\!\!,\tau\mathbf{y},k)}\right\|_2^2\!\!+\!\left\|{\nabla \log p_t(\mathbf{x}^k|\hat{\mathbf{x}}^0)}\right\|_2^2\right]  d k
    \!+ \!\!2\!\!\int_{\delta}^{T}\!\!\!\!\frac{1}{K\!\!-\!\delta}\!\!\E\limits_{\tau,\mathbf{x}^k|\mathbf{x}^\delta\!=\!\mathbf{x}}\!\left[\left\|{\varphi(\mathbf{x}^k\!\!,\tau\mathbf{y},k)}\right\|_2^2\!\!+\!\!\left\|{\nabla \log p_t(\mathbf{x}^k|\mathbf{x}^\delta)}\right\|_2^2\right] \! d k\notag\\
    \lesssim & \int_{k_0}^{T}\frac{1}{K-k_0}\E\limits_{\tau,\mathbf{x}^k|\mathbf{x}^0=\hat{\mathbf{x}}^0}\left[m_t^2\log N+\left\|{\nabla \log p_t(\mathbf{x}^k|\hat{\mathbf{x}}^0)}\right\|_2^2\right]  d k + \int_{\delta}^{T}\frac{1}{K-\delta}\E\limits_{\tau,\mathbf{x}^k|\mathbf{x}^\delta=\mathbf{x}}\left[m_t^2\log N+\left\|{\nabla \log p_t(\mathbf{x}^k|\mathbf{x}^\delta)}\right\|_2^2\right]  d k\notag\\
    \lesssim & \int_{k_0}^{K} M_{k}^2 d k +\int_{k_0}^{K}\frac{1}{K-k_0}\frac{1}{\sigma^2_k}d k + \int_{\delta}^{K} M_{k}^2 d k +\int_{\delta}^{K}\frac{1}{K-\delta}\frac{1}{\sigma^2_k}d k
    \lesssim \int_{k_0}^{K} M_{k}^2 d k + \int_{\delta}^{K}M_{k}^2 d k
    \lesssim \int_{k_0}^{K} M_{k}^2 d k = M,
\end{align}}}
\vspace{-0.5cm}
\end{figure*}
where we invoke $\left|{\varphi}\right|\lesssim m_k\sqrt{\log N}$  for the first inequality and $1/\sigma_k \lesssim m_k$ for the last line.
For $\forall \varphi \in \mathcal{F} $ with $\varphi$ possibly dependent on $(\mathbf{x}, \mathbf{y})$, we obtain \eqref{equ::trunc popu loss},
\begin{figure*}[ht]
    \centering
{\small{\begin{align}\label{equ::trunc popu loss}
    &\E\limits_{\mathbf{x},\mathbf{y}}\!\left[\left| \ell(\mathbf{x},\mathbf{y}; \varphi)\!-\!\ell^{\mathrm{tr}}(\mathbf{x},\mathbf{y}; \varphi) \right|\right]
    \!\le\! 2\!\!\int_{k_0}^{K}\!\!\!\frac{1}{K\!-\!k_0}\!\int_{\mathbf{y}}\!\int_{\left\|{\mathbf{x}}\right\|>R}\!\E\limits_{\tau,\mathbf{x}^k|\mathbf{x}^0=\hat{\mathbf{x}}^0}\!\left[\left\|\varphi(\mathbf{x}^k\!,\!\tau\mathbf{y}\!,\!k)\right\|_2^2\!+\!\left\|\nabla \log \zeta(\mathbf{x}^k|\mathbf{x}^0)\right\|_2^2\right]\!P(\mathbf{x}|\mathbf{y})P(\mathbf{y}) d \mathbf{x} d \mathbf{y} d k\nonumber\\
    &\qquad+ \!2\!\int_{\delta}^{K}\!\!\!\frac{1}{K\!-\!\delta}\int_{\mathbf{y}}\!\int_{\left\|{\mathbf{x}}\right\|>R}\!\!\mathbb{E}_{\tau,\mathbf{x}^k|\mathbf{x}^\delta=\mathbf{x}}\!\left[\left\|\varphi(\mathbf{x}^k,\tau\mathbf{y},k)\right\|_2^2\!+\!\left\|\nabla \log \zeta(\mathbf{x}^k|\mathbf{x}^\delta)\right\|_2^2\right]P(\mathbf{x}|\mathbf{y})P(\mathbf{y}) d \mathbf{x} d \mathbf{y} d k \nonumber\\
    \lesssim& \int_{k_0}^{K}\frac{1}{\log N}\int_{\left\|{\mathbf{x}}\right\|>R}\mathbb{E}_{\tau,\mathbf{x}^k|\mathbf{x}^0=\hat{\mathbf{x}}^0}\left[m_k^2\log N+\left\|{\nabla \log \zeta(\mathbf{x}^k|\mathbf{x}^0)}\right\|_2^2\right]\exp(-C_1 \left\|{\mathbf{x}}\right\|_2^2/2) d \mathbf{x} d t\nonumber\\
    &+ \int_{\delta}^{K}\frac{1}{\log N}\int_{\left\|{\mathbf{x}}\right\|>R}\mathbb{E}_{\tau,\mathbf{x}^k|\mathbf{x}^\delta=\mathbf{x}}\left[m_k^2\log N+\left\|{\nabla \log \zeta(\mathbf{x}^k|\mathbf{x}^0)}\right\|_2^2\right]\exp(-C_1' \left\|{\mathbf{x}}\right\|_2^2/2) d \mathbf{x} d t\\
    \lesssim & \exp\!\left(\!-C_1R^2\right)\!R\!\!\int_{k_0}^{K}\!\! m_{k}^2 d k \!+\!\exp\!\left(-C_1R^2\right)\!\!\int_{k_0}^{K}\!\!\frac{1}{\sigma^2_k}d k\!+\!\exp\!\left(-C_1'R^2\right)\!R\!\int_{\delta}^{K}\!\! m_{k}^2 d k \!+\!\exp\!\left(-C_1'R^2\right)\!\!\int_{\delta}^{K}\!\!\frac{1}{\sigma^2_k}d k
    \!\lesssim \!\exp\!\left(-C_1R^2\right)\!RM\nonumber
\end{align}}}
\vspace{-0.5cm}
\end{figure*}
where the middle line follows from the sub-Gaussian property of $P(\mathbf{x}|\mathbf{y})$ with Assumption~\ref{assump:distri_true}, and the third inequality invokes $\mathbb{E}_{\mathbf{x}^k|\mathbf{x}^0=\mathbf{x}}\left[\left\|{\nabla \log P(\mathbf{x}^k|\mathbf{x}^0)}\right\|_2^2 \right]=1/\sigma_k^2$ and $\mathbb{E}_{\mathbf{x}^k|\mathbf{x}^\delta=\mathbf{x}}\left[\left\|{\nabla \log P(\mathbf{x}^k|\mathbf{x}^\delta)}\right\|_2^2 \right]=1/\sigma_k^2$. Thus, both  $B_1$ and $B_2$ are bounded by $\mathcal{O}\left( \exp\left(-C_1R^2\right)RM\right)$.

\textit{Bounding $C$:}
For conciseness, we take $\mathbf{z}=(\mathbf{x},\mathbf{y})$, and write $\ell^{\mathrm{tr}}(\mathbf{x},\mathbf{y};{\varphi})$ as $\hat{\ell}(\mathbf{z})$ and $\ell^{\mathrm{tr}}(\mathbf{x},\mathbf{y};\varphi^{\star})$ as $\ell^{\star}(\mathbf{z})$.
For $\varrho>0$ to be specified subsequently, consider $\mathcal{J}=\{\ell_1,\ldots,\ell_{\mathcal{N}_R}\}$ as a minimal $\varrho$-covering of $\mathcal{S}(R)$ in the $L^{\infty}$ metric over $\mathcal{D}$.
Let $J$ denote a random index such that $|\hat{\ell}-\ell_J|_{\infty}\le\varrho$. Set $u_j=\max \left\{A,\sqrt{\mathbb{E}_{\mathbf{z}}\left[ \ell_j(\mathbf{z})-\ell^{\star}(\mathbf{z})\right]}\right\}$ with $\mathbf{z}\sim P_{\mathbf{x},\mathbf{y}}$ independent of ${(\mathbf{z}_t,\mathbf{z}'_t)}_{t\in\mathcal{B}}$. Moreover, we define
{\small{\begin{align*}
    E=\max_{1\le j\le \mathcal{N}_R}\left| \sum\nolimits_{t} {[\left(\ell_j(\mathbf{z}_t)-\ell^{\star}(\mathbf{z}_t)\right) -\left(\ell_j(\mathbf{z}'_t)-\ell^{\star}(\mathbf{z}'_t)\right)]}/{u_j}\right|.
\end{align*}}}
We can bound $C$ as
{\small{\begin{align}\label{equ:: excess bound}
    \left|{C}\right|&\!=\!\left|\E\limits_{{\mathbf{z}_{t}}} \left[\frac{1}{n}\sum_{t} \left(\hat{\ell}(\mathbf{z}_t)-\ell^{\star}(\mathbf{z}_t)\right)  -\E\limits_{{\mathbf{z}'_{t}}} \left[\sum_{t} \left(\hat{\ell}(\mathbf{z}'_t)-\ell^{\star}(\mathbf{z}'_t)\right) \right]\right]\right|\nonumber\\
    &\le \left| \frac{1}{n} \E\limits_{{\mathbf{z}_t,\mathbf{z}'_t}} \left[ \sum_{t\in\mathcal{B}} \left(\left(\ell_J(\mathbf{z}_t)\!-\!\ell^{\star}(\mathbf{z}_t)\right) \!-\!\left(\ell_J(\mathbf{z}'_t)\!-\!\ell^{\star}(\mathbf{z}'_t)\right)\right)\right]\right| \!+\!2\varrho \nonumber\\
    &\le 
   \frac{1}{n} \E\limits_{{\mathbf{z}_t,\mathbf{z}'_t}} \left[ u_JE \right] \!+\!2\varrho \!\le\! \frac{1}{2} \E\limits_{{\mathbf{z}_t,\mathbf{z}'_t}} \left[ u_J^2 \right] \!+\!\frac{1}{2n^2}  \E\limits_{{\mathbf{z}_t,\mathbf{z}'_t}} \left[ E^2 \right] \!+\!2\varrho.\!\!
\end{align}}}
Let $h_j(\mathbf{z})=\ell_j(\mathbf{z})-\ell^{\star}(\mathbf{z})$ and $\hat{h}(\mathbf{z})=\hat{\ell}(\mathbf{z})-\ell^{\star}(\mathbf{z})$; define the truncated population loss as $\mathcal{R}_{\star}^{\mathrm{tr}}({\varphi})=\mathbb{E}_{\mathbf{z}}\left[ \hat{h}\right]$ and the truncated empirical loss as $\hat{\mathcal{R}}_{\star}^{\text{tr}}({\varphi})=\frac{1}{n}\sum_{t} \hat{h}(\mathbf{z}_t) $. From
(\ref{equ::trunc popu loss}), it follows that $\left|{\mathcal{R}_{\star}^{\mathrm{tr}}({\varphi})-\mathcal{R}_{\star}({\varphi})}\right|\lesssim \exp\left(-C_1R^2\right)RM $. 

From the definition of $u_J$, it follows that
{\small{\begin{align}
    \mathbb{E}_{{\mathbf{z}_t,\mathbf{z}'_t}} \left[ u_J^2 \right]&\le A^2+ \mathbb{E}_{{\mathbf{z}_t,\mathbf{z}'_t}}\left[\mathbb{E}_{\mathbf{z}}\left[h_J(\mathbf{z}) \right] \right] \nonumber\\
    &\le A^2+ \mathbb{E}_{{\mathbf{z}_t,\mathbf{z}'_t}}\left[\mathbb{E}_{\mathbf{z}}\left[ \hat{h}(\mathbf{z})\right]\right] +2\varrho  \nonumber\\
    &=A^2+\mathbb{E}_{{\mathbf{z}_t,\mathbf{z}'_t}}\left[\mathcal{R}_{\star}^{\mathrm{tr}}({\varphi})\right] +2\varrho. \label{equ:: bound uj}
\end{align}}}
Denote $g_j = \sum_{t} {[h_j(\mathbf{z}_t)-h_j(\mathbf{z}'_t)]}/{u_j}$. Notice that $\mathbb{E}_{\mathbf{z}_t,\mathbf{z}'_t} \left[ {[h_j(\mathbf{z}_t)-h_j(\mathbf{z}'_t)]}/{u_j} \right]=0,\forall t,j$. With the independence of $\left\{g_j\right\}_{j=1}^{\mathcal{N}_R}$, we have
{\small{\begin{align*}
    &\mathbb{E}_{{\mathbf{z}_t,\mathbf{z}'_t}} \left[\sum\nolimits_{t} \left[{(h_j(\mathbf{z}_t)-h_j(\mathbf{z}'_t))}/{u_j}\right]^2\right]\\
    &\le  \sum\nolimits_{t} \mathbb{E}_{\mathbf{z}_t,\mathbf{z}'_t} \left[\left({h_j(\mathbf{z}_t)}/{u_j}\right)^2+\left({h_j(\mathbf{z}'_t)}/{u_j}\right)^2\right]\\
    &\le M \sum\nolimits_{t} \mathbb{E}_{\mathbf{z}_t,\mathbf{z}'_t} \left[{h_j(\mathbf{z}_t)}/{u^2_j}+{h_j(\mathbf{z}'_t)}/{u^2_j}\right]\le 2nM.
\end{align*}}}
Since $\left|{(h_j(\mathbf{z}_t)-h_j(\mathbf{z}'_t))}/{u_j}\right| \le {M}/{A}$ and $g_j$ has zero mean, applying Bernstein’s inequality yields that, $\forall j$, there exists
{\small{\begin{align*}
    P\left[g_j^2\ge h\right]&=2P\left[\sum\nolimits_{t}{(h_j(\mathbf{z}_t)-h_j(\mathbf{z}'_t))}/{u_j} \ge \sqrt{h}\right]\\
    &\le 2\exp ( -{(h/2)}/{(M(2n+{\sqrt{h}}/{3A}))} ).
\end{align*}}}
{\color{black}Thus, it follows that
{\small{\begin{align*}
    P\left[E^2\!\ge\! h\right]\!\le\!\sum\nolimits_{j=1}^{\mathcal{N}_R} P\left[g_j^2\ge h\right]\!\le\!  2\mathcal{N}_R\!\exp \left( \!\!-\frac{h/2}{M(2n\!+\!\frac{\sqrt{h}}{3A})} \!\right)\!.
\end{align*}}}
Thus, we have: $\forall h_0>0$, 
{\small{\begin{align*}
   &\mathbb{E}_{{\mathbf{z}_t,\mathbf{z}'_t}} \left[ E^2 \right]
   =\int_{0}^{h_0} \!\!\!P\!\left[E^2\ge h\right]d h 
  \!+\!\!\int_{h_0}^\infty \!\!\!P\!\left[E^2\ge h\right]d h\\ &\le h_0+\int_{h_0}^\infty 2\mathcal{N}_R\exp \left( -{(h/2)}/{\left(M\left(2n+\frac{\sqrt{h}}{3A}\right)\right)} \right) d h \\
   &\le h_0 +2\mathcal{N}_R \int_{h_0}^\infty \left[\exp\left(-\frac{h}{8Mn}\right)+\exp\left(-\frac{3A\sqrt{h}}{4M}\right)\right]d h\\
   &\le h_0 +2\mathcal{N}_R[8Mn\exp\left(-{h_0}/{(8Mn)}\right)\\
   &+\left[{8M\sqrt{h_0}}/{(3A)}+{32M}/{(9A^2)}\right]\exp(-{3A\sqrt{h_0}}/{(4M)})].
\end{align*}}}}
Taking $A=\sqrt{h_0}/6n$ and $h_0=8Mn\log\mathcal{N}_R$, we have 
{\small{\begin{align}
    \E\limits_{{\mathbf{z}_t,\mathbf{z}'_t}} [ E^2 ] &\!\le\! 8Mn\log \mathcal{N}_R \!+\!48Mn\!+\!\frac{32}{\log\mathcal{N}_R} \!\lesssim\! Mn\log \mathcal{N}_R.\label{equ:: bound G}
\end{align}}}
By substituting (\ref{equ:: bound uj}) and  (\ref{equ:: bound G}) into (\ref{equ:: excess bound}), it follows that
{\small{\begin{align*}
    &\left|\mathbb{E}_{{\mathbf{z}_t}}\left[ \hat{\mathcal{R}}_{\star}^{\text{tr}}({\varphi}) - \mathcal{R}_{\star}^{\mathrm{tr}}({\varphi})  \right] \right|\\
    &\lesssim \frac{1}{2}\left(A^2+\mathbb{E}_{{\mathbf{z}_t,\mathbf{z}'_t}}\left[\mathcal{R}_{\star}^{\mathrm{tr}}({\varphi})\right]+2\varrho\right)+ \frac{M}{n}\log \mathcal{N}_R +2\varrho\\
    &=\frac{1}{2}\mathbb{E}_{{\mathbf{z}_t}}\left[\mathcal{R}_{\star}^{\mathrm{tr}}({\varphi})\right]+\frac{M}{n}\log \mathcal{N}_R+\frac{7}{2}\varrho.
\end{align*}}}
Thus, we have
{\small{\begin{equation*}
    \mathbb{E}_{{\mathbf{z}_t}}\left[ \mathcal{R}_{\star}^{\mathrm{tr}}({\varphi})  \right] \lesssim 2\mathbb{E}_{{\mathbf{z}_t}}\left[ \hat{\mathcal{R}}_{\star}^{\text{tr}}({\varphi}) \right] +\frac{M}{n}\log \mathcal{N}_R+7\delta,
\end{equation*}}}
which means that 
{\small{\begin{align*}
      C&\lesssim \mathbb{E}_{{\mathbf{x}_{t+1},\mathbf{y}_t}}\left[\ell^{\mathrm{tr}}_1 \right]+\frac{M}{n}\log \mathcal{N}_R+7\delta\\
      &\le \mathbb{E}_{{\mathbf{x}_{t+1},\mathbf{y}_t}}\left[\ell_1 \right]+\left|{A_1}\right|+\frac{M}{n}\log \mathcal{N}_R+7\delta\\
      &\lesssim D+ \exp\left(-C_1R^2\right)RM+({M}\log \mathcal{N}_R)/{n}+7\delta.
\end{align*}}}

\textit{Bounding $D$:}
Let $\hat{\mathcal{R}}_{\star}(\varphi)=\hat{\ell}(\varphi)-\hat{\ell}(\varphi^{\star})$, $\forall\varphi$. Next, $\ell_1=\hat{\mathcal{R}}_{\star}(\hat{\mathbf{s}})$. As $\hat{\mathbf{s}}$ is the minimizer of $\hat{\ell}$, it follows $${\small{\hat{\mathcal{R}}_{\star}({\varphi})=\hat{\ell}(\hat{\mathbf{s}})-\hat{\ell}(\varphi^{\star})\le \hat{\ell}(\varphi)-\hat{\ell}(\varphi^{\star})=\hat{\mathcal{R}}_{\star}(\varphi). }}$$ Thus, we have
    $D=\mathbb{E}_{{\mathbf{z}_t}}\left[ \hat{\mathcal{R}}_{\star}({\varphi})  \right]\le \mathbb{E}_{{\mathbf{z}_t}}\left[ \hat{\mathcal{R}}_{\star}(\varphi)\right]=\mathcal{R}_{\star}(\varphi)$. 
Take the minimum w.r.t. $\varphi \in \mathcal{F}$. $D\le \min_{\mathbf{s} \in \mathcal{F}} \mathcal{R}_{\star}(\varphi)$. 

\textit{Balancing error:}
Combining the bounds for $B_1$, $B_2$, $C$, and $D$ and plugging the log covering number (\ref{equ::logcover}), we derive \eqref{equ::balance estimation error},
\begin{figure*}[ht]
    \centering
{\small{
\begin{subequations}
\begin{align}\label{equ::balance estimation error}
    \mathbb{E}_{{\mathbf{z}_t}}\left[ \mathcal{R}_{\star}({\varphi})  \right] \le& 2\min_{\mathbf{s} \in \mathcal{F}}\int_{k_0}^{K}\frac{1}{K-k_0}\mathbb{E}_{\tau,\mathbf{x}^k,\mathbf{y}}\left\|{\varphi(\mathbf{x}^k,\tau\mathbf{y},k)-\nabla \log P(\mathbf{x}^k|\tau \mathbf{y})}\right\|_2^2d k\nonumber\\
    &+2\min_{\mathbf{s} \in \mathcal{F}}\int_{\delta}^{K}\frac{1}{K-\delta}\mathbb{E}_{\tau,\mathbf{x}^k,\mathbf{y}}\left\|{\varphi(\mathbf{x}^k,\tau\mathbf{y},k)-\nabla \log P(\mathbf{x}^k|\tau \mathbf{y})}\right\|_2^2d k\nonumber\\
    & +\mathcal{O}\left(\frac{M}{n}N^{w_x+w_y}\log^{9} N \left( \log^8 N+\log^2 N\log R +\frac{1}{\varrho}  \right)\right)+\mathcal{O}\left( \exp\left(-C_1R^2\right)RM\right) +7\varrho\\
    \le& 2\min_{\mathbf{s} \in \mathcal{F}}\int_{k_0}^{K}\frac{1}{K-k_0}\mathbb{E}_{\tau,\mathbf{y}}\left[{\mathbb{E}_{\mathbf{x}^k}}\left\|{\varphi(\mathbf{x}^k,\tau\mathbf{y},k)-\nabla \log P(\mathbf{x}^k|\tau\mathbf{y})}\right\|_2^2\right]d k \\
    &+2\min_{\mathbf{s} \in \mathcal{F}}\int_{\delta}^{K}\frac{1}{K\!-\!\delta}\mathbb{E}_{\tau,\mathbf{y}}\left[\mathbb{E}_{\mathbf{x}^k}\left\|{\varphi(\mathbf{x}^k,\tau\mathbf{y},k)\!-\!\nabla \log P(\mathbf{x}^k|\tau\mathbf{y})}\right\|_2^2\right]d k\!+\!\mathcal{O}\left(\frac{M}{n}N\log^{17} N \right)\!+\!\mathcal{O}\left(N^{-\frac{2b}{w_x+w_y}}\right)\label{eq:final_1}
\end{align}
\end{subequations}}}
\vspace{-0.5cm}
\end{figure*}
{\color{black}where \eqref{eq:final_1} comes from Assumption~\ref{assump:distri_true}, $R=\sqrt{\frac{(C_\sigma+2b)\log N }{C_1(w_x+w_y)}}$ and $\varrho=N^{-2b/(w_x+w_y)}$, and
the inequality $M\lesssim\frac{1}{\delta}\le{\frac{1}{k_0}}=N^{C_\sigma}$.}
For any $k>0$, recall that the score function approximator $\zeta(\cdot, \cdot, k)$ obeys
{\small{\begin{align*}
&\mathbb{E}_{\tau,\mathbf{x}^k,\mathbf{y}}\left\|{\varphi(\mathbf{x}^k,\tau\mathbf{y},k)-\nabla \log P(\mathbf{x}^k|\tau\mathbf{y})}\right\|_2^2\\
=&\frac{1}{2}\int_{\mathbb{R}^{d}} \left\|{\varphi(\mathbf{x},\varnothing,k)-\nabla\log P(\mathbf{x})}\right\|^2 P(\mathbf{x})d\mathbf{x}\\
&+\frac{1}{2}\mathbb{E}_{\mathbf{y}}\left[\int_{\mathbb{R}^{d}} \left\|{\varphi(\mathbf{x},\mathbf{y},k)-\nabla\log P(\mathbf{x}|\mathbf{y})}\right\|^2 P(\mathbf{x}|\mathbf{y})d\mathbf{x}\right].
\end{align*}}}
Under Assumption~\ref{assump:distri_true}, $M=\mathcal{O}({1}/{k_0})$. Setting $N=n^{(w_x+w_y)/(w_x+w_y+b)}$ and considering Lemma~\ref{lemma:approx_prob},
it follows that 
{\small{\begin{align*}
    \mathbb{E}_{{\mathbf{z}_t}}\left[ \mathcal{R}({\varphi})  \right]&\le 2\mathbb{E}_{{\mathbf{z}_t}}\left[ \mathcal{R}_{\star}({\varphi})  \right]\lesssim \frac{1}{t_0}n^{-\frac{b}{w_x+w_y+b}}\log^{\max(17,d+b/2+1)} n.
\end{align*}}}
Similarly, under Assumption~\ref{assump:distri_true}, $M=\mathcal{O}(\log {k_0})$. Setting $N=n^{(w_x+w_y)/(w_x+w_y+2b)}$ and considering Lemma~\ref{lemma:approx_prob}, the conditional score error is bounded by
{\small{\begin{equation*}
    \mathbb{E}_{{\mathbf{z}_t}}\left[ \mathcal{R}({\varphi})  \right]\lesssim\frac{1}{t_0} n^{-\frac{2b}{w_x+w_y+2b}}\log^{\max(17,(b+1)/2)} n.
\end{equation*}}}




    

\bibliographystyle{IEEEtran}
\bibliography{ref.bib}

@article{kohler2023rate,
  title={On the rate of convergence of a deep recurrent neural network estimate in a regression problem with dependent data},
  author={Kohler, Michael and Krzy{\.z}ak, Adam},
  journal={Bernoulli},
  volume={29},
  number={2},
  pages={1663--1685},
  year={2023},
  publisher={Bernoulli Society for Mathematical Statistics and Probability}
}

@inproceedings{
chen2024accelerating,
title={Accelerating Diffusion Models with Parallel Sampling: Inference at Sub-Linear Time Complexity},
author={Haoxuan Chen and Yinuo Ren and Lexing Ying and Grant M. Rotskoff},
booktitle={Proc. Adv. Neural Inf. Process. Syst.},
year={2024}
}

@article{RL_2,
  title={Mastering atari, go, chess and shogi by planning with a learned model},
  author={Schrittwieser, Julian and others},
  journal={Nature},
  volume={588},
  number={7839},
  pages={604--609},
  year={2020},
  publisher={Nature Publishing Group UK London}
}

@article{Repre,
  title={State representation learning for control: An overview},
  author={Lesort, Timoth{\'e}e and others},
  journal={Neural Networks},
  volume={108},
  pages={379--392},
  year={2018},
  publisher={Elsevier}
}

@inproceedings{ASR_base,
  title={Action-sufficient state representation learning for control with structural constraints},
  author={Huang, Biwei and others},
  booktitle={Proc. Int. Conf. Mach. Learn.},
  pages={9260--9279},
  year={2022}
}

@article{Kalman,
  title={Active classification for {POMDPs}: A {Kalman}-like state estimator},
  author={Zois, Daphney-Stavroula and Levorato, Marco and Mitra, Urbashi},
  journal={IEEE Trans. Signal Processing},
  volume={62},
  number={23},
  pages={6209--6224},
  year={Oct. 2014},
  publisher={IEEE}
}

@inproceedings{bisimulation,
  title={Learning Invariant Representations for Reinforcement Learning without Reconstruction},
  author={Zhang, Amy and McAllister, Rowan Thomas and Calandra, Roberto and Gal, Yarin and Levine, Sergey},
  booktitle={Proc. Int. Conf. Learn. Represent.},
  year={2020}
}

@inproceedings{ODE_GRU,
  title={{ODE}-based Recurrent Model-free Reinforcement Learning for {POMDPs}},
  author={Zhao, Xuanle and others},
  booktitle={Proc. Adv. Neural Inf. Process. Syst.},
  volume={36},
  year={2024}
}

@inproceedings{DVRL,
  title={Deep variational reinforcement learning for {POMDPs}},
  author={Igl, Maximilian and others},
  booktitle={Proc. Int. Conf. Mach. Learn.},
  pages={2117--2126},
  year={2018}
}

@inproceedings{world,
  title={Recurrent world models facilitate policy evolution},
  author={Ha, David and Schmidhuber, J{\"u}rgen},
  booktitle={Proc. Adv. Neural Inf. Process. Syst.},
  volume={31},
  year={2018}
}

@article{CSR_base,
  title={Learning causal state representations of partially observable environments},
  author={Zhang, Amy and others},
  journal={arXiv preprint arXiv:1906.10437},
  year={2019}
}

@article{bharucha1976fixed,
  title={Fixed point theorems in probabilistic analysis},
  author={Bharucha-Reid, Albert Turner},
  journal={Bulletin of the American Mathematical Society},
  volume={82},
  number={5},
  pages={641--657},
  year={1976}
}

@article{Intro_other1,
  title={Learning Interpretable Policies in Hindsight-Observable {POMDPs} through Partially Supervised Reinforcement Learning},
  author={Lanier, Michael and others},
  journal={arXiv preprint arXiv:2402.09290},
  year={2024}
}

@inproceedings{Intro_other2,
  title={Lower bounds for learning in revealing {POMDPs}},
  author={Chen, Fan and others},
  booktitle={Proc. Int. Conf. Mach. Learn.},
  pages={5104--5161},
  year={2023}
}

@inproceedings{DM_2,
  title={Denoising diffusion probabilistic models},
  author={Ho, Jonathan and Jain, Ajay and Abbeel, Pieter},
  booktitle={Proc. Adv. Neural Inf. Process. Syst.},
  volume={33},
  pages={6840--6851},
  year={2020}
}

@inproceedings{DM_offline_1,
  title={Is Conditional Generative Modeling all you need for Decision Making?},
  author={Ajay, Anurag and others},
  booktitle={Proc. Int. Conf. Learn. Represent.},
  year={2022}
}

@inproceedings{DM_offline_2,
  title={Planning with Diffusion for Flexible Behavior Synthesis},
  author={Janner, Michael and others},
  booktitle={Proc. Int. Conf. Mach. Learn.},
  pages={9902--9915},
  year={2022}
}

@inproceedings{DM_offline_3,
  title={{DMBP}: {Diffusion} model based predictor for robust offline reinforcement learning against state observation perturbations},
  author={Zhihe, YANG and Xu, Yunjian},
  booktitle={Proc. Int. Conf. Learn. Represent.},
  year={2023}
}

@article{canonne2022short,
  title={A short note on an inequality between {KL} and {TV}},
  author={Canonne, Cl{\'e}ment L},
  journal={arXiv preprint arXiv:2202.07198},
  year={2022}
}

@inproceedings{chen2023sampling,
  title={Sampling is as easy as learning the score: theory for diffusion models with minimal data assumptions},
  author={Chen, Sitan and Chewi, Sinho and Li, Jerry and Li, Yuanzhi and Salim, Adil and Zhang, Anru R},
  booktitle={Proc. Int. Conf. Learn. Represent.},
  year={2023}
}

@inproceedings{fan2020theoretical,
  title={A theoretical analysis of deep {Q}-learning},
  author={Fan, Jianqing and Wang, Zhaoran and Xie, Yuchen and Yang, Zhuoran},
  booktitle={Learn. dynamics Control},
  pages={486--489},
  year={2020}
}

@inproceedings{nichol2021improved,
  title={Improved denoising diffusion probabilistic models},
  author={Nichol, Alexander Quinn and Dhariwal, Prafulla},
  booktitle={Proc. Int. Conf. Mach. Learn.},
  pages={8162--8171},
  year={2021}
}

@article{vincent2011connection,
  title={A connection between score matching and denoising autoencoders},
  author={Vincent, Pascal},
  journal={Neural Computation},
  volume={23},
  number={7},
  pages={1661--1674},
  year={2011},
  publisher={MIT Press}
}

@article{chen2022nonparametric,
  title={Nonparametric regression on low-dimensional manifolds using deep {ReLU} networks: Function approximation and statistical recovery},
  author={Chen, Minshuo and others},
  journal={IMA Inf. Inf.},
  volume={11},
  number={4},
  pages={1203--1253},
  year={Dec. 2022},
  publisher={Oxford University Press}
}

@article{DM_offline_9,
  title={Unveil Conditional Diffusion Models with Classifier-free Guidance: A Sharp Statistical Theory},
  author={Fu, Hengyu and Yang, Zhuoran and Wang, Mengdi and Chen, Minshuo},
  journal={arXiv preprint arXiv:2403.11968},
  year={2024}
}

@article{DM_online_1,
  title={Policy representation via diffusion probability model for reinforcement learning},
  author={Yang, Long and Huang, Zhixiong and Lei, Fenghao and Zhong, Yucun and Yang, Yiming and Fang, Cong and Wen, Shiting and Zhou, Binbin and Lin, Zhouchen},
  journal={arXiv preprint arXiv:2305.13122},
  year={2023}
}

@inproceedings{yan2024diffusion,
  title={Diffusion models without attention},
  author={Yan, Jing Nathan and others},
  booktitle={Proc. IEEE Conf. Comput. Vis. Pattern Recognit.},
  pages={8239--8249},
  year={2024}
}

@inproceedings{CSR_16,
  title={Invariant causal imitation learning for generalizable policies},
  author={Bica, Ioana and Jarrett, Daniel and van der Schaar, Mihaela},
  booktitle={Proc. Adv. Neural Inf. Process. Syst.},
  volume={34},
  pages={3952--3964},
  year={2021}
}

@inproceedings{CSR_15,
  title={Ensemble bayesian decision making with redundant deep perceptual control policies},
  author={Lee, Keuntaek and Wang, Ziyi and Vlahov, Bogdan and Brar, Harleen and Theodorou, Evangelos A},
  booktitle={Proc. Int. Conf. Mach. Learn. Appl.},
  pages={831--837},
  year={2019}
}

@inproceedings{CSR_6,
  title={PlanT: Explainable Planning Transformers via Object-Level Representations},
  author={Renz, Katrin and others},
  booktitle={Proc. Conf. Robot Learn.},
  pages={459--470},
  year={2023}
}

@inproceedings{CSR_32,
  title={Time-contrastive networks: Self-supervised learning from video},
  author={Sermanet, Pierre and Lynch, Corey and Chebotar, Yevgen and Hsu, Jasmine and Jang, Eric and Schaal, Stefan and Levine, Sergey and Brain, Google},
  booktitle={Proc. Int. Conf. Robot. Automat.},
  pages={1134--1141},
  year={2018}
}

@inproceedings{CSR_17,
  title={Regularizing reinforcement learning with state abstraction},
  author={Akrour, Riad and Veiga, Filipe and Peters, Jan and Neumann, Gerhard},
  booktitle={Proc. IEEE/RSJ Int. Conf. Intell. Robots Syst.},
  pages={534--539},
  year={2018}
}

@inproceedings{standard_POMDP,
  title={Recurrent Model-Free RL Can Be a Strong Baseline for Many {POMDPs}},
  author={Ni, Tianwei and Eysenbach, Benjamin and Salakhutdinov, Ruslan},
  booktitle={Proc. Int. Conf. Mach. Learn.},
  pages={16691--16723},
  year={2022}
}

@article{karimi2024diffusion,
  title={Diffusion-based causal representation learning},
  author={Karimi Mamaghan, Amir Mohammad and Dittadi, Andrea and others},
  journal={Entropy},
  volume={26},
  number={7},
  pages={556},
  year={2024},
  publisher={MDPI}
}

@incollection{komanduri2024causal,
  title={Causal diffusion autoencoders: Toward counterfactual generation via diffusion probabilistic models},
  author={Komanduri, Aneesh and Zhao, Chen and Chen, Feng and others},
  booktitle={Proc. Eur. Conf. Artif. Intell.},
  pages={2516--2523},
  year={2024},
  publisher={IOS Press}
}

@article{han2024dynamical,
  title={Dynamical-VAE-based Hindsight to Learn the Causal Dynamics of Factored-{POMDPs}},
  author={Han, Chao and Basu, Debabrota and Mangan, Michael and Vasilaki, Eleni and Gilra, Aditya},
  journal={arXiv preprint arXiv:2411.07832},
  year={2024}
}

@inproceedings{cannizzaro2023car,
  title={Car-despot: Causally-informed online pomdp planning for robots in confounded environments},
  author={Cannizzaro, Ricardo and Kunze, Lars},
  booktitle={Proc. IEEE/RSJ Int. Conf. Intell. Robots Syst.},
  pages={2018--2025},
  year={2023}
}

@book{villani2008optimal,
  title={{Optimal Transport: Old and New}},
  author={Villani, C{\'e}dric},
  volume={338},
  year={2008},
  publisher={Springer Science \& Business Media}
}

@article{olkin1982distance,
  title={The distance between two random vectors with given dispersion matrices},
  author={Olkin, Ingram and others},
  journal={Linear Algebra Appl.},
  volume={48},
  pages={257--263},
  year={1982},
  publisher={Elsevier}
}

@article{santambrogio2015optimal,
  title={Optimal transport for applied mathematicians},
  author={Santambrogio, Filippo},
  journal={Birk{\"a}user, NY},
  volume={55},
  number={58-63},
  pages={94},
  year={2015},
  publisher={Springer}
}

@inproceedings{castro2020scalable,
  title={Scalable methods for computing state similarity in deterministic {M}arkov Decision Processes},
  author={Castro, Pablo Samuel},
  booktitle={Proc. AAAI Conf. Artif. Intell.},
  volume={34},
  pages={10069--10076},
  year={2020}
}

@article{Ferns2011Bisimulation,
author = {Ferns, Norm and Panangaden, Prakash and Precup, Doina},
title = {Bisimulation Metrics for Continuous {M}arkov Decision Processes},
year = {2011},
issue_date = {November 2011},
publisher = {Society for Industrial and Applied Mathematics},
address = {USA},
volume = {40},
number = {6},
issn = {0097-5397},
journal = {SIAM J. Comput.},
month = Dec,
pages = {1662–1714},
numpages = {53}
}

@inproceedings{clement2008elementary,
  title={An elementary proof of the triangle inequality for the {W}asserstein metric},
  author={Clement, Philippe and Desch, Wolfgang},
  booktitle={Proc. Amer. Math. Soc.},
  volume={136},
  number={1},
  pages={333--339},
  year={2008}
}

@article{Bisimulation_work,
  title={Equivalence notions and model minimization in {M}arkov decision processes},
  author={Givan, Robert and Dean, Thomas and Greig, Matthew},
  journal={Artificial Intelligence},
  volume={147},
  number={1-2},
  pages={163--223},
  year={2003},
  publisher={Elsevier}
}

@InProceedings{estruch22a,
  title = 	 {Bisimulation Makes Analogies in Goal-Conditioned Reinforcement Learning},
  author =       {Hansen-Estruch, Philippe and others},
  booktitle = 	 {Proc. Int. Conf. Mach. Learn.},
  pages = 	 {8407--8426},
  year = 	 {2022},
  volume = 	 {162},
  publisher =    {PMLR}
}

@inproceedings{Bisimulation_def,
  title={Equivalence relations in fully and partially observable {M}arkov decision processes},
  author={Castro, Pablo Samuel and Panangaden, Prakash and Precup, Doina},
  booktitle={Proc. Int. Joint Conf. Artif. Intell.},
  pages={1653--1658},
  year={2009}
}

\end{document}